\definecolor{blue-violet}{rgb}{0.54, 0.17, 0.89}
\definecolor{darkgreen}{rgb}{0.0, 0.2, 0.13}
\definecolor{dartmouthgreen}{rgb}{0.05, 0.5, 0.06}
\newtheorem{theorem}{Theorem}
\newcommand{\rev}[1]{\textcolor{black}{#1}}
\newcommand\mydots{\hbox to 1em{.\hss.\hss.}}
\newcommand{\algName}{\textsc{SEELS}\xspace}
\newcommand{\algname}{\textsc{SEELS}\xspace}
\newcommand{\seels}{\textsc{SEELS}\xspace}
\newcommand{\ccocp}{\textsc{CC-OCP}\xspace}
\newcommand{\ReachOCP}{\textsc{Reach-OCP}\xspace}
\newcommand{\ExploreOCP}{\textsc{Explore-OCP}\xspace}
\newcommand{\EOCP}{(\textbf{EOCP})\xspace}
\newcommand{\alpaca}{\textsc{ALPaCA}\xspace}
\newcommand{\randup}{\textsc{RandUP}\xspace}
\newcommand{\x}{\bm{x}} %
\newcommand{\ac}{\bm{u}} %
\newcommand{\z}{\bm{z}} %
\newcommand{\bmu}{\boldsymbol\mu} %
\newcommand{\bSigma}{\boldsymbol\Sigma}
\newcommand{\bQ}{\bm{Q}}
\newcommand{\bR}{\bm{R}}
\newcommand{\ep}{\bm{\epsilon}} %
\newcommand{\feat}{\bm{\phi}} %
\newcommand{\param}{\bm{\xi}} %
\newcommand{\bLambda}{\boldsymbol{\Lambda}} 
\newcommand{\W}{\bm{W}} %
\newcommand{\cost}{\ell}
\newcommand{\costinfo}{\ell_{\textrm{info}}}
\newcommand{\f}{\bm{f}} %
\newcommand{\h}{\bm{h}} %
\newcommand{\g}{\bm{g}} %
\newcommand{\xdim}{n}
\newcommand{\udim}{m}
\newcommand{\phidim}{d}
\newcommand{\N}{\mathcal{N}}
\newcommand{\Linv}{\boldsymbol{\Lambda}^{-1}} %
\renewcommand{\k}{\bm{\theta}} %
\newcommand{\kstar}{\bm{\theta}^*} %
\newcommand{\kstari}{\bm{\theta}_{i}^{*}} %
\newcommand{\kbar}{\bar{\bm{\theta}}} %
\newcommand{\confset}{\mathcal{C}} %
\newcommand{\bp}{\boldsymbol{p}}
\newcommand{\bq}{\bm{q}}
\newcommand{\bv}{\mathbf{v}}
\newcommand{\bF}{\mathbf{F}}
\newcommand{\bJ}{\mathbf{J}}
\newcommand{\bS}{\mathbf{S}}
\newcommand{\ba}{\mathbf{a}} %
\newcommand{\bA}{\mathbf{A}} %
\newcommand{\bb}{\mathbf{b}} %
\newcommand{\bpos}{\mathbf{p}} %
\newcommand{\thetadot}{\dot{\theta}} %
\newcommand{\Epsilon}{\mathcal{E}}
\newcommand{\ninfo}{n_{\textrm{info}}}
\newcommand{\Ninfo}{N_{\textrm{info}}}
\newcommand{\G}{\bm{G}}
\newcommand{\cE}{\mathcal{E}}
\newcommand{\X}{\mathcal{X}}
\newcommand{\Xsafe}{\mathcal{X}_{\text{free}}}
\newcommand{\Xobs}{\mathcal{X}_{\text{obs}}}
\newcommand{\Obs}{\mathcal{O}}
\newcommand{\Xgoal}{\mathcal{X}_{\text{goal}}}
\newcommand{\U}{\mathcal{U}}
\newcommand{\E}{\mathbb{E}}
\newcommand{\R}{\mathbb{R}}
\newcommand{\Prob}{\mathbb{P}} 
\newcommand{\maxeig}[1]{\bar{\lambda}(#1)}
\newcommand{\mineig}[1]{\underline\lambda(#1)}
\newcommand{\diag}{\textrm{diag}}
\newcommand{\D}{\mathcal{D}}
\newcommand{\norm}[1]{\left\lVert#1\right\rVert}
\newtheoremstyle{exampstyle}
  {1em plus .2em minus .1em}%
  {1em plus .2em minus .1em}%
  {} %
  {} %
  {\bfseries} %
  {.} %
  {.5em} %
  {} %
\newtheorem{mylemma}{Lemma}[section]
\theoremstyle{exampstyle}
\newtheorem{mycorollary}{Corollary}[section]
\theoremstyle{exampstyle}
\theoremstyle{exampstyle}
\newtheorem{preremark3}{Theorem}
\theoremstyle{exampstyle}
\theoremstyle{exampstyle}
\theoremstyle{exampstyle}
\newtheorem{myassumption}{Assumption}%
\newcolumntype{C}{>{\centering\arraybackslash}p{35mm}}
\newcolumntype{L}{>{\centering\arraybackslash}p{15mm}}
\newcolumntype{S}{>{\centering\arraybackslash}p{8mm}}
\newcolumntype{H}{>{\centering\arraybackslash}p{14mm}}
\newcolumntype{G}{>{\centering\arraybackslash}p{16mm}}
\newcolumntype{F}{>{\centering\arraybackslash}p{12mm}}
\newcommand\Tstrut{\rule{0pt}{2.6ex}}         %
\newcommand\Bstrut{\rule[-0.9ex]{0pt}{0pt}}   %
\definecolor{somegray}{rgb}{0.5, 0.5, 0.5}
\newcommand{\darkgrayed}[1]{\textcolor{somegray}{#1}}
\newcommand*\titleheader[1]{\gdef\@titleheader{#1}}
  \let\st@red@title\@title
  \def\@title{%
    \vskip-2em
    \bgroup\normalfont\large\centering\@titleheader\par\egroup
    \vskip1.5em\st@red@title}
\title{Safe Active Dynamics Learning and Control:  
A Sequential Exploration-Exploitation Framework}
\author{Thomas Lew, Apoorva Sharma, James Harrison, Andrew Bylard, and Marco Pavone
%
%
%
\thanks{The authors are with 
Stanford University, Stanford, CA 94305-4035 USA. Contact: \texttt{\{thomas.lew, apoorva, jharrison, bylard, pavone\}@stanford.edu}.}
}
\begin{document}
\maketitle

%
%
%
%

\markboth{}{Lew \MakeLowercase{\textit{et al.}}: Safe Active Dynamics Learning and Control: A Sequential Exploration-Exploitation Framework}

\maketitle

\begin{abstract}
Safe deployment of autonomous robots in diverse scenarios requires agents that are capable of efficiently adapting to new environments while satisfying constraints. 
In this work, we propose a practical and theoretically-justified approach to maintaining safety in the presence of dynamics uncertainty. 
Our approach leverages Bayesian meta-learning with last-layer adaptation. The expressiveness of neural-network features trained offline, paired with efficient last-layer online adaptation, enables the derivation of tight confidence sets which contract around the true dynamics as the model adapts online. We exploit these confidence sets to plan trajectories that guarantee the safety of the system. Our approach handles problems with high dynamics uncertainty, where reaching the goal safely is potentially initially infeasible, by first \textit{exploring} to gather data and reduce uncertainty, before autonomously \textit{exploiting} the acquired information to safely perform the task. 
Under reasonable assumptions, 
we prove that our framework guarantees the high-probability satisfaction of all constraints at all times jointly, i.e. over the total task duration. 
This theoretical analysis also motivates two regularizers of last-layer meta-learning models that improve online adaptation capabilities as well as performance by reducing the size of the confidence sets. 
We extensively demonstrate our approach in simulation and on hardware.
\end{abstract}

\begin{IEEEkeywords}
Robotics, meta-learning, chance-constrained planning, system identification, dynamics, reachability analysis.
\end{IEEEkeywords}

\IEEEpeerreviewmaketitle

\section{Introduction}

Deploying autonomous robotic systems in safety-critical applications requires agents that are capable of adapting to partially unknown environments while satisfying constraints. 
For example, an autonomous robot assisting astronauts in space must be able to handle a priori unknown payloads while respecting velocity constraints and avoiding collisions with obstacles. 
This requires quantifying the uncertainty in the environment and factoring it into the decision process to take reliable actions. 
As the initial levels of uncertainty may be very high as the agent encounters a new environment, this also requires choosing actions that actively reduce the level of uncertainty to eventually be capable to safely perform the given task. 
For instance, a robot grasping a new payload should first identify the properties of its coupled dynamics before attempting to transport the payload to a destination.

Despite recent rapid progress on facets of this problem in the fields of data-driven control, machine learning, and reinforcement learning, the overall problem of safely controlling a robotic system in uncertain environments remains a challenge. Specifically, it is difficult to (1) obtain statistical models which are general yet offer a tight characterization of uncertainty, (2) design control laws that formally guarantee the safety of uncertain nonlinear dynamical systems, and (3) ensure that these guarantees hold even as the system adapts and reduces uncertainty using online data. 
Designing a robotic system that is capable of actively and safely exploring its environment  without human intervention remains an open challenge.

\begin{figure}[!t]%
\centering
\includegraphics[width=1\linewidth,trim=0 2 0 5, clip]{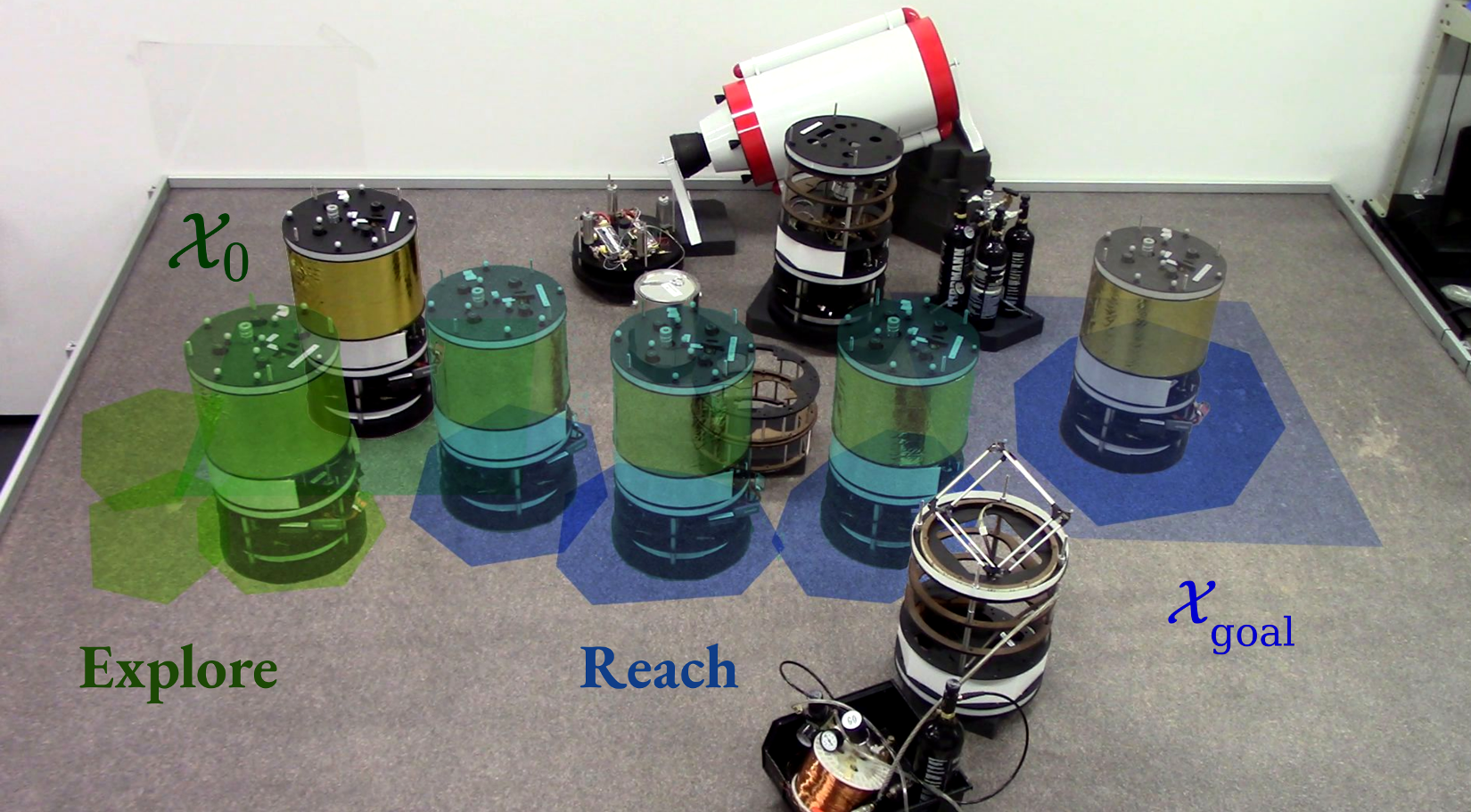}
\vspace{-4mm}
\caption{We propose a learning and control framework to tackle problems where uncertainty over a system's dynamics is potentially initially too high to safely perform a given task (e.g., transporting an uncertain payload). 
By leveraging meta-learning and full-horizon reachability-aware chance-constrained optimal control, our algorithm enables robotic systems (e.g., the planar spacecraft free-flyer platform shown above) to actively \textcolor{dartmouthgreen}{\textit{explore}} to infer dynamics and reduce uncertainty, until it is possible to safely \textcolor{blue}{\textit{reach}} the goal $\Xgoal$. We prove that our framework guarantees high-probability  satisfaction of all constraints at all times jointly and that safety is guaranteed even if no feasible solution to the problem exists, %
e.g., if the only path to the goal is blocked by an obstacle. 
\vspace{-5mm}
}
\label{fig:hardware_intro_motivating}
\end{figure}

\textbf{Contributions:} We propose a practical framework for the safe learning and control of uncertain nonlinear systems. 
\rev{Our} approach is capable of tackling chance-constrained problems that are initially infeasible due to high dynamics uncertainty by autonomously exploring and learning online 
to reduce uncertainty before performing the task, \rev{as shown in Figure \ref{fig:hardware_intro_motivating}}. 
This framework is enabled by the following contributions:
\begin{itemize}[leftmargin=3.5mm]
  \setlength\itemsep{0.5mm}
    \item \textit{Dynamics modeling}: 
    We show that last-layer meta-learning models can learn tight and structured  representations of uncertain dynamics  from data and can efficiently reduce uncertainty online.  Critically, we derive adaption guarantees of such models in the form of confidence sets 
    guaranteed with high probability to contain the true dynamics throughout the online learning process. 
    These guarantees rely on two realistic and intuitive assumptions on the quality of the \textit{offline} meta-training process which can be empirically validated before deploying the system. This theoretical analysis also suggests two regularizers \rev{to meta-train adaptive}  models with improved performance and more expressive features. %
    \item \textit{Autonomous exploration}:  
    We combine these confidence sets with reachability-aware trajectory optimization to tackle problems with high levels of uncertainty that are initially infeasible. 
Specifically, we use the current uncertainty level to autonomously switch from active learning, to reduce dynamics uncertainty (the exploration phase), to executing the desired task (the exploitation phase) once it is feasible. 

    \item \textit{Safety}: 
    In this work, we define safety in terms of the satisfaction of all constraints over the entire operation time. As such, 
    we prove that our framework guarantees the high-probability satisfaction of all constraints at all times \textit{jointly}, i.e. over the total task duration. This contrasts with prior work which instead enforces pointwise chance constraints (see Section \ref{sec:related_work}). 
    We also prove that the algorithm is continually recursively feasible with high probability,  
    which is crucial to enable autonomous operation. 
    Our guarantees are independent of the (unknown) total time required to solve the task, and safety still holds even if the perfect-information problem is impossible to solve, e.g., if the only path to the goal is blocked by an obstacle.
\end{itemize}

We validate our approach on two challenging highly uncertain nonlinear systems 
and show that we are able to autonomously and safely infer dynamics to accomplish goal-reaching tasks. 
We also provide hardware experiments demonstrating the reliability and real-time capabilities of our approach, as well as the practical applicability of dynamics meta-learning models with last-layer adaptation.

\textbf{Organization:} We first discuss related work in Section \ref{sec:related_work}. Section \ref{sec:prob_form} describes our chance-constrained problem formulation. We present our contributions to the meta-learning model used to capture uncertain dynamics in Section \ref{sec:background} and our control approach in Section \ref{sec:seels}.  
Finally, we present results in Sections \ref{sec:results:sim} and \ref{sec:hardware} and conclude in Section \ref{sec:conclusion}.

{

\textbf{Notation}: 
$\chi^2_d(p)$ denotes  the $p$-th quantile of the $\chi^2$ distribution with $d$ degrees of freedom. 
For any $\ba,\bb \in \R^d$ and $\bA$ a $d{\times} d$ positive definite matrix, $\|\ba\|_{\bA}^2 =  \ba^\top\bA\ba$
and $\bar\lambda(\bA)$ ($\underline{\lambda}(\bA)$) denotes the maximum (minimum) eigenvalue of $\bA$. 
$\langle \ba, \bb \rangle$ denotes the inner product of $\ba$ with $\bb$ (for $\ba,\bb \in \R^d$,  $\langle \ba, \bb \rangle= \ba^\top \bb$).  
\rev{We denote a conjunction (logical \scalebox{0.95}{\textrm{AND}}) by $\wedge$ and a disjunction (logical \scalebox{0.95}{\textrm{OR}}) by $\vee$.}
}

\section{Related Work}\label{sec:related_work}

Our approach is enabled by two key components: a dynamics meta-learning model with adaptation guarantees and a sequential exploration and exploitation control algorithm. 

\textbf{Choosing a model:} In contrast to model-free approaches to learning-based control, model-based methods generally provide better sample efficiency while enabling guarantees on constraint satisfaction and stability \cite{recht2019, deisenroth2015}. 
These model-based methods rely on the choice of a dynamics model---e.g., neural networks \cite{levine2016end}, squared-exponential kernel Gaussian processes \cite{deisenroth2015}, or linear models \cite{Coulson2018DataEnabledPC}---each with associated strengths and weaknesses.
Recent work in the controls community 
has leveraged behavioral systems theory to guarantee stability and probabilistic constraint satisfaction of a non-parametric \rev{model predictive control (MPC)} scheme \cite{Coulson2020Distributional,Berberich2020Robust,Berberich2020}. 
Although such methods have been shown to perform well for nonlinear systems \cite{Coulson2018DataEnabledPC}, their guarantees currently do not extend beyond linear systems
\rev{(see also \cite{Berberich2021} for a recent analysis of the nonlinear setting). }
Alternatively, assuming 
linearly parameterizable dynamics with known nonlinear basis functions allows the design of stable adaptive controllers \cite{Slotine1987,Lopez2021} and planning algorithms with adaptation guarantees 
\cite{Mania2020,Kakade2020}. 
When such structure is not known a priori, purely data-driven  models such as neural networks can be used to derive nonlinear controllers that guarantee stability \cite{shi2019ICRAnnlander}. 
However, these methods require collecting a dataset characterizing the system dynamics throughout the operational regime and would need retraining with a new dataset if the environment or the system changes. Generally, this %
is prohibitively expensive in terms of data requirements.

To tackle problems where data is scarce, meta-learning \rev{\cite{schmidhuber1987evolutionary,santoro2016meta,amit2018meta,finn2017model,RichardsAzizanEtAl2021}} aims to train a model from data over different tasks, making it capable of rapid adaptation given limited data from a given new task.  
In the context of dynamics learning, a task corresponds to a dynamical system and meta-learning consists of training a model capable of efficiently fitting the true system's dynamics with limited data. 
Prior work has successfully applied meta-learning to learning-based control  \cite{finn2017model,NagabandiICLR2019,Belkhale2021}: performing gradient descent on meta-learned neural networks at run time enables fast adaptation with limited data. 
However, the non-convexity of the online adaptation objective %
makes it difficult to provide adaptation guarantees for such gradient descent approaches. \rev{Instead of performing gradient descent online, meta-learning has also been combined with adaptive controllers \cite{RichardsAzizanEtAl2021} yielding efficient closed-loop tracking performance. Further analysis is still necessary to provide formal performance guarantees for this approach in safety-critical applications. }

\rev{To obtain performance guarantees, model adaptivity, and data efficiency,}
our approach \rev{
combines} meta-trained neural network features with online last layer adaptation \cite{HarrisonSharmaEtAl2018}:
offline, data from related environments is used to train a neural network to produce features tailored to the structure of the system; 
online, only adapting the last layer yields a sample-efficient adaptation process with strong guarantees on statistical performance. 
By taking a Bayesian perspective on meta-learning, our model also yields a calibrated prior uncertainty representation that enables quantifying the initial uncertainty of the system. 
Importantly, building on this initial uncertainty quantification, the linear structure of the last layer adaptation process enables constructing confidence sets for the parameters of the model that hold throughout the online learning process.

Within the realm of Bayesian modeling, a common modeling choice is Gaussian processes (GPs) \cite{williams2006gaussian}. GPs with squared-exponential kernels have been widely used to represent dynamics for learning-based control and exploration, as they can represent any bounded continuous function arbitrarily well given enough data. 
While bounds providing similar guarantees as our confidence sets  %
can be derived for these models, such bounds are generally too conservative \cite{Fiedler2021} 
so that scaling constants are heuristically selected %
in experiments \cite{berkenkamp2017safe,koller2018}.  
Alternatively, it is common to assume that the true system dynamics lie in a reproducing kernel Hilbert space (RKHS) determined by finite-dimensional basis functions %
\cite{Mania2020,Kakade2020}. Importantly, this approach enables deriving tighter
bounds over possible models \cite{abbasi2011improved} which we regularize for and use directly in this work. 
In contrast to prior work, we explicitly learn these basis functions and regularize these features to improve generalization of the model.  Moreover, we quantify prior uncertainty in an offline meta-training procedure, creating a model which is both calibrated and expressive enough to represent possible systems. This also allows verifying that representation error is small \textit{offline} before deploying the system\rev{, see Figure \ref{fig:concept}}.

\textbf{Safe dynamics learning and control}  
amounts to the design of a controller that guarantees satisfaction of all constraints. 
Given a statistical model of the system subject to external random disturbances, it is common to use chance constraints to ensure that all specifications are satisfied with high probability $(1\,{-}\,\delta)$, where $\delta\,{>}\,0$ is a small tolerable probability of failure. 
Such chance constraints are often preferred to robust constraints ($\delta\,{=}\,0$), as statistical models predicting unbounded distributions over possible outcomes are typically used in such \rev{applications \cite{blackmore2010,IvanovicPavone2019,hewing2018cautious,CastilloRAL2020}}, in which case requiring zero probability of failure is infeasible. 
Typically, \textit{pointwise} chance constraints  
$\Prob(\x_t\,{\in}\,\X_i) \,{\geq}\, (1{-}\delta)$
are enforced for each $i$-th constraint $\x_t\,{\in}\,\X_i$ at each time $t\,{\geq}\, 0$  \cite{CastilloRAL2020,LewBonalli2020,hewing2018cautious,
Polymenakos2020,
Khojasteh_L4DC20,
ChengKhojasteh2020}, where $\x_t\,{\in}\,\R^n$ denotes the state of the system at time $t$ and $\X_i\,{\subset}\,\R^n$ denotes a subset of feasible states. %
Unfortunately, this approach only guarantees that each $i$-th constraint is satisfied at each timestep $t$ with high probability and does not guarantee safety over the entire operation time, i.e., that all constraints are satisfied at all times with high probability  \cite{FreyRSS2020,SchmerlingPavone2017}. 
This is particularly an issue when the time to complete a given task is a priori unknown, which %
motivates using a single \textit{joint} chance constraint instead, taking the form $\Prob(\x_t\,{\in}\,\X_i\, \forall i\, \forall t) \,{\geq}\, (1{-}\delta)$. 

Instead of allocating the total allowed probability of failure $\delta$ to each $i$-th constraint and each time $t$ using Boole's inequality \cite{blackmore2011chance}, we take a frequentist viewpoint and use the confidence sets over the model parameters to plan robust trajectories that are safe with respect to all possible parameters in these confidence sets. %
This approach is key to guaranteeing the satisfaction of all constraints \textit{trajectory-wise} %
with probability at least $(1{-}\delta)$ despite an unknown time to complete the task and  while switching between exploration to reduce dynamics uncertainty and exploitation to complete the given task.

Reinforcement learning (RL) can also be effective for controlling uncertain systems \cite{levine2016end, Hwangbo2019}, and model-based methods in particular enable an agent to consider its uncertainty over dynamics when choosing actions 
\cite{deisenroth2015}. 
However, standard model-based reinforcement learning methods do not provide sufficient guarantees for maintaining safety during operation as they commonly penalize constraint violation \cite{Hwangbo2019}, whereas safety-aware RL typically considers finite-dimensional state-action spaces, as using continuous state-action spaces remains an open area of research \cite{BerkenkampThesis2018,Chow2020SafeRL}. In contrast to traditional RL, we meta-learn a model and uncertainty representation \textit{offline}, fine-tune it \textit{online} via last-layer adaptation only, and use trajectory optimization and reachability analysis for planning and control. 
This structure allows our framework to rapidly infer an accurate dynamics model while providing strong safety guarantees throughout the control task.

An outline of our active dynamics learning and control framework first appeared in \cite{LewSharmaHarrisonRSS2019}. However, it used pointwise chance constraints evaluated approximately and thus did not provide sufficient guarantees of safety. Subsequently, \cite{NakkaRAL2020} developed a similar trajectory optimization approach to learn dynamics. %
By replacing uncertain parameters with Brownian motion (which has independent increments), this approach makes an approximation that neglects time correlations of parameter uncertainty over the state trajectory. This limits the applicability of the theoretical analysis, as is discussed in more detail in \cite{LewSDE2021}. 
Learning-based control algorithms typically tackle the problems of stabilization around an operating point, 
planning to a goal, 
active system identification \cite{ekalaccuracycobot,Zhang2021}, and 
dual control \cite{klenske2016dual,arcari2019,Kakade2020,Mania2020}. 
These approaches outperform methods that neither consider uncertainty nor leverage learning-based components. Still, they rely on the assumption that the problem is initially feasible given the current system's uncertainty characterization (e.g., learning-based MPC requires the first optimization problem to be feasible). 
In contrast to prior work, we consider the problem of \textit{eventually} performing a task which is potentially initially infeasible with the current information about the system. Actively learning the system's properties is thus \textit{necessary} to reduce uncertainty before performing the task. Importantly, we explicitly guarantee the satisfaction of all safety constraints at all times jointly with high probability, 
even if the problem is infeasible with perfect information.

\begin{figure*}[t]
\begin{minipage}{1\textwidth}
    \centering
    \includegraphics[width=1\linewidth]{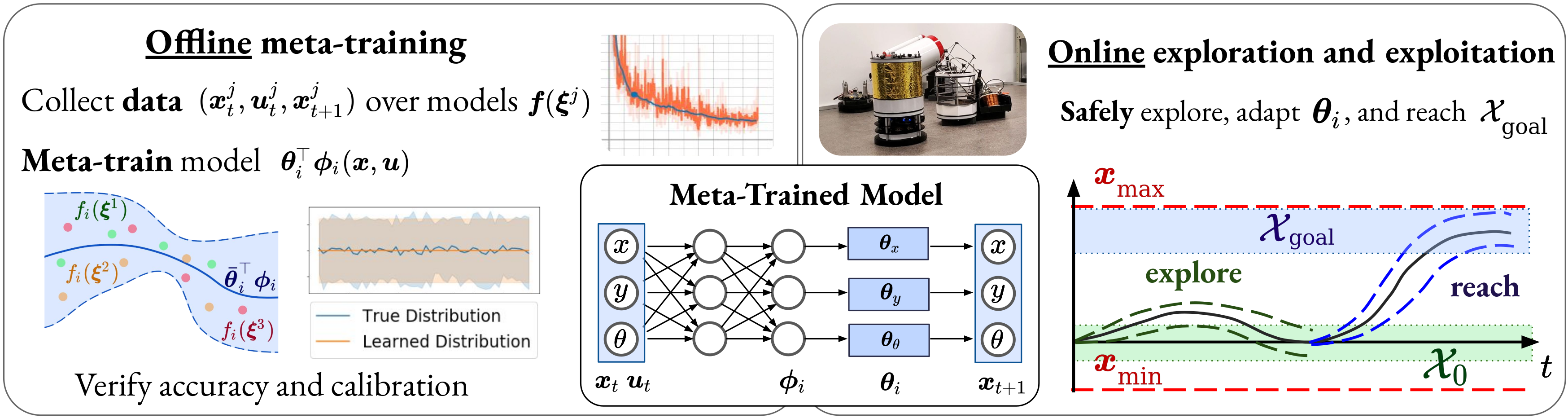}
    \vspace{-4mm}
    \caption{To guarantee trajectory-wise constraints satisfaction and reach a goal region $\Xgoal$ despite uncertain dynamics $\f$ in \eqref{eq:full_problem_dynamics}, our framework consists of an offline phase  \rev{(left)}, where a dataset over multiple models is used to meta-train a model with a calibrated uncertainty characterization\rev{, so that the learned distribution over the dynamics $\k_i^\top\feat_i(\x,\ac)$ accurately represents the true distribution of the dynamics $\f(\param)$}. 
    Then, the system is deployed \rev{(right)} and adapts the last layers $\k_i$ of the meta-trained model \rev{by safely and autonomously exploring} the environment to decrease its uncertainty, \rev{before safely reaching} the goal. 
    }
    \label{fig:concept}
\end{minipage}
\vspace{-2mm}
\end{figure*}

\section{Problem Formulation}\label{sec:prob_form}

The goal of this work is to enable robots to safely navigate from an initial state $\rev{\x(0)\in\R^n}$ to a goal region $\rev{\Xgoal\subseteq\R^n}$ despite highly uncertain dynamics while minimizing a chosen cost metric $\ell(\cdot)$ (e.g., fuel consumption). 
We write the state of the agent at time $t\,{\in}\,\mathbb{N}$ as $\x_t\in\R^{\xdim}$ and the control input as $\ac_t\in\R^{\udim}$. 
The system follows dynamics $\x_{t+1} = \h(\x_t, \ac_t) + \g(\x_t,\ac_t,\param) + \ep_t$, where $\h$ is known and corresponds to prior knowledge of the system, whereas $\g$ is unknown with unobserved parameters $\param$, 
and \rev{$\ep_t$} are stochastic disturbances. 
At the beginning of each episode $j$, the parameters are sampled $\param^j \,{\sim}\, p(\param)$ and are fixed for the episode duration\rev{, so that the uncertain dynamics $\h(\cdot,\cdot)+\g(\cdot,\cdot,\param^j)$ of the system are time-invariant}. These parameters \rev{$\param$} correspond to unknown features that vary between episodes (e.g., the mass of a payload) and introduce correlations over the state trajectory that one must account for to guarantee safety. 
We assume that the disturbances $\ep_t=(\rev{\epsilon_{1,t}},{\mydots},\rev{\epsilon_{n,t}})$ are uncorrelated over $t\,{\in}\,\mathbb{N}$ and $i\,{=}\,1,\mydots,\xdim$ 
and that each $\rev{\epsilon_{i,t}}$ is $\sigma_i$-subgaussian and bounded such that $\rev{\epsilon_{i,t}}\in\mathcal{E}_i$, where $\rev{\mathcal{E}_i\subset\R}$ is a bounded subset.  

Critically, this algorithm should guarantee
\textit{trajectory-wise} constraints satisfaction:  obstacles are avoided ($\x_t\notin\Xobs$ with \rev{$\Xobs\subset\R^n$} the obstacle set) and system constraints are respected ($\x_t\in\X$, $\ac_t\in\U$ with \rev{$\X\subset\R^n$}, \rev{$\U\subset\R^m$} the feasible state and control sets\rev{, which represent, for example, velocity bounds and torque-limited actuators, see Section \ref{sec:results:sim}}) at all times, and the goal $\Xgoal$ is eventually reached. 
Due to the system stochasticity and uncertain dynamics, enforcing all constraints with probability one %
may be challenging without further assumptions (e.g., bounded model mismatch). 
Instead, we enforce all constraints via a single \textit{joint chance constraint} with probability level $(1\,{-}\,\delta)$, where $\delta\,{>}\,0$ is a small tolerable probability of failure. 
The full problem is expressed as
\\[-2mm]

\centerline{\textbf{\hspace{2mm}Chance-Constrained Optimal Control Problem}  \textbf{(\ccocp)}
\vspace{-4mm}}
\begin{subequations}
\label{eq:full_problem}
\begin{gather}
\mathop{\text{min}}_{\x,\ac}
\ \ 
\E \bigg( \sum_{t=0}^{\rev{T-1}} \cost(\x_t, \ac_t)\bigg)%
\label{eq:expected_cost}
\quad \quad
\text{s.t.}
\quad \quad 
\x_0 = \x(0),
\\[2mm]
\hspace{-1mm}
\label{eq:full_problem_dynamics}
\x_{t+1} = \h(\x_t, \ac_t) + \g(\x_t,\ac_t,\param) + \ep_t, 
\ \ 
\mbox{\footnotesize $t\,{=}\,0, \mydots,\rev{T}{-}1,$}
\hspace{-1mm}
\\[2mm]
\label{eq:joint_chance_constraint}
\hspace{-2mm}
\Prob\bigg(
\bigwedge_{t=1}^{\rev{T}} \x_{t} \,{\in}\, \Xsafe 
\ {\wedge}
\bigwedge_{t=0}^{\rev{T}-1}\ac_{t} \,{\in}\, \U 
\ {\wedge}\  
\x_{\rev{T}} \,{\in}\, \Xgoal
\bigg) \geq 1\,{-}\,\delta
,
\hspace{-1mm}
\end{gather}
\end{subequations} 
where $\x\,{=}\,(\x_0,\mydots,\x_{\rev{T}})$ and  
$\ac\,{=}\,(\ac_0,\mydots,\ac_{\rev{T}-1})$ denote the state and control trajectories, 
$\Xsafe\,{=}\,\X \,{\setminus}\, \Xobs\,{\subseteq}\,\R^n$ is the safe set, 
$\Xgoal\,{\subseteq}\, \Xsafe\,{\subseteq}\,\R^n$ is the goal region, 
 $\rev{T}\,{\in}\,\mathbb{N}$ is the total problem duration (possibly infinite), 
 and $\x(0)\,{\in}\,\R^n$ is the initial state of the system.
This problem formulation can be equivalently described as a constrained Markov decision process \rev{(CMDP)} \cite{Geibel2005} with a continuous state and action space, nonlinear stochastic dynamics%
, and a non-convex cost function. 

Satisfying trajectory-wise safety constraints with unknown dynamics is extremely difficult without further information. Thus, our approach relies on the following assumption \cite{koller2018}:
\vspace{-2mm}

\begin{myassumption}[\textbf{A1}]\label{assum:invariant_X0}
The initial state $\x(0)\,{\in}\,\X_0\,{\subseteq}\,\Xsafe$, where $\X_0$ is a \rev{robust} control invariant set for \rev{some} feedback controller $\rev{\boldsymbol\pi}:\X_0\rightarrow\U$\rev{: }%
\rev{if} $\x_t\in\X_0$ \rev{and} $\ac_t=\boldsymbol\pi(\x_t)$\rev{, then }
$\x_{t+1}=\eqref{eq:full_problem_dynamics}\in\X_0$ \rev{for all  $\param\sim p(\param)$ and all $\ep_t$ with $\epsilon_{i,t}\in\Epsilon_i$}.
\end{myassumption}
\vspace{-2mm}

This assumption reflects \rev{prior knowledge} that the system 
is initially stable and satisfies all constraints 
under a nominal controller $\boldsymbol\pi$ (e.g., regulated to a stable point using a simple linear feedback law). %
\rev{%
Computing robust control invariant sets for general nonlinear uncertain systems (e.g., by leveraging additional prior knowledge of the structure and smoothness of the true system and of the support of the distribution $p(\param)$) is an active area of research, see for instance \cite{Akametalu2014,Berkenkamp2016ROA,Chen2018,Jankovic2018,Chen2021}.}

In this work, we make no assumptions on boundedness or smoothness properties of $\g$ (e.g., known Lipschitz constants). 
Instead, we assume that we have access to a dataset of trajectories  generated from sampled parameters $\param^j$. Such information may come from previous operation of a robot in similar environments or data generated from parameterized simulations. This motivates our use of meta-learning to encode this information and characterize the dynamics uncertainty.

\section{Meta-Learning \& Adaptation Guarantees}\label{sec:background}
\subsection{Bayesian meta-learning}
Our approach leverages a model for the unknown dynamics $\g$. To this end, we employ the Bayesian meta-learning architecture presented in \cite{HarrisonSharmaEtAl2018, HarrisonSharmaEtAl2019} referred to as \alpaca. Meta-learning (or ``learning-to-learn'') \cite{schmidhuber1987evolutionary, finn2017model, santoro2016meta} aims to train a model to be capable of rapid adaptation\footnote{The exact mechanism of ``adaptation'' to a task is at the core of the meta-learning algorithm. In MAML \cite{finn2017model} it consists of a gradient step, in recurrent models it occurs via the hidden state dynamics \cite{hochreiter2001learning}, and in \alpaca \cite{HarrisonSharmaEtAl2018} the update consists of Bayesian linear regression on the last layer.} within a distribution of \textit{tasks}. 
In this problem setting, a task corresponds to a dynamical system $\g(\cdot,\cdot,\param)$ parameterized by the unknown parameters $\param$.
\alpaca models the unknown dynamics as $\hat{\g}{=}(\hat{g}_1,{\scalebox{0.9}{$\mydots$}},\hat{g}_n)$, where each $\hat{g}_i{:}\, \scalebox{0.94}{$\R^n{\times}\,\R^m{\times}\,\R^d\,{\rightarrow}\,\R$}$ is defined as
\begin{equation}\label{eq:alpaca_model}
\hat{g}_i(\x, \ac, \k_i) = \k_i^\top\feat_i(\x,\ac), \quad 
i\,{=}\,1,\mydots,n,
\end{equation}
where %
each %
$\feat_i: \R^n\,{\times}\,\R^m\,{\rightarrow}\,\R^{\phidim}$ is a feed-forward neural network and $\k_i\,{\in}\,\R^\phidim$ corresponds to the weights of the last layer. 
The uncertainty in the space of dynamics functions is encoded through a normal distribution on each %
last layer \rev{as} $\k_i \sim \N(\kbar_{i}, \sigma_i^2 \Linv_{i})$\rev{, where $\kbar_i\in\R^d$ denotes the mean parameters and $\bLambda_i\in\R^{d\times d}$ the positive definite precision matrix}. 
As such, each component $\hat{g}_i(\x,\ac, \k_i) = \k_i^\top\feat_i(\x,\ac)$ defines a Gaussian process (GP) with mean function %
$\bar\k_i^\top\feat_i(\x,\ac)$ and kernel %
$\sigma_i^2 \feat_i(\x,\ac)^\top \Linv_{i} \feat_i(\x',\ac')$. 
\rev{As is common in the literature \cite{Akametalu2014,Berkenkamp2016ROA,koller2018}, we opt for independent distributions for the parameters $\k_i$\footnote{\rev{I.e., we choose a Gaussian distribution for the parameters $(\k_1,\mydots,\k_n)$, of mean $(\kbar_1,\mydots,\kbar_n)$ and diagonal covariance with diagonal entries $\sigma_i\bLambda_i^{-1}$. }}. This choice leads to substantial computational benefits during meta-training and to better performance at run-time over the alternative of  maintaining a joint distribution over all the parameters $(\k_1,\dots,\k_n)$, see \cite{HarrisonSharmaEtAl2019,harrison2021uncertainty}. }

The basis functions $\feat_i$ could be chosen as independently-parameterized neural networks. Instead, in this work, 
we use fully-connected $L$-layer neural network basis functions 
$\feat_i(\z)=\rev{\W^{L+1}_i}\tanh\big(\rev{\W^L}\tanh\circ\cdots\circ \tanh(\rev{\W^1}\z)\big)$\rev{\footnote{\rev{We omit biases in the notation for brevity; our implementation includes a bias term in each layer.}}}. Sharing the first $L$ weight matrices across all $n$ state dimensions allows for efficient scalability of the model and faster offline meta-training compared to training $n$ separate neural networks $\feat_i$. Choosing the bounded $\tanh$ activation functions yields smoother mean and variance predictions than common alternatives such as \textrm{ReLU} activations  \cite{HarrisonSharmaEtAl2019,Snoek2015}. 
Finally, the last-layer matrices \rev{$\W_i^{L+1}$} provide an additional degree of freedom which enables each $i$-th dimension of the model to select relevant features%
\footnote{The matrices \rev{$\W_i^{L+1}$} %
also stabilize the offline meta-training process: by weighting intermediate features differently (e.g., when predicting quantities of different units, such as a position versus an angular velocity), this dimension-wise rescaling is not reflected in the precision matrices  $\bLambda_i$ which are then better conditioned. 
By yielding a larger model class, this additional degree of freedom also allows further regularization to meta-train tighter uncertainty representations, see Section \ref{sec:regularizers}.}.
We found that using independent neural networks or sharing less than $L$ weight matrices did not lead to better performance, although our analysis applies to such models as well.

The linear structure of this model allows for efficient online updates whose behavior is well understood.
Given transitions from the system $\{(\x_0, \ac_0, \x_1), \dots, (\x_t, \ac_t, \x_{t+1}) \}$ \rev{and prior parameters $(\kbar_{i,0}, \bLambda_{i,0})$}, 
\rev{the normal distribution of the last-layer parameters is updated according to Bayesian linear regression: the posterior precision matrices and mean vectors are given by}
\begin{equation}\label{eq:bayesian_updates}
\begin{array}{l}
    \bLambda_{i,t} = \Phi_{i,t-1}^\top \Phi_{i,t-1} \,{+}\, \bLambda_{i,0}, 
    \\[1mm]
    \kbar_{i,t} = \bLambda_{i,t}^{-1} ( \Phi^\top_{i,t-1} \G_{i,t} \,{+}\, \bLambda_{i,0} \kbar_{i,0} ), 
\end{array}
    \ \  
    i\,{=}\,1,\mydots,\xdim,
\end{equation}
where $\G_{i,t}^\top {=}\, [x_{i,1} {-} h_i(\x_{0},\ac_{0}), \mydots, x_{i,t} {-} h_i(\x_{t{-}1},\ac_{t{-}1})] \,{\in}\, \R^{t}$ and
$\Phi_{i,t{-}1}^\top {=}\, [\feat_i(\x_0, \ac_0), \mydots, \feat_i(\x_{t{-}1},\ac_{t{-}1})]$  $\,{\in}\, \R^{\phidim \times t}$. 
\rev{Since each $i$-th model $\hat{g}_i$ in \eqref{eq:alpaca_model} is linear in the uncertain parameters $\k_i$, the posterior predictive state distribution %
is also Gaussian. 
Specifically, $(\x_{t+1} \mid \ac_{0:t},\x_{0:t}) \sim \N(\bmu_{t+1}, \bSigma_{t+1})$, where the entries of $\bmu_{t+1}$ and the diagonal covariance matrix $\bSigma_{t+1}$ are
\begin{equation}
\label{eq:alpaca_posterior_predictive}
\begin{array}{l}
\mu_{i,t+1} = h_i(\x_{t},\ac_{t}) + \kbar_{i,t}^\top \feat_i(\x_{t},\ac_{t}) ,
\ \ \ 
i\,{=}\,1,\mydots,\xdim,
\\[1mm]
\Sigma_{i,t+1} = \sigma_i^2 \big( 1 + \feat_i(\x_{t},\ac_{t})^\top \Linv_{i,t} \feat_i(\x_{t},\ac_{t}) \big).
\end{array}
\end{equation}
}

\textbf{Offline}, this model is meta-trained on a dataset of trajectories corresponding to different system dynamics sampled from the distribution over possible systems. 
By backpropagating through the posterior predictive distribution to learn the features $\feat_i$ and the prior parameters $(\kbar_{i,0}, \bLambda_{i,0})$, 
this model translates a dataset of uncertain trajectories into a learned feature space and a calibrated uncertainty characterization. 

More precisely, given $J$ different sampled dynamics and a meta-training horizon $\rev{T_{\textrm{ML}}}$, we consider as meta-training objective the joint marginal log likelihood across the data%
\begin{equation}\label{eq:alpaca_loss}
\mathcal{L}(\D^{\scalebox{0.6}{$J$}}_{\scalebox{0.6}{$\rev{T_{\textrm{ML}}}$}}; \kbar, \bLambda, \feat)
=
\sum_{j=1}^J \sum_{t=0}^{\rev{T_{\textrm{ML}}}{-}1}
\log p(\x_{t+1}^j \mid \ac_{0:t}^j,\x_{0:t}^j),
\end{equation}
where 
    $\D^J_{\rev{T_{\textrm{ML}}}}\,{=}\, \{(\x_t^j, \ac_t^j, \x_{t+1}^j)_{t=0}^{\rev{T_{\textrm{ML}}}{-}1}\}_{j=1}^J$ is the meta-training dataset 
    and
    each $j$-th trajectory $(\x_t^j, \ac_t^j, \x_{t+1}^j)_{t=0}^{\rev{T_{\textrm{ML}}}{-}1}$ 
    corresponds to the $j$-th sampled system $\g(\cdot,\cdot,\param^j)$ parameterized by $\param^j \,{\sim}\, p(\param)$.  
\rev{Since each posterior distribution in \eqref{eq:alpaca_loss} is Gaussian as defined by \eqref{eq:alpaca_posterior_predictive}, the total meta-training loss %
can be easily differentiated with respect to the model parameters using modern automatic differentiation packages \cite{HarrisonSharmaEtAl2018}. During offline meta-training, we optimize this loss using stochastic gradient descent, evaluating \eqref{eq:alpaca_loss} on minibatches subsampled from $\D^J_{T_{\textrm{ML}}}$.}

\vspace{1mm}

\textbf{Online}, using transition tuples $\{ (\x_\tau, \ac_\tau, \x_{\tau+1}) \}_{\tau=0}^{t-1}$ from the true system, we only adapt the last layer parameters $\k_i$ using \eqref{eq:bayesian_updates}. This restriction on the \textit{online} behavior of the model allows deriving strong adaptation guarantees under reasonable assumptions on the quality of the \textit{offline} training process.

\subsection{Probabilistic adaptation guarantees}\label{sec:probabilistic_adapt_guarantees}
Our first contribution is providing probabilistic adaptation guarantees for this model in the form of uniformly calibrated confidence sets which hold under the following assumptions, which are reasonable for common robotic applications. 

\vspace{-2mm}

\begin{myassumption}[Capacity of meta-learned \rev{model})  (\textbf{A2}]\label{assum:capacity_model}
For all $\param$, \rev{$i \,{=}\, 1,{\mydots},n$}, there exists $\kstari(\param) \in \R^{\phidim}$ such that 
$$ \kstari(\param)^\top \feat_i(\x, \ac)  
=
g_i(\x,\ac,\param) \quad  
\forall\x \in \X, \
\forall\ac \in \U.
$$
\end{myassumption}

\begin{myassumption}[Calibration of meta-learned prior) (\textbf{A3}]\label{assum:calib}
\hspace{1.2cm}\\
For $\param \,{\sim}\, p(\param)$, all $i\,{=}\,1,{\mydots},\xdim$, 
and $\delta_i\,{=}\,\delta/(2\xdim)$, \\[-3.5mm]
$$\Prob\big(
\|\kstari(\param) - \kbar_{i,0}\|_{\bLambda_{i,0}}^2 \leq \sigma_i^2 \chi^2_{\phidim}(1-\delta_i)
\big)
\geq
(1\,{-}\,\delta_i).$$ 
\end{myassumption}

\vspace{-1mm}

These two key assumptions on the quality of the \textit{offline} meta-learning process state that 
the meta-learning model is capable of fitting the true dynamics (\textbf{A\ref{assum:capacity_model}}) and 
that the prior uncertainty characterization is conservative (\textbf{A\ref{assum:calib}}).%
Since the features $\feat_i$ and the parameters $(\kbar_{i,0}, \bLambda_{i,0})$ of the Gaussian prior \scalebox{0.95}{$\k_i \,{\sim}\, \N(\kbar_{i,0}, \sigma_i^2 \Linv_{i,0})$} are chosen to maximize the log-likelihood of the dataset under $\param^j \,{\sim}\, p(\param)$, 
the offline meta-training procedure presented in    \cite{HarrisonSharmaEtAl2018} is well-suited to yield a model satisfying \textbf{A\ref{assum:capacity_model}} and \textbf{A\ref{assum:calib}}. 
Importantly, these assumptions can be empirically verified through predictive performance on a validation dataset, and techniques such as temperature scaling can be used to ensure calibration in a post-hoc manner \cite{kuleshov2018accurate}. 
\rev{As discussed in details in Appendix  \ref{sec:appendix:proof_missmatch},} \textbf{A\ref{assum:capacity_model}} can be relaxed to hold approximately for some approximation error \rev{$\Delta_i>0$}: similar to \cite{Fiedler2021}, the resulting analysis   would yield larger confidence sets but is beyond the scope of this work.  
\textbf{A\ref{assum:calib}} states that offline meta-training yields a model with a prior variance $\sigma_i^2 \Linv_{i,0}$ that is large enough: with probability at least $(1\,{-}\,\delta_i)$ over $\param ^j\,{\sim}\, p(\param)$, the true parameters $\kstari(\param)$ belong to the ellipsoidal $(1\,{-}\,\delta_i)$-credible sets predicted by the model. 
We note that \textbf{A\ref{assum:calib}} is weaker than assuming that the model is calibrated (i.e., that \textbf{A\ref{assum:calib}} jointly holds for all values of $\delta_i$), a topic extensively studied in the literature \cite{kuleshov2018accurate}. 
In addition to enabling a theoretical analysis, we also use \textbf{A\ref{assum:capacity_model}} and \textbf{A\ref{assum:calib}} to derive two regularizers improving the performance and adaptation capabilities of \alpaca; we refer the reader to Section \ref{sec:regularizers}.

These assumptions, together with the linear structure \eqref{eq:alpaca_model} and adaptation rule \eqref{eq:bayesian_updates} of the meta-learned model, allow us to define adaptive confidence sets over the model parameters $\k_i$ that are guaranteed to contain the true parameters $\kstari$ with high probability, even as the model adapts using \rev{subgaussian}  process-noise-corrupted measurements.

\begin{theorem}[Uniformly Calibrated Confidence Sets]
\label{thm:conf-sets}
Consider the true system  \eqref{eq:full_problem_dynamics}, modeled using the meta-learning model \eqref{eq:alpaca_model}, which is sequentially updated with online data from \eqref{eq:full_problem_dynamics} using \eqref{eq:bayesian_updates}, leading to the updated parameters $(\kbar_{i,t}, \bLambda_{i,t})$ for each dimension $i{=}1,\mydots,\xdim$. 
Let
\begin{gather}
    \beta_{i,t}^\delta 
    \,{=}\,  \sigma_i \Bigg(
    \sqrt{ 2 \log \left(\frac{1}{\delta_i}\frac{\det(\bLambda_{i,t})^{\nicefrac{1}{2}} }{\det(\bLambda_{i,0})^{\nicefrac{1}{2}}}
    \right) }
    {+}
    \sqrt{\frac{\bar{\lambda}(\bLambda_{i,0})}{\underline\lambda(\bLambda_{i,t})}  \chi^2_{\phidim}(1{-}\delta_i)} \Bigg)
    \label{eq:beta_bound}
    \\[-1mm]
    \text{and}\quad
    \confset_{i,t}^{\delta}(\kbar_{i,t},\bLambda_{i,t}) \,{=}\, \{ \k_i\,{\in}\,\R^d \, {\mid} \,
        \norm{\k_i {-} \kbar_{i,t}}_{\bLambda_{i,t}} {\leq}\,  \beta_{i,t}^\delta
     \}.
     \label{eq:conf_set_kappa}
\end{gather}
Then, under Assumptions \ref{assum:capacity_model} and \ref{assum:calib}, for $\delta_i=\delta/(2\xdim)$,
\begin{gather}
\Prob \left(
    \kstari \in \confset_{i,t}^{\delta}(\kbar_{i,t},\bLambda_{i,t})
    \ \ 
    \forall t\,{\geq}\, 0
    \right) \geq (1-2\delta_i). 
\end{gather}
\end{theorem}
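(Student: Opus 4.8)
The plan is to combine a standard self-normalized concentration argument for the regression noise with the prior-calibration assumption \textbf{A3}, then use a union bound. Let me set up the decomposition. Fix a dimension $i$, and write the true parameter as $\kstari$ (from \textbf{A2}, it exists and realizes the dynamics exactly). From the update rule \eqref{eq:bayesian_updates}, the innovation at each step is $G_{i,\tau+1} - \langle\kstari,\feat_i(\x_\tau,\ac_\tau)\rangle = \epsilon_\tau^i$, so the regression residual is driven purely by the subgaussian process noise. A direct algebraic manipulation of \eqref{eq:bayesian_updates} gives the identity
\[
\bLambda_{i,t}(\kbar_{i,t} - \kstari) = \Phi_{i,t-1}^\top \bm{E}_{i,t} + \bLambda_{i,0}(\kbar_{i,0} - \kstari),
\]
where $\bm{E}_{i,t} = (\epsilon_0^i,\dots,\epsilon_{t-1}^i)^\top$ collects the noise. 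Taking the $\bLambda_{i,t}^{-1}$-norm and applying the triangle inequality yields
\[
\norm{\kbar_{i,t} - \kstari}_{\bLambda_{i,t}} \le \norm{\Phi_{i,t-1}^\top \bm{E}_{i,t}}_{\bLambda_{i,t}^{-1}} + \norm{\bLambda_{i,0}(\kbar_{i,0} - \kstari)}_{\bLambda_{i,t}^{-1}}.
\]

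The first term is bounded uniformly in $t$ by the self-normalized martingale tail inequality of Abbasi-Yadkori et al.~\cite{abbasi2011improved}: with probability at least $1-\delta_i$, for all $t \ge 0$,
\[
\norm{\Phi_{i,t-1}^\top \bm{E}_{i,t}}_{\bLambda_{i,t}^{-1}}^2 \le 2\sigma_i^2 \log\!\left(\frac{1}{\delta_i}\frac{\det(\bLambda_{i,t})^{1/2}}{\det(\bLambda_{i,0})^{1/2}}\right),
\]
which requires checking that $\{\epsilon_\tau^i\}$ is a $\sigma_i$-subgaussian martingale difference sequence with respect to the filtration generated by the states and actions up to time $\tau+1$ — this follows from the noise assumptions in \sec{sec:prob_form} (uncorrelated, subgaussian), since $(\x_\tau,\ac_\tau)$ is $\mathcal{F}_\tau$-measurable and $\epsilon_\tau^i$ is independent of the past. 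For the second term, since $\bLambda_{i,t} = \bLambda_{i,0} + \Phi_{i,t-1}^\top\Phi_{i,t-1} \succeq \bLambda_{i,0}$, we have $\bLambda_{i,t}^{-1} \preceq \bLambda_{i,0}^{-1}$ but more usefully $\bLambda_{i,0}\bLambda_{i,t}^{-1}\bLambda_{i,0} \preceq \bLambda_{i,0}$ requires more care; the bound used in \eqref{eq:beta_bound} is
\[
\norm{\bLambda_{i,0}(\kbar_{i,0} - \kstari)}_{\bLambda_{i,t}^{-1}}^2 \le \frac{\bar\lambda(\bLambda_{i,0})}{\underline\lambda(\bLambda_{i,t})}\norm{\kbar_{i,0} - \kstari}_{\bLambda_{i,0}}^2,
\]
obtained by writing $\norm{\bLambda_{i,0}\bm{v}}_{\bLambda_{i,t}^{-1}}^2 = \bm{v}^\top\bLambda_{i,0}\bLambda_{i,t}^{-1}\bLambda_{i,0}\bm{v} \le \underline\lambda(\bLambda_{i,t})^{-1}\norm{\bLambda_{i,0}\bm{v}}^2 \le \bar\lambda(\bLambda_{i,0})\underline\lambda(\bLambda_{i,t})^{-1}\norm{\bm{v}}_{\bLambda_{i,0}}^2$, and then applying \textbf{A3}, which gives $\norm{\kbar_{i,0} - \kstari}_{\bLambda_{i,0}}^2 \le \sigma_i^2\chi^2_\phidim(1-\delta_i)$ with probability at least $1-\delta_i$. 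Combining the two bounds inside $\{\cdot\}^{1/2}$ via $\sqrt{a+b}\le\sqrt a + \sqrt b$ reproduces exactly $\beta_{i,t}^\delta$ in \eqref{eq:beta_bound}.

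Finally, the event "$\kstari \in \confset_{i,t}^\delta$ for all $t\ge 0$" fails only if the self-normalized bound fails (probability $\le \delta_i$) or the prior-calibration bound fails (probability $\le \delta_i$), so by a union bound it holds with probability at least $1 - 2\delta_i$, which is the claim. I expect the main subtlety to be the filtration/measurability bookkeeping needed to invoke \cite{abbasi2011improved}: one must argue that even though the control inputs $\ac_\tau$ are chosen adaptively (by the exploration policy, which depends on past data hence on past noise), the pair $(\x_\tau,\ac_\tau)$ is still predictable with respect to $\mathcal{F}_\tau$ so that $\epsilon_\tau^i$ remains a conditionally-subgaussian increment — this is what makes the bound hold uniformly in $t$ and is precisely the point the paper emphasizes (and that \cite{NakkaRAL2020} reportedly got wrong). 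A secondary point is that $\bm{E}_{i,t}$ has length $t$ but the self-normalized bound is dimension-free in the noise, so no dependence on horizon enters; this is what gives the "independent of total time" property highlighted in the contributions.
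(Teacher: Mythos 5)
Your proposal is correct and follows essentially the same route as the paper's proof: decompose $\bLambda_{i,t}(\kbar_{i,t}-\kstari)$ into the noise term $\Phi_{i,t-1}^\top\bm{E}_{i,t}$ plus the prior-mismatch term, bound the former uniformly in $t$ via the self-normalized martingale inequality of \cite{abbasi2011improved}, bound the latter by the eigenvalue ratio $\bar\lambda(\bLambda_{i,0})/\underline\lambda(\bLambda_{i,t})$ together with Assumption \ref{assum:calib}, and combine with Boole's inequality to get $1-2\delta_i$. The only cosmetic difference is that you apply the triangle inequality directly in the $\bLambda_{i,t}^{-1}$-norm, whereas the paper runs the equivalent dual argument (Cauchy--Schwarz with an arbitrary test vector $\ba$, then setting $\ba=\bLambda_{i,t}(\kbar_{i,t}-\kstari)$), and your filtration remark matches the measurability setup in the paper's Lemma on self-normalized bounds.
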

A proof of this result\rev{, which follows from \cite[Theorem 2]{abbasi2011improved},} is available in the Appendix. 
Using Boole's inequality\rev{,  the $\sigma_i$-subgaussianity of the disturbances $\ep_t$,} and Assumptions \ref{assum:capacity_model} and \ref{assum:calib}, Theorem \ref{thm:conf-sets} is derived by adapting results from the literature on linear contextual bandits \cite{abbasi2011improved}, which applies a stopping time argument to bound the probability of the \textit{bad} event that the true parameters lie outside $\confset_{i,t}^{\delta}$ \textit{at some time $t$}. 
Theorem \ref{thm:conf-sets} provides confidence sets over model parameters which hold \textit{uniformly over all future times}. 
This is critical to ensure satisfying the trajectory-wise chance constraint 
\eqref{eq:joint_chance_constraint} despite an unknown final time $N$. 
To derive Theorem \ref{thm:conf-sets}, we assume there exist fixed true parameters $\kstari$ corresponding to the true dynamics (\textbf{A\ref{assum:capacity_model}}), and we take a frequentist viewpoint such that the confidence set $\smash{\confset_{i,t}^\delta}$ is a stochastic function of the transition tuples observed online. 
 By defining $\smash{\confset_{i,t}^\delta}$ through time-dependent values of $\smash{\beta_{i,t}^\delta}$, we ensure that the event that $\smash{\confset_{i,t}^\delta}$ excludes the true parameters $\kstari$ \textit{at any time $t$} occurs with probability less than $(\delta/n)$.

Theorem \ref{thm:conf-sets} is similar to results derived for kernel-based GPs \cite{Srinivas2010,Fiedler2021} which rely on a good choice of kernel to satisfy assumptions similar to \textbf{A\ref{assum:capacity_model}} and \textbf{A\ref{assum:calib}}; we discuss similarities in the next section. 
Specifically, the scaling factor $\beta_{i,t}^\delta$ in Theorem \ref{thm:conf-sets} also appears when analyzing kernel-based GPs, but the value of $\beta$ of such models is often too large for practical use and is set to a lower value for experiments, see \cite{berkenkamp2017safe,koller2018}. 
In contrast, our 
model operates within a finite-dimensional feature space and has values of $\beta$ that are practically usable.

\subsection{Practical implications for better meta-training}\label{sec:regularizers}
These theoretical results offer guidance on how to add regularization to the offline meta-learning process for the purposes of safe learning-based control. Specifically, we propose to add two regularizers to the offline meta-training of \alpaca. 
This regularization does not compromise on safety as long as the regularized model still meets Assumptions \ref{assum:capacity_model} and \ref{assum:calib}.

\vspace{1mm}

\textbf{Orthogonal features are better:} 
Our theory relies on \textbf{A\ref{assum:capacity_model}}, which holds as long as each component $g_i(\cdot,\cdot,\param)$ of the true dynamics lies in the Hilbert space
$$\mathcal{H}_{\feat_i}=\{\hat{g}_{i,\k_i}{:}\ \R^{n}\,{\times}\,\R^m\,{\rightarrow}\,\R \ |\  \hat{g}_{i,\k_i}=\k_i^\top \feat_i, \ \k_i\in\R^d\}.$$ 
Equivalently, \textbf{A\ref{assum:capacity_model}} states that each $g_i(\cdot,\cdot,\param)$ belongs to the RKHS with reproducing kernel 
$\sigma_i^2 \feat_i(\cdot,\cdot)^{\top} \bLambda_{i}^{-1}\feat_i(\cdot,\cdot)$. 
Together, \textbf{A\ref{assum:capacity_model}} and \textbf{A\ref{assum:calib}} state that with probability at least $(1-\delta_i)$, each component $g_i(\cdot,\cdot,\param)$ belongs to \rev{a} bounded \rev{subset of the}  RKHS \rev{$\mathcal{H}_{\feat_i}$, defined as}
\begin{align*}
\rev{\mathcal{B}_{\feat_i}^{\mathcal{H}}}
\,{=}\,
\bigg\{\hat{g}_{i,\k_i}{:}\,
\R^{n}{\times}\,\R^m{\rightarrow}\,\R
\ \bigg| %
\begin{array}{l}
\hat{g}_{i,\k_i}=\k_i^\top \feat_i, 
\ \ \ 
\k_i\in\R^d
\\
\|\k_i \,{-}\, \kbar_{i{,}0}\|_{\bLambda_{i,0}}^2 {\leq}\, \sigma_i^2 \chi^2_{\phidim}(1{-}\delta_i)
\end{array}
\hspace{-2mm}
\bigg\}.
\end{align*}
\textbf{A\ref{assum:capacity_model}} and \textbf{A\ref{assum:calib}} are thus comparable to statements in related work on learning-based control with squared-exponential kernel GPs. For instance, \cite{berkenkamp2017safe,koller2018} assume the true dynamics lie in a RKHS of known bounded norm, see also \cite{Srinivas2010}.

This observation suggests that we should meta-learn basis functions $\feat_i$ corresponding to a \textit{large} space of models $\mathcal{H}_{\feat_i}$. 
Since $\mathcal{H}_{\feat_i}$ contains linear combinations of the elements of $\feat_i$, \textbf{A\ref{assum:capacity_model}} thus  motivates learning features that are \textit{orthogonal}. To this end, we propose augmenting the meta-learning objective with the following regularizer:
\begin{equation*}
\mathcal{L}_\mathrm{\perp reg}(\W) = \alpha_\mathrm{\perp reg} \sum_{i=1}^n \sum_{\ell=1}^{L+1} 
    \|
    \rev{\bm{I}^\ell}-\rev{\W_i^{\ell\top}\W_i^\ell}
    \|_F^2,
    \vspace{-1mm}
\end{equation*}
where $\alpha_\mathrm{\perp reg}\,{>}\,0$ controls the strength of this regularization, 
the \rev{$\W_i^\ell$} are the weights of the neural network basis functions $\feat_i$, and each \rev{$\bm{I}^\ell$} is an identity matrix of appropriate dimensions. 
As derived, theoretically analyzed, and empirically validated in \cite{Jia2019}, this regularizer improves generalization.  
Also, it incurs marginal computational overhead compared to the orthogonality regularizer for \alpaca previously proposed in \cite{BanerjeeHarrisonEtAl2020} to encourage interpretability of the model. 
Beyond generalization, for our model with last layer adaptation, orthogonal features also lead to better statistical properties when adapting each $\k_i$ through least-squares estimation with \eqref{eq:bayesian_updates}, see \cite{GreeneEconometrics}. %

\vspace{1mm}

\textbf{$\boldsymbol\beta$-regularization:} 
Another aspect which can be regularized during offline meta-learning to improve downstream performance is the scaling factor $\beta_i$, which controls the size of the confidence sets for the model parameters $\k_i$. 
With expressive neural network features $\feat_i$, multiple settings of the neural network weights and of the prior parameters $(\smash{\kbar_{i,0}, \bLambda_{i,0}})$ can yield a model satisfying Assumptions \ref{assum:capacity_model} and \ref{assum:calib}. As larger values of $\beta_i$ yield more conservative confidence sets, it is preferable to select features and prior parameters that lead to lower values of $\beta_i$. To do so, we propose adding regularization to encourage meta-learning representations with lower values of $\beta_i$ without compromising on safety.

We note from \eqref{eq:beta_bound} that 
the value of each $\beta_i$ depends on the ratio between the maximum and minimum eigenvalues of the prior and posterior precision matrices $\bLambda_i$. 
If $\bar\lambda(\bLambda_{i,0}) \le 1$  (which can be enforced by rescaling the features $\feat_i$), then 
\begin{align*}
\frac{
\bar\lambda(\bLambda_{i,0})}{\underline\lambda(\bLambda_{i,t})}
=
\frac{\bar\lambda(\bLambda_{i,t}^{-1})}{\underline\lambda(\bLambda_{i,0}^{-1}) }
\leq 
\bar\lambda(\bLambda_{i,t}^{-1})
\underline\lambda(\bLambda_{i,0}^{-1})
\leq 
\bar\lambda(\bLambda_{i,t}^{-1})
\bar\lambda(\bLambda_{i,0}^{-1})\rev{,}
\end{align*} 
\rev{for any $t\in\mathbb{N}$}. 
Furthermore, $\smash{\bar\lambda(\bLambda)
    \leq
    \sqrt{\textrm{Tr}(\bLambda^\top\bLambda)}
}$. 
Combining with the above, we propose to regularize  an upper bound of the ratios $\bar\lambda(\bLambda_{i,0})/\underline\lambda(\bLambda_{i,\rev{T_\textrm{ML}}})$ during offline meta-training, namely: 
\begin{equation}\label{eq:beta_regularizer}
    \mathcal{L}_\mathrm{reg}(\bLambda_{0}) = \alpha_\mathrm{reg} \sum_{i=1}^{\xdim} 
    \textrm{Tr}(\bLambda_{i,\rev{T_\textrm{ML}}}^{-T}\bLambda_{i,\rev{T_\textrm{ML}}}^{-1})
    \textrm{Tr}(\bLambda_{i,0}^{-T}\bLambda_{i,0}^{-1}),
\end{equation}
where $\alpha_\mathrm{reg}>0$ controls the strength of this regularization and is selected using a validation dataset and $\rev{T_\textrm{ML}}$ is the meta-training horizon. 
As the model is directly parameterized by the inverse of the precision matrices $\bLambda_i$ \cite{HarrisonSharmaEtAl2019}, this regularizer can easily be added to the standard training loss \eqref{eq:alpaca_loss}.

From \eqref{eq:beta_bound}, we observe that $\beta_i$ also depends on the ratio of determinants of the prior and posterior precision matrices $\smash{\det(\bLambda_{i,t})/\det(\bLambda_{i,0})}$. 
Although a convex regularizer for this term can be derived, we found that including it  did not lead to performance improvements. 
This ratio can be interpreted as capturing the amount of information that the model has gathered online, which is independent of the structure of the prior model. 
Before learning, this ratio is $1$, so the other term composed of the ratio of eigenvalues dominates $\beta_i$. 
We observed that it is during these early stages that the meta-training model and its bounds $\beta_i$ are most conservative, which could explain the importance of the regularizer in \eqref{eq:beta_regularizer}, whereas regularizing the ratio of determinants appears to make little difference.

\section{Sequential Exploration and Exploitation for Learning Safely (SEELS)}\label{sec:seels}

In order to ensure safety at all times with high probability and eventually reach $\smash{\Xgoal}$, 
we  provide a conservative deterministic reformulation of \ccocp, 
propose an algorithm tackling problems with high uncertainty, 
analyze its safety and feasibility properties, and 
discuss implementation.

\subsection{Relaxation of \ccocp using reachability analysis}

In order to reason about how parameter uncertainty manifests in  terms of potential system behavior in the state space (and whether it might violate safety constraints), we translate confidence sets over parameters into confidence \textit{tubes} over trajectories. 
We stress that confidence tubes jointly depending on the parameters are required to satisfy the trajectory-wise chance constraint \eqref{eq:joint_chance_constraint}, since enforcing a pointwise chance constraint at each timestep as in \cite{LewBonalli2020,hewing2018cautious,
Polymenakos2020,
Khojasteh_L4DC20,
ChengKhojasteh2020}  
does not guarantee safety of the whole trajectory \cite{FreyRSS2020,SchmerlingPavone2017}.

\begin{figure*}[!t]%
\begin{minipage}{1\textwidth}
\centering
\includegraphics[width=0.245\linewidth,trim=52 33 35 0, clip]{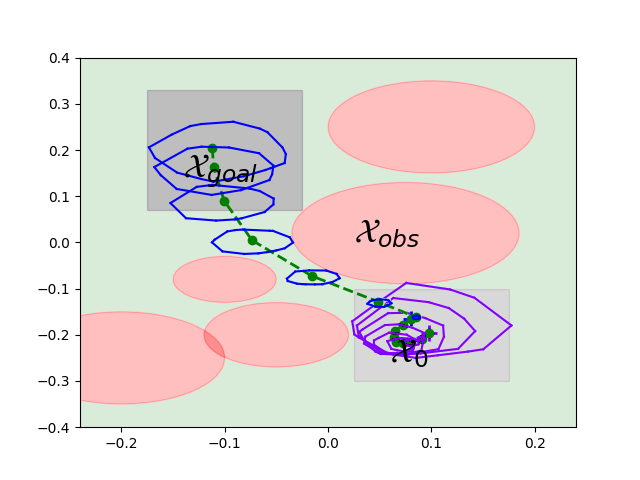}
\includegraphics[width=0.245\linewidth,trim=52 33 35 0, clip]{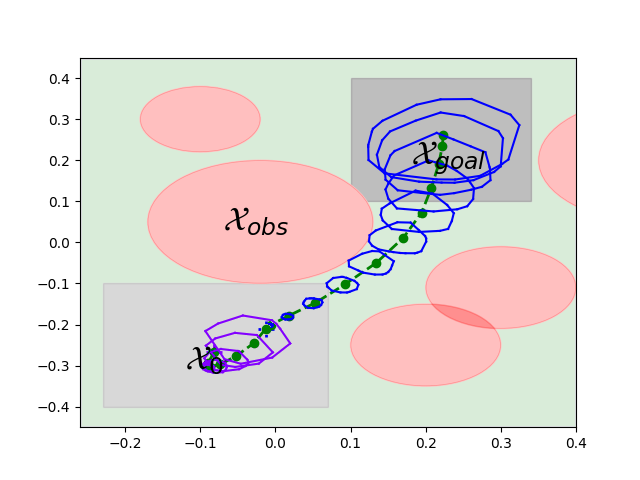}
\includegraphics[width=0.245\linewidth,trim=52 33 35 0, clip]{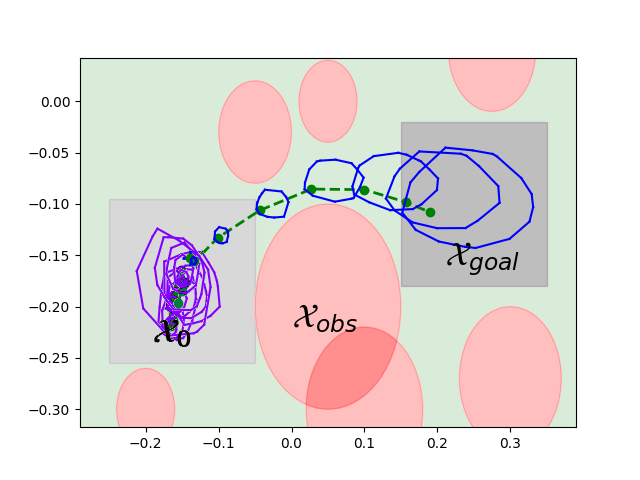}
\includegraphics[width=0.245\linewidth,trim=52 33 35 0, clip]{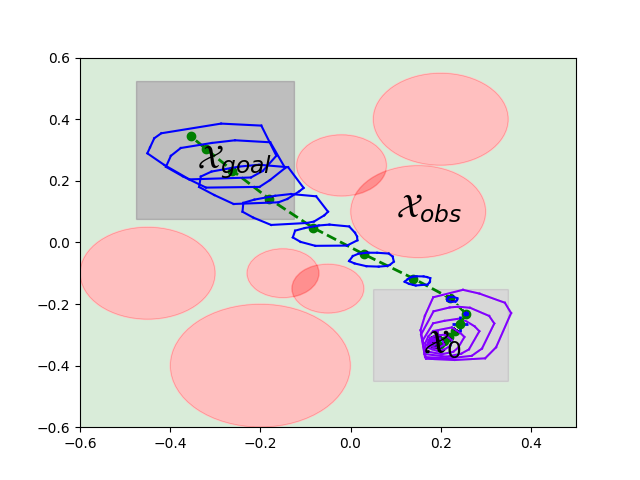}
\caption{Initially, uncertainty is too high to safely reach the goal. Instead, we plan safe information-gathering trajectories to infer the dynamics and reduce uncertainty.  
Once planning to $\Xgoal$ is feasible, the robot can safely reach the goal while satisfying all constraints with high probability.  
Color legend: 
\textcolor{dartmouthgreen}{true trajectory}, 
\textcolor{blue-violet}{reachable sets for exploration} \eqref{eq:confidence_tubes}, 
\textcolor{blue}{reachable sets for exploitation}. This experiment is described in Section \ref{sec:sim:ff}.}
\label{fig:ff_sim_exps}
\end{minipage}
\end{figure*}

We construct these tubes by leveraging the confidence sets $\confset_{i,t}^\delta$.
Indeed, 
a direct consequence of Theorem \ref{thm:conf-sets} is that%
$$
\Prob\big(
\exists \k_i \in \confset_{i,t}^{\delta}(\kbar_{i,t},\bLambda_{i,t})
    \ 
    \forall t%
    \
    \textrm{s.t.}
    \
    \k_i^\top\feat_i \,{=}\, g_i(\cdot,\cdot,\rev{\param})
    \big) 
    \,{\geq}\,(1-2\delta_i). 
$$
This implies that a control trajectory respecting all constraints for all $\k_i\in\confset_{i,t}^{\delta}$ and $\rev{\epsilon_{i,t}}\in\mathcal{E}_i$ also satisfies \eqref{eq:joint_chance_constraint}. 
Thus, to conservatively reformulate \eqref{eq:joint_chance_constraint}, we propose performing reachability analysis using the sets $\confset_{i,t}^{\delta}$ and $\mathcal{E}_i$.
Specifically, given a sequence of open-loop control inputs 
$\ac\,{=}\,(\ac_0,\mydots,\ac_{N{-}1})$, we define the sequence of reachable sets\rev{, for $k{=}1,{\mydots},N$, as}\footnote{This definition closely follows \cite{randSets} for the specific case of a sequence of open-loop control inputs. One can account for a nominal feedback controller as a simple extension to reduce the tube size \cite{koller2018,randSets,LewBonalli2020,hewing2018cautious}. 
In our simulated experiments, we omit feedback to better demonstrate the adaptation capabilities of the meta-training model, show the tightness of the confidence sets, and better verify the safety claims of the framework.}
\begin{gather}
\hspace{-1.4mm}
\X_k^{t,\delta}(\ac) \,{=}\, \bigg\{ 
	\x_k {=} \f(\cdot, \ac_{{\scalebox{.7}{$k{-}1$}}}, \k, \ep_{{\scalebox{.7}{$k{-}1$}}})
	\circ
	\mydots
	\circ
	\f(\x_0, \ac_0, \k, \ep_0)
	\nonumber
\\[-1mm]
\hspace{1mm}
\big|\  
\x_0\,{=}\,\x(t), \,
\k_i\,{\in}\,\confset_{i,t}^{\delta}, \,
\rev{\epsilon_{i,s}}\,{\in}\,\mathcal{E}_i,
\, \mbox{\footnotesize $s{=}1,{\mydots},k{-}1, 
\,  i{=}1,{\mydots},\xdim$} 
\bigg\}
\label{eq:confidence_tubes}
,
\end{gather}
\rev{where }$f_i(\x,\ac,\k,\rev{\ep_t})=h_i(\x, \ac) + \k_i^\top\hspace{-1mm} \feat_i(\x,\ac) + \rev{\epsilon_{i,t}}$.

By quantifying the uncertainty of the system dynamics and of the online learning process using confidence sets over the model parameters $\k$ and by subsequently transforming these sets into reachable sets in the state space, we relax the original chance-constrained problem into a deterministic one:

\vspace{2mm}
\centerline{\rev{$\ReachOCP(N)$}
\vspace{-4mm}
}
\begin{gather*}
\hspace{-1mm}
\mathop{\text{min}}_{\bmu,\ac}
\ 
\sum_{k=0}^{\rev{N-1}} \cost(\bmu_k, \ac_k) 
\
\text{s.t.}
\begin{array}{l}
\bigwedge_{\scalebox{.8}{$k{=}1$}}^N \X_{k}^{t,\delta}\rev{(\ac)} \,{\subseteq}\, \Xsafe, 
\ 
\X_{N}^{t,\delta}\rev{(\ac)} \,{\subseteq}\, \Xgoal,
\\[2mm]
\bigwedge_{\scalebox{.8}{$k{=}0$}}^{N{-}1}
\ac_{k}\,{\in}\,\U,  
\ 
\X_{0}^{t,\delta} \,{=}\, \{\x(t)\},
\end{array}
\end{gather*}
where $\bmu=(\smash{\bmu_0,{\mydots},\bmu_{N}})$ \rev{with $\bmu_0=\x(t)$} are the centers of the reachable sets $\smash{\{\X_k^{t,\delta}\rev{(\ac)}\}_{k{=}1}^N}$ in \eqref{eq:confidence_tubes}, which depend on the control trajectory $\ac$. 
Since the cost function $\ell$ typically penalizes the deterministic open-loop control inputs $\ac$, 
we rewrite the expected cost in \eqref{eq:expected_cost} using a mean-equivalent reformulation as in \cite{LewBonalli2020}\footnote{As we use a sampling-based approach to compute the reachable sets \cite{randSets}, the variance associated with these samples could be used to approximate the variance of the cost, or to minimize a given risk metric \cite{SinghChowEtAl2018b}.}. 
\rev{This approach of considering a tube of reachable states and of evaluating the cost along a nominal trajectory is common in  the literature on robust MPC, see e.g., \cite{RawlingsMayne2013,Limon2009,koller2018}}. Note that the constraint-satisfaction guarantees of our approach do not depend on the chosen reformulation of \eqref{eq:expected_cost}.

\subsection{\algname: algorithm for safe learning and reaching $\Xgoal$}

Due to high dynamics uncertainty, tight control constraints, and long planning horizons, \ReachOCP may be infeasible. 
This motivates a safe learning-based exploration-exploitation framework to sequentially reduce uncertainty and eventually reach $\Xgoal$. 
Our approach is based on a repeated two-phase approach: 
when \ReachOCP is feasible, we enter the \textit{exploitation} phase and plan a safe trajectory to $\Xgoal$ with the current model uncertainty.
In the \textit{exploration} phase, we instead strictly perform safe exploration, planning an information-gathering trajectory that returns to the initial safe invariant set $\X_0$.  
The corresponding robust optimal control problem is written as 

\vspace{2mm}
\centerline{\rev{$\ExploreOCP(N)$}
\vspace{-4mm}
}
\begin{gather*}
\hspace{-1mm}
\mathop{\text{max}}_{\bmu,\ac}
\ 
\sum_{k=0}^{\rev{N-1}} (\cost{+}\costinfo)(\bmu_k, \ac_k) 
\
\text{s.t.}
\begin{array}{l}
\bigwedge_{\scalebox{.8}{$k{=}1$}}^N \X_{k}^{t,\delta} \,{\subseteq}\, \Xsafe, 
\ 
\X_{N}^{t,\delta} \,{\subseteq}\, \X_0,
\\[2mm]
\bigwedge_{\scalebox{.8}{$k{=}0$}}^{N{-}1}
\ac_{k}\,{\in}\,\U,  
\ 
\X_{0}^{t,\delta} \,{=}\, \{\x(t)\},
\end{array}
\end{gather*}
where the sequence of reachable sets  $\{\X_k^{t,\delta}\}_{k{=}1}^N$ satisfies \eqref{eq:confidence_tubes}. 
\ExploreOCP is similar to \ReachOCP but instead uses $\X_0$ as the final set, thus ensuring the system is safe for the next phase with high probability. Also, \ExploreOCP uses an information-gathering cost function $\costinfo$ to encourage visiting states which reduce the remaining uncertainty in the dynamics \begin{equation}
   \label{eq:info_cost}
   \costinfo(\x, \ac) = \frac{1}{2}\sum_{i=1}^\xdim \log(1+\feat_i(\x,\ac)^\top \bLambda_{i,t}^{-1} \feat_i(\x,\ac)).
\end{equation}
In the next section, we derive $\costinfo$ from the mutual information between the unknown dynamics and the observations.

\begin{figure}[!t]
\centering
\vspace{-3mm}
\begin{minipage}{0.98\linewidth}
\begin{algorithm}[H]
\caption{Sequential Exploration and Exploitation for Learning Safely (\algname)}\label{alg:explo_reach_algalg}
\vspace{1mm}
\textbf{Input}: Meta-training model satisfying A.\ref{assum:capacity_model} and A.\ref{assum:calib} \rev{with prior parameters $(\kbar_{i,0},\bLambda_{i,0})_{i=1}^n$, 
min/maximum  exploration and reaching horizons $(\underline{N}_{\textrm{info}},\overline{N}_{\textrm{info}},\underline{N}_{\textrm{reach}},\overline{N}_{\textrm{reach}})$}

\vspace{1mm}
\rev{\textbf{Output}: Updated model parameters $(\kbar_{i,t},\bLambda_{i,t})_{i=1}^n$}
\vspace{1mm}
\begin{algorithmic}[1]
\While {$\x(t) \notin \Xgoal$}
\For {$\rev{N_r} \in \{\underline{N}_{\textrm{reach}},\mydots,\overline{N}_{\textrm{reach}}\}$} \Comment{\textit{Try reaching}}
    \If {\rev{$\ReachOCP(N_r)$} feasible} 
            \State \rev{$(\bmu,\ac) \gets$ Solve$(\ReachOCP(N_r))$}
        \State Apply $\rev{\ac}$ to true system
        \Comment{\textit{Reach}}
	    \State \textrm{Break}
    \EndIf
\EndFor
\State $\vec{\ell}_{\textrm{info}},\vec{\ac}_{\textrm{info}}\gets\{0,\mydots,0\},\{\emptyset,\mydots,\emptyset\}$
\For {$N_{\textrm{e}} \in \{\rev{\underline{N}_{\textrm{info}},\mydots,\overline{N}_{\textrm{info}}}\}$} \Comment{\textit{Explore}}
    \If {\rev{$\ExploreOCP(N_{\textrm{e}})$} feasible} 
\State \rev{$(\bmu^{\textrm{e}},\ac^{\textrm{e}}) \gets$ Solve$(\ExploreOCP(N_{\textrm{e}}))$}
\vspace{1mm}\State \rev{$\vec{\ell}_{\textrm{info}}[N_{\textrm{e}}]\gets\sum_{k=0}^{N_{\textrm{e}}{-}1}\costinfo(\bmu_k^{\textrm{e}},\ac_k^{\textrm{e}})$}
\State \rev{$\vec{\ac}_{\textrm{info}}[N_{\textrm{e}}]\gets\ac^{\textrm{e}}$}
    \EndIf
\EndFor
\State \rev{$N_{\textrm{e}}^{\textrm{best}}\gets \mathop{\arg\max} \vec{\ell}_{\textrm{info}}$}
    \Comment{\textit{Get best} \rev{$N_{\textrm{e}}$}}
    \State \rev{Start recording trajectory data}
\State Apply \rev{$\vec{\ac}_{\textrm{info}}[N_{\textrm{e}}^{\textrm{best}}]$} %
to true system
    \State \rev{Save data $\D_e=\{(\x_{t+k}, \ac_{t+k}, \x_{i,t+k+1})\}_{k{=}0}^{\rev{N_{\textrm{e}}^{\textrm{best}}}{-}1}$}
\State Update parameters $(\rev{\kbar_{i,t+N_{\textrm{e}}^{\textrm{best}}},\bLambda_{i,t+N_{\textrm{e}}^{\textrm{best}}}})$ with $\rev{\D_e}$ 
\State \rev{$t\gets t+N_{\textrm{e}}^{\textrm{best}}$}
\EndWhile %
\State\Return $\rev{(\kbar_{i,t},\bLambda_{i,t})_{i=1}^n}$
\end{algorithmic}
\end{algorithm}
\end{minipage}
\vspace{-4mm}
\end{figure}

\textbf{SEELS:} We summarize our approach in Algorithm \ref{alg:explo_reach_algalg} (\algName) \rev{and show illustrative simulation results in Figure \ref{fig:ff_sim_exps}. \algname} consists of iteratively learning a model of the dynamics by solving \ExploreOCP before reaching $\Xgoal$ when \ReachOCP admits a feasible solution. \rev{We perform a search over multiple horizons $N_r$ and $N_{\textrm{e}}$, as the feasibility of each problem depends on the optimization  horizon (see Sections \ref{sec:seels:info} and \ref{sec:seels:implementation}))}. 
This split yields a tractable sequence of trajectory optimization problems, although it induces sub-optimality relative to the computationally intractable problem of simultaneous exploration and exploitation \cite{bar1974dual}. 
Since \textit{safely} learning nonlinear dynamics with minimal assumptions remains an open area of research (see Section \ref{sec:related_work}), quantifying this sub-optimality gap (perhaps by computing the regret of the algorithm) is beyond the scope of this work.

\subsection{Information objective}\label{sec:seels:info}

During the exploration phase, we perform trajectory optimization with an objective function $\costinfo$ that encourages visiting states and taking actions that reduce uncertainty over the unknown dynamics. To do so, a natural objective to maximize is the mutual information between the unknown dynamics $\g$ and the observations of the state. 
This cost characterizes the \textit{information gain} \cite{MacKay1992,Srinivas2010,Chowdhury2017} about $\g$, gained from applying the control inputs $\ac$ to the true system from a state $\x$ and observing the next state corrupted by random disturbances.

To derive this objective, we leverage the Bayesian interpretation of \alpaca which is used during offline meta-training to obtain a model that has the correct structure (Assumption \ref{assum:capacity_model}) and is calibrated (Assumption \ref{assum:calib}).  
Specifically, \alpaca is trained assuming zero-mean Gaussian-distributed observation noise $\rev{\epsilon_{i,t}}$ of variance $\smash{\sigma_i^2}$ and a Gaussian prior over model parameters $\smash{\k_i}$. %
Using \eqref{eq:bayesian_updates} and $t$ transitions from the system, this yields the posterior distribution  $\smash{\k_i\sim\N(\kbar_i, \sigma_i^2\bLambda_{i,t}^{-1})}$. 
In this setting, the marginal distribution over observations $\smash{\tilde{x}^{+}_i} =  \smash{\rev{x_{i,t{+}1}}-h_i(\x_t,\ac_t)=\k_i^\top\feat_i(\x_t,\ac_t)+\rev{\epsilon_{i,t}}}$ given an arbitrary state $\x_t$ and control input $\ac_t$ is a normal distribution $\N(\kbar_i \feat_i, (1+\feat_i^\top \bLambda^{-1}_{i,t} \feat_i)\sigma_i^2)$, where $\feat_i\,{=}\,\feat_i(\x_t,\ac_t)$.

For this formulation, the mutual information $\mathcal{I}$ between the observation $\smash{\tilde{\x}^{+}\,{=}\, \x_{t{+}1} {-} \h(\x_{t},\ac_{t})}$ and the true unknown dynamics $\g$ can be derived in closed-form. 
It is defined using the entropy $\mathcal{H}(\cdot )$, which for a random variable $\smash{\x^{+}}\sim\mathcal{N}(\bmu , \bSigma )$ evaluates to $\mathcal{H}(\x) = (1/2)\text{log}(\text{det}(2\pi e \bSigma) )$. Hence, the information gain from observing the scalar random variable $\smash{x^{+}_i}$ can be expressed as 
$\mathcal{I}(x^{+}_i ; g_i) = \mathcal{H}(x^{+}_i) - \mathcal{H}(x^{+}_i | g_i)
= \frac{1}{2}( 
\log(\text{var}(x^{+}_i)) -\log(\text{var}(x^{+}_i | g_i)
)
= \frac{1}{2}( 
\log((1+\feat_i^\top\bLambda_{i,t}^{-1}\feat_i)\sigma_i^2)) -\log(\sigma_i^2))
)
= 
\frac{1}{2}( 
\log(1+\feat_i^\top\bLambda_{i,t}^{-1}\feat_i))$.  
This quantity expresses the information gain from observing each dimension $i$ of the state, which are modeled independently in our formulation. By summing $\mathcal{I}(x^{+}_i ; g_i)$ over $i=1,\mydots,n$, we arrive at the exploration objective $\costinfo$ in \eqref{eq:info_cost}.

The objective $\costinfo$ is a function of the current information state of the model, specified by the updated precision matrices $\smash{\bLambda_{1,t}}, \mydots, \smash{\bLambda_{\xdim,t}}$. It explicitly encourages taking actions and visiting states in the feature space spanned by $\feat$ which have highest variance $\smash{\bLambda_{i,t}^{-1}}$. The resulting observations are then the most informative in terms of reducing uncertainty over $\g$.

Note that the expected information gain along a trajectory is not simply the sum of the expected information gains per transition, as expressed in \ExploreOCP when summing \eqref{eq:info_cost} over $k=0,\mydots,\rev{N-1}$. 
However, correctly computing the expected information gain along the trajectory would require factoring in model updates along the trajectory; we find that considering the sum of single-transition information gain with the current precision matrices $\bLambda_{i,t}$ is sufficient in guiding exploration. 
Also, while in this work we assume bounded (non-Gaussian) noise corrupting our measurements, %
we find that using this information objective %
works well in practice to encourage exploration. 
The problem of optimal exploration is beyond the scope of this framework.

Finally, we note that this cost function $\costinfo$ does not suffer from computational complexity scaling with the amount of data, as is the case for similar objectives derived for squared-exponential kernel GPs \cite{koller2018,Srinivas2010}. This computational efficiency comes from choosing finite-dimensional basis functions to parameterize the model class for the true dynamics $\g$.

\subsection{Probabilistic safety and feasibility guarantees}

Next, we prove that 
\seels enjoys probabilistically guaranteed safety and ensures that \ExploreOCP remains feasible at all times with high probability, which is crucial to enable autonomous operation. Our next result also states that provided that \ReachOCP is eventually feasible, the agent will safely reach $\Xgoal$ with high probability. This result characterizes our proposed approach as a feasible solution to \ccocp.

\begin{theorem}[Probabilistic recursive feasibility and safety]\label{thm:probabilistic_feasibility_safety}
With the confidence sets defined in \eqref{eq:conf_set_kappa} and the reachable sets satisfying \eqref{eq:confidence_tubes}, \rev{apply Algorithm \ref{alg:explo_reach_algalg}} %
to sequentially solve \ExploreOCP and \ReachOCP and to control the uncertain system \eqref{eq:full_problem_dynamics}. 
Then, under Assumptions \ref{assum:invariant_X0}\rev{-}\ref{assum:calib}\rev{:}
\begin{itemize}[leftmargin=3.5mm]
  \setlength\itemsep{0.5mm}
\item \rev{With probability at least $(1-\delta)$, t}here exists \rev{an} optimization horizon $\rev{N_{\textrm{e}}}$ ensuring the feasibility of \rev{$\ExploreOCP\rev{(N_{\textrm{e}})}$ in} each %
exploration \rev{phase in line $9$ of Algorithm \ref{alg:explo_reach_algalg}}.
\item \rev{With probability at least $(1-\delta)$, during exploration, the state trajectory satisfies all constraints $\x_t\in\Xsafe$ at all times $t$ and finishes in $\X_0$ at the end of each phase.}
\item  \rev{With probability at least $(1-\delta)$, a}ssuming that \ReachOCP is eventually feasible,  
the system is guaranteed to satisfy \eqref{eq:joint_chance_constraint}, i.e., to be safe at all times and eventually reach $\Xgoal$. 
\end{itemize}

\end{theorem}

\begin{proof} 
First, we prove that \seels is recursively feasible with high probability, before proving that the system satisfies \rev{all constraints during exploration and that  \eqref{eq:joint_chance_constraint} is satisfied if \ReachOCP is eventually feasible.}

\vspace{1mm}

\textit{Probabilistic recursive feasibility}: Let $\ninfo$ be the number of exploration phases before \ReachOCP becomes feasible\footnote{\rev{$\ninfo$ is a random variable that can be defined as a stopping time with respect to the natural filtration associated to the stochastic process $\x_t$}. This result also holds if \ccocp is not feasible and the algorithm can never solve $\ReachOCP$  (e.g., if $\Xgoal$ is surrounded by obstacles). Indeed, if the algorithm is stuck in an infinite number of exploration steps, the last inequality of this proof still holds for $\ninfo\rightarrow\infty$, by Theorem \ref{thm:conf-sets}.}. %
Also, let $\Ninfo^j$, and $t_j =\sum_{l=1}^{j-1} \Ninfo^l$ be, respectively, the planning horizon and the start time index of each $\ExploreOCP_j$, \rev{where $j=1,\mydots,\ninfo$}.  
For conciseness, define $\EOCP_j$ for $\{\ExploreOCP_{j} \ \textrm{is feasible}\} $, corresponding to the event that the $j$-th exploration problem is feasible. 
Then, %
\begin{align*}
\Prob\bigg(
\bigwedge_{j=\rev{1}}^{\ninfo}
\EOCP_j
\bigg)
&\geq
\Prob\bigg(
\bigwedge_{j=\rev{1}}^{\ninfo}
\EOCP_j, \ 
\x_{t_{\ninfo}} \in \X_0, \ 
\bigg)
\\[-3mm]
&
\hspace{-1.3cm}
=
\Prob\bigg(
\EOCP_{\ninfo} \ | \
\bigwedge_{j=\rev{1}}^{\ninfo{-}1}
\EOCP_j, \
\x_{t_{\ninfo}} {\in}\, \X_0
\bigg)
\\[-3mm]
&\cdot
\Prob\bigg(
\bigwedge_{j=\rev{1}}^{\ninfo{-}1}
\EOCP_j, \
\x_{t_{\ninfo}} {\in}\, \X_0
\bigg)
.
\end{align*}
By Assumption \ref{assum:invariant_X0}, given that $\x_{t_j}\in\X_0$, $\ExploreOCP_j$ is feasible for any $j$-th exploration phase.  
Indeed, choose $\smash{\ninfo^j=1}$ for $\ExploreOCP_j$. 
Then, $\smash{\ac_0^j=\boldsymbol\pi(\x_{t_j})}$ is a feasible solution to $\ExploreOCP_j$. 
Thus, the event $\{\EOCP_j \ | \
\x_{t_j} \in \X_0\}$ holds with probability one.  
In particular, this implies that the event  $$\{\EOCP_{\ninfo} \ | \
\bigwedge_{j=\rev{1}}^{\ninfo{-}1}
\EOCP_j,
\x_{t_{\ninfo}} {\in}\, \X_0\}$$ holds with probability one.  

Next, we %
leverage our confidence sets:
\begin{align*}
\Prob\bigg(
\bigwedge_{j=\rev{1}}^{\ninfo}
\EOCP_j
\bigg)
&\geq
\Prob\bigg(
\bigwedge_{j=\rev{1}}^{\ninfo{-}1}
\EOCP_j, \
\x_{t_{\ninfo}} {\in}\, \X_0
\bigg)
\\[-2mm]
&
\hspace{-2cm}\geq
\Prob\bigg(
\bigwedge_{j=\rev{1}}^{\ninfo{-}1}
\EOCP_j, \
\x_{t_{\ninfo}} {\in}\, \X_0, \ 
\kstari\in\confset_{i,t_{\ninfo{-}1}}^{\delta} \ \forall i
\bigg)
\\[-2mm]
&
\hspace{-2cm}
=
\Prob\bigg(
\x_{t_{\ninfo}} \in \X_0 \ | \
\bigwedge_j \EOCP_j, \
\kstari\in\confset_{i,t_{\ninfo{-}1}}^{\delta} \forall i
\bigg)\,
\\
&\cdot
\Prob\bigg(
\bigwedge_j \EOCP_j, \
\kstari\in\confset_{i,t_{\ninfo{-}1}}^{\delta} \forall i
\bigg).
\end{align*}
By construction of the reachable sets $\{\X_k^{t_j,\delta}\}_{k{=}1}^{\Ninfo^{j}}$, by Assumption \ref{assum:capacity_model} and by definition of $\ExploreOCP_j$ (since $\X_{\Ninfo^j}^{t_{j},\delta}\subseteq\X_0$), we have that $\x_{t_{j+1}} \in \X_0 $ given that $\ExploreOCP_j$ is feasible and  $\kstari\in\confset_{i,t_j}^{\delta} \ \forall i$, for any $j$-th exploration problem. 

Thus, the first term $\{\x_{t_{\ninfo}} \in \X_0 \ | \
\bigwedge_j \EOCP_j, \
\kstari\in\confset_{i,t_{\ninfo{-}1}}^{\delta} \forall i
\}$ holds with probability one, and 
\begin{align*}
\Prob\bigg(
\bigwedge_{j=\rev{1}}^{\ninfo}
\EOCP_j
\bigg)
&\geq
\Prob\big(
\bigwedge_{j=\rev{1}}^{\ninfo{-1}} \EOCP_j, \
\kstari\in\confset_{i,t_{\ninfo{-}1}}^{\delta} \ \forall i
\big).
\end{align*}
Since $\EOCP_0$ is feasible with probability one as $\x_0\in\X_0$, 
and by reasoning by induction for all $j=\ninfo,\mydots,0$, we obtain that 
\begin{align*}
\Prob\bigg(
\bigwedge_{j=\rev{1}}^{\ninfo}
\EOCP_j
\bigg)
&\geq
\Prob\big(
\bigwedge_{j=\rev{1}}^{\ninfo{-}1} 
\kstari\in\confset_{i,t_j}^{\delta} \ \forall i
\big)
\geq
(1-\delta),
\end{align*}
where the last inequality comes from Theorem \ref{thm:conf-sets}.

\vspace{1mm}
\textit{Probabilistic safety}: \rev{Before proving the second statement of Theorem \ref{thm:probabilistic_feasibility_safety}, we prove the third statement by assuming that $\ninfo<\infty$, which holds if $\ReachOCP$ is eventually feasible}. Let also $N_{\textrm{reach}}$, and $t_{f}$ be, respectively, the planning horizon and the start time index of $\ReachOCP$.  
For conciseness, define $\smash{\x_k^{t_j}=\x_{t_j+k}}$, corresponding to the state at time $(t_j{+}k)$ in the $j$-th phase. 
Further, define the event that the trajectory during the $j$-th exploration phase (or exploitation phase) satisfies all constraints as
\vspace{-3mm}
\begin{align*}
\{\x_{\textrm{info}}^j \,{\in}\, \X_{\textrm{info}}^j\}
&\,{=}\,
\bigg\{
\bigwedge_{k=1}^{\Ninfo^j}\big(\x_{k}^{t_j} \,{\in}\, \Xsafe\big) 
\wedge \big(\x_{\Ninfo^j}^{t_j} \,{\in}\, \X_0\big)
\bigg\}
,
\\ 
\{\x_{\textrm{reach}} \,{\in}\, \X_{\textrm{reach}}\}
&\,{=}\,
\bigg\{
\bigwedge_{k=1}^{N_{\textrm{reach}}}\big(\x_{k}^{t_f} \,{\in}\, \Xsafe\big) 
\wedge \big(\x_{N_{\textrm{reach}}}^{t_f} \,{\in}\, \Xgoal\big)
\bigg\}
,
\end{align*}
where $j\,{=}\,1,\mydots,\ninfo$. 
With this notation, we rewrite the probabilistic safety constraint of the original problem as
\begin{align}
\eqref{eq:joint_chance_constraint} 
&=
\scalebox{0.9}{$
\Prob\bigg(
\overset{\ninfo}{\underset{j=1}\bigwedge}
\{\x_{\textrm{info}}^j \,{\in}\, \X_{\textrm{info}}^j\}
\wedge
\{\x_{\textrm{reach}} \,{\in}\, \X_{\textrm{reach}}\}
\bigg)
$}
\triangleq
\Prob\big(
\{\textrm{Success}\}
\big)
\nonumber
\\
&\hspace{-3mm}\geq 
\Prob\Big(
\{\textrm{Success}\}
\, |\, 
\kstari\,{\in}\,\confset_{i,t}^{\delta} \, \forall t \, \forall i\Big)
\Prob\Big(\kstari\,{\in}\,\confset_{i,t}^{\delta} \, \forall t \, \forall i\Big)
\label{eq:safety_upper_prob_confsets}
,
\end{align}
where $t=t_1,\mydots,t_{\ninfo},t_f$, and $i=1,\mydots,n$.\footnote{The inequality follows from $\eqref{eq:joint_chance_constraint} {=} \scalebox{0.8}{$
\Prob\Big(
\{\textrm{Success}\}
\, |\, 
\kstari\,{\in}\,\confset_{i,t}^{\delta} \, \forall t\forall i\Big)
\Prob\Big(\kstari\,{\in}\,\confset_{i,t}^{\delta} \, \forall t\forall i\Big)
$}$
\scalebox{0.8}{$+ \,
\Prob\Big(
\{\textrm{Success}\}
\, |\, 
\kstari\notin\confset_{i,t}^{\delta} \, \forall t, \forall i\Big)
\Prob\Big(\kstari\notin\confset_{i,t}^{\delta} \, \forall t, \forall i\Big)
$}
${\geq}\, \eqref{eq:safety_upper_prob_confsets}$.}
Again, by Assumption \ref{assum:capacity_model}, the meta-learning model can fit the true dynamics. Hence, if the true parameters are within the confidence sets $\confset_{i,t}^\delta$, then, the reachable sets $\X_{k}^{t,\delta}$ defined in \eqref{eq:confidence_tubes} necessarily contain the state trajectory of the true system. Hence,
\begin{equation}
\Big\{\X_{k}^{t,\delta} \subset \Xsafe\Big\}
=
\Big\{\x_k(\k^*)\in\Xsafe
\, |\, 
\kstari\in\confset_{i,t}^{\delta}, \ \forall i\Big\}
.
\end{equation}
By definition of \ExploreOCP and \ReachOCP, the reachable sets are subsets of the safe set (with probability one), and %
\begin{align*}
\Prob\Big(\x_k^t(\k^*)\,{\in}\,\Xsafe, \ 
	&k{=}1,{\mydots},N \, |\, \kstari\in\confset_{i,t}^{\delta}, \ i{=}1,{\mydots},\xdim\Big)
\\
&\hspace{-1cm}=
\Prob\Big(\X_{k}^{t,\delta} \,{\subset}\, \Xsafe,  k{=}1,{\mydots},N\Big)
= 1,
\end{align*}
which also holds for the final constraints $\x_N^t\in\X_0$ and $\x_N^{t_f}\in\Xgoal$. 
Thus, 
 $   \Prob\big(
\{\textrm{Success}\}
\, |\, 
\kstari\in\confset_{i,t}^{\delta} \ \forall t \, \forall i\big) = 1. 
	$
Combining this result with \eqref{eq:safety_upper_prob_confsets}, we obtain that the system satisfies all constraints and eventually reaches $\Xgoal$ with probability at least that of the probability of the model parameters belonging to the confidence sets, i.e., 
$
\eqref{eq:joint_chance_constraint} %
\geq
\Prob\big(\kstari\in\confset_{i,t}^{\delta} \ \forall t\, \ \forall i\big).
$
This last term holds with probability greater than  $(1{-}\delta)$. Indeed, using (a) \rev{a union bound (Boole's inequality)} and (b) Theorem \ref{thm:conf-sets}, we obtain
\begin{align*}
&\eqref{eq:joint_chance_constraint}\,{\geq}\,
\Prob\Big(\kstari\,{\in}\,\confset_{i,t}^{\delta} \, \forall t \forall i\Big) 
\,{=}\,
1 -
\Prob\bigg(\hspace{-1pt}
\bigvee_{i=1}^{n}\bigvee_{t}
\kstari{\notin}\confset_{i,t}^{\delta}
\bigg) 
\\[-1mm]
&\hspace{2mm}\mathop{\geq}^{(a)}
1 \,{-}\,
\sum_{i=1}^{n}\Prob\Big(\bigvee_{t}
\kstari\,{\notin}\,\confset_{i,t}^{\delta}
\Big) 
\,{=}\, 
1 {-}
\sum_{i=1}^{n}\Big(
1 {-} 
\Prob\Big(\bigwedge_{t}
\kstari\,{\in}\,\confset_{i,t}^{\delta}
\Big) 
\Big)
\\[-1mm]
&\hspace{2mm}\mathop{\geq}^{(b)}
1 -
\sum_{i=1}^{n}\big(
1 - 
(1-2\delta_i)
\big)
=
1 -
\sum_{i=1}^{n}\big(
2\delta_i
\big)
=
(1-\delta)
\end{align*}
since $\delta_i\,{=}\,\delta/(2n)$. This concludes the \rev{third statement}  of Theorem \ref{thm:probabilistic_feasibility_safety}. 
\rev{The second statement of Theorem \ref{thm:probabilistic_feasibility_safety} follows by relaxing the assumption that $\ninfo<\infty$ and by redefining the event $\{\textrm{Success}\}
=
\bigwedge_{j=1}^{\ninfo}
\{\x_{\textrm{info}}^j \,{\in}\, \X_{\textrm{info}}^j\}
$. In this case, all inequalities follow similarly from Theorem \ref{thm:conf-sets}.}
\end{proof}

Key to this proof are Theorem \ref{thm:conf-sets} and the definition of the reachable sets in \eqref{eq:confidence_tubes}. 
We stress that our safety probability $(1\,{-}\,\delta)$ is independent of the (unknown) time to reach $\Xgoal$, which would not be the case if  pointwise chance constraints were used instead of \eqref{eq:joint_chance_constraint}.

\rev{To solve \ccocp by satisfying all constraints and reaching $\Xgoal$ with high probability}, Theorem \ref{thm:probabilistic_feasibility_safety} relies on \ReachOCP eventually becoming feasible. If the original problem is feasible with perfect knowledge of the dynamics, this assumption holds if the objective used for exploration leads to actions that continually reduce uncertainty in dynamics, see \cite{Mania2020}. Related conditions on observability and persistence of excitation \rev{are available in the literature, see} \cite{Berberich2020Robust, Coulson2018DataEnabledPC}. \rev{W}ith full state information and long exploration horizons using our information cost \rev{with Algorithm \ref{alg:explo_reach_algalg}, we observed in our experiments that \ReachOCP eventually becomes feasible if the original problem is truly feasible with perfect knowledge of the dynamics, see Sections \ref{sec:results:sim} and \ref{sec:hardware}}.
In situations where the problem is not feasible in the first place (e.g., an obstacle blocks the only path to $\Xgoal$), our algorithm defaults to exploring within a feasible neighborhood around $\X_0$, in which case the system is guaranteed to satisfy all safety constraints at all times with probability greater than $(1-\delta)$.

By assuming bounded disturbances $\rev{\ep_t}$ and exploiting confidence sets over the model parameters which hold jointly for all times with high probability (Theorem \ref{thm:conf-sets}), we can guarantee the feasibility of \ExploreOCP during exploration. This contrasts with related work in the MPC literature which provides probabilistic recursive feasibility over a finite horizon only  \cite{Onoresolvability2012}. 
This aspect is crucial to enable autonomous operation and reliability of the approach.

\subsection{Implementation and practical considerations}\label{sec:seels:implementation}
Implementation of the algorithm is complicated by challenges in reachability analysis and non-convex optimization. 

\textbf{Reachability Analysis}: Computing the reachable sets in \eqref{eq:confidence_tubes} is difficult due to the non-convexity of the features $\feat$. Methods reasoning about single-step set propagation (e.g., \cite{koller2018}) are generally too conservative \cite{randSets} as they neglect time correlations induced by the parameters $\k_i$. Moreover, the online updates to the model parameters preclude exact methods which perform computations offline to compute reachable sets, e.g., \cite{HJIoverviewBansal2017,fan2020deep}. 
Neural network verification techniques \cite{IvanovVerisig2019} are tailored to offline computations and are not yet fast enough for online trajectory optimization. 
Finally, methods using Lipschitz continuity can provide conservative reachable set approximations \cite{koller2018}, but such techniques would be so conservative for the systems we consider in this work that the agent would never deem reaching the goal to be feasible, even if one had access to the true Lipschitz constant of the system. The main limitation of \cite{koller2018} is that it adopts a single-step worst case analysis and does not account for the time correlations of the parameters \cite{randSets}.  

For these reasons, we leverage a recently-derived sampling-based uncertainty propagation scheme for reachability analysis (\randup) \cite{randSets}. By sampling \rev{$M$} parameter \rev{ and disturbance tuples} $\rev{(\k_i^j,\epsilon_{i,k}^j, \scalebox{0.85}{$1\leq i\leq\xdim, \, 0\leq k\leq N-1$}})$ within their \rev{associated bounded sets $\confset_{i,t}^\delta$ and $\Epsilon_i$}, %
computing \rev{the resulting} reachable states \rev{$\x_k^j$ for these tuples according to $x_{i,k+1}^j=h_i(\x_k^j,\ac_k^j)+\k_i^{j\top}\feat_i(\x_k^j,\ac_k^j)+\epsilon_{i,k}^j$}, and approximating the reachable sets \rev{$\X_k^{t,\delta}$} in \eqref{eq:confidence_tubes} \rev{with the convex hull of the samples $\{\x_k^j\}_{j=1}^M$,} \randup provides a scalable approach to compute these tubes with minimal assumptions on the system's dynamics, enabling the use of arbitrary neural network features $\feat$. 
It is fast enough to be used with common robotic systems (computing a trajectory requires less than a second for a 13-dimensional spacecraft system with a Python implementation \cite{randSets}) and can be further accelerated through parallelization on GPUs. 
Although \randup lacks finite-sample guarantees of safety (see also \cite{LewJansonEtAl2021} for a recent extension with finite-sample guarantees), 
asymptotic guarantees can be derived using random set theory \cite{randSets}, and finite-sample approximations are sufficient to ensure safety in practice, as shown in our results. 

\textbf{Optimization-based planning}:
Using \randup, we reframe a generally intractable stochastic optimal control problem into \ExploreOCP and \ReachOCP, which are non-convex optimal control problems. Efficiently computing solutions to this class of problems is an active field of research. In this work, we use a direct method based on sequential convex programming (SCP) \cite{LewBonalli2020}. 
By solving a sequence of convexified versions of the original problem, SCP-based methods can run in real time and provide theoretical guarantees of local optimality \cite{MaoSzmukEtAl2016,BonalliCauligiEtAl2019}. %
In this work, we initialize each method with an infeasible straight-line trajectory, solve each convexified problem using \textrm{OSQP} \cite{StellatoBanjacEtAl2017}, and provide further details in Appendix \ref{sec:obs_avoidance}. 

Additionally, due to uncertainty, the feasibility of each problem depends on the optimization horizon $N$. 
For this reason, Algorithm \ref{alg:explo_reach_algalg} performs a search over a predefined range of planning horizons (lines $2$ and $\rev{8}$). 
For exploitation, it selects the first feasible solution if one exists, although other criteria could be used, e.g., minimal control cost. 
For exploration, we select the trajectory which leads to the largest expected information gain. 
Indeed, due to tight control constraints and safety constraints, a longer horizon does not necessarily lead to higher information gain. 
This heuristic works well in practice, and future work will adopt a continuous-time problem formulation with free final time, which is an active area of research  \cite{BonalliCauligiEtAl2019}.

\section{Simulation studies}\label{sec:results:sim}
\subsection{Robotic manipulation in a cluttered environment}

\begin{figure}[!b]
\centering
\includegraphics[width=0.7\linewidth]%
{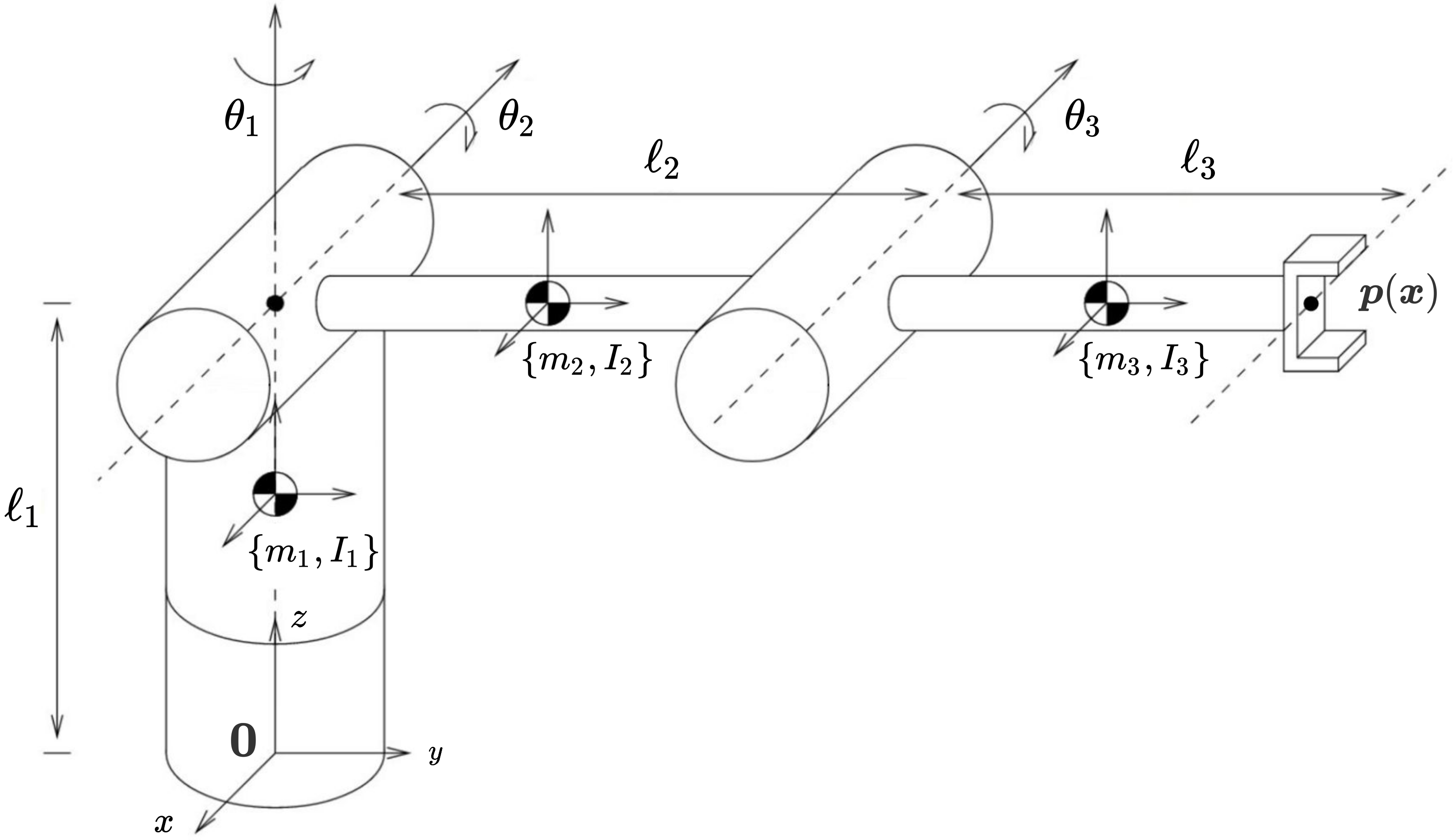}
\centering
\caption{Three-link open-chain manipulator, figure adapted from \cite{MurrayManipulation1994}.
}
\label{fig:manipulator}
\end{figure}

First, consider the three-link open-chain torque-controlled manipulator shown in Figure \ref{fig:manipulator} \cite{MurrayManipulation1994}. The state of the system is $\x\,{=}\,(q_1,q_2,q_3,$ $\dot{q}_1,\dot{q}_2,\dot{q}_3)$ and its control is $\ac=(\tau_1,\tau_2,\tau_3)$, where $q_i$ is the angle of the $i$-th joint and $\tau_i$ is its torque. Its true dynamics follow
\begin{equation}\label{eq:ct_dynamics_manip}
M({\bq})\ddot{\bq}+C({\bq},\dot{\bq})\dot{\bq}+N({\bq})=\ac,
\end{equation}
where \scalebox{0.95}{$\bq = (q_1,q_2,q_3)$}. 
The expressions for
$M({\bq}),C({\bq},\dot{\bq})$, and $N({\bq})$ depend on the lengths \scalebox{0.95}{$
l_1 \,{=}\, 1\,\textrm{m}$} and \scalebox{0.95}{ 
$l_2\,{=}\,
l_3\,{=}\,0.5\,\textrm{m}$}, masses $m_i$, and inertias $I_{i}$ of each $i$-th link of the manipulator (see \cite{MurrayManipulation1994}). By introducing uncertainty in each $m_i$ and $I_i$, this experiment corresponds to a scenario where the robot must manipulate uncertain payloads and identify its inertial properties. 
Although manipulators are typically controlled using trajectory generation and high-frequency low-level tracking (rather than open-loop control), 
the instability of this system makes it a useful case study for our framework.

Starting at  
$\x(0)\,{=}\,\mathbf{0}\,{\in}\,\X_0
\,{=}\,\scalebox{0.9}{$\{\x \, | \,
|{q}_1|{\leq}\frac{\pi}{8},
\, |{q}_{2,3}|{\leq} \frac{\pi}{6}, 
\, |\dot{q}_{1,2,3}|{\leq} \frac{\pi}{6}
\}$}$, 
we consider the problem of 
moving the arm to the goal region 
$\Xgoal\,{=}\,\scalebox{0.9}{$\{\x \, | \,
|{q}_1 {-}\frac{\pi}{2}|{\leq}$}$ 
$\scalebox{0.9}{$\frac{\pi}{8},
\, |{q}_{2,3}|{\leq} \frac{\pi}{10}, 
\, |\dot{q}_{1,2,3}|{\leq} \frac{\pi}{8}
\}$}$. 
We consider two spherical obstacles 
$\Obs_i\,{\subset}\,\R^3$ which should be avoided by the end-effector. 
This results in two obstacle avoidance constraints $\x\not\in\Obs_i$ written as 
$\Obs_i\,{=}\,\scalebox{0.95}{$\{\x %
\, | \,  \|\bp_{\textrm{e}}(\x){-}\bp_i\|_2\,{\leq}$}$ \scalebox{0.95}{$\frac{1}{4}\}$}, 
 where 
 \scalebox{0.9}{$\bp_1\,{=}\,(\frac{\sqrt{2}}{2},\frac{-\sqrt{2}}{2},1.4)$},  
\scalebox{0.9}{$\bp_2\,{=}\,(\frac{\sqrt{2}}{2},\frac{-\sqrt{2}}{2},0.6)$}, and
$\bp_{\textrm{e}}:\R^6\,{\rightarrow}\,\R^3$ maps the state $\x$ to the end-effector position 
 \cite{MurrayManipulation1994}. 
We also account for joint angle and velocity limits  
$\x\,{\in}\,\X$ 
so that 
\scalebox{0.95}{$\Xsafe = \X{\setminus}({\cup_i}\Obs_i)$}, with 
$\X = \scalebox{0.9}{$\{\x \, | \,
q_1{\in}[\frac{-\pi}{2},\pi], |q_{2,3}|{\leq}\frac{\pi}{2},
\, 
|\dot{q}_i|\,{\leq}\, \frac{\pi}{8}
\}$}$. 
Control bounds are set to
$\mathcal{U} = \scalebox{0.95}{$\{\ac\,|\,|\tau_i|\,{\leq}\, 2 \,\textrm{N}{\cdot}\textrm{m}\}$}$.

\begin{figure}[!t]
\begin{minipage}{0.47\linewidth}
\centering
\includegraphics[width=1\linewidth,trim=52 35 35 50, clip]{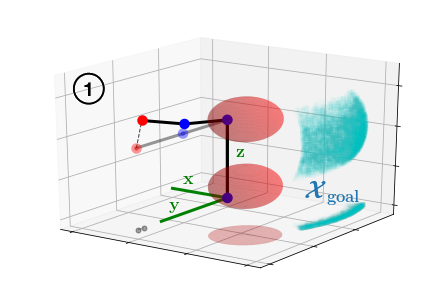}

\vspace{1mm}

\includegraphics[width=1\linewidth,trim=52 35 35 50, clip]{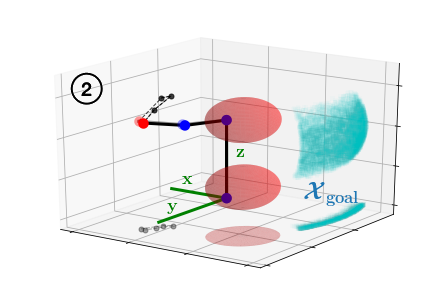}

\vspace{1mm}

\includegraphics[width=1\linewidth,trim=52 35 35 50, clip]{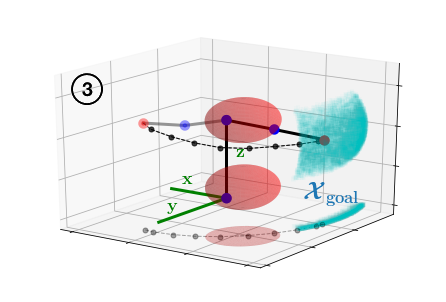}
\vspace{-2mm}
\end{minipage}
\hspace{0.02\linewidth}
\begin{minipage}{0.47\linewidth}
\centering
\includegraphics[width=1\linewidth,trim=52 35 35 50, clip]{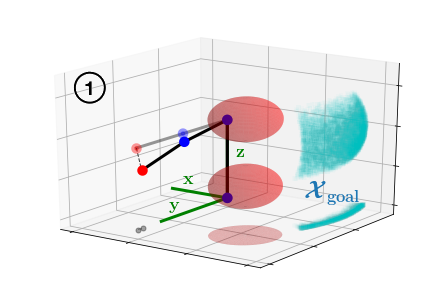}

\vspace{1mm}

\includegraphics[width=1\linewidth,trim=52 35 35 50, clip]{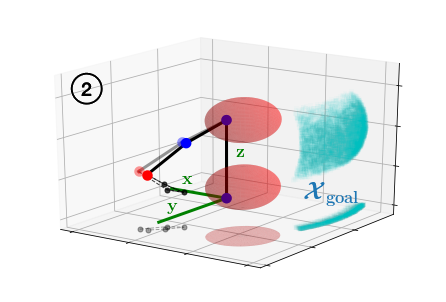}

\vspace{1mm}

\includegraphics[width=1\linewidth,trim=52 35 35 50, clip]{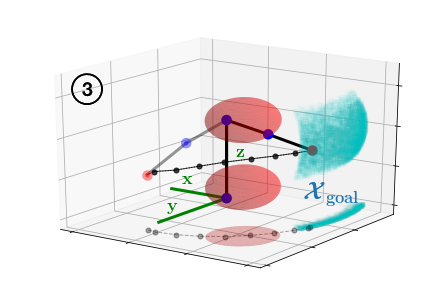}
\vspace{-2mm}
\end{minipage}
\caption{Simulated manipulator experiments for a light (left) and heavy (right) arms: the robot is able to infer its dynamics through active exploration (1,2), and eventually reach $\Xgoal$ (3), while satisfying constraints at all times.  A black dot represents the position of the end-effector every $1 \textrm{s}$. Projections onto the $x\,{-}\,y$ plane of the end-effector trajectory, obstacles, and goal region are also shown.}
\label{fig:manip:examples_lowhigh_mass}
\end{figure}

\begin{figure}[!t]
\centering
\includegraphics[width=0.62\linewidth,trim=0 0 0 0, clip]{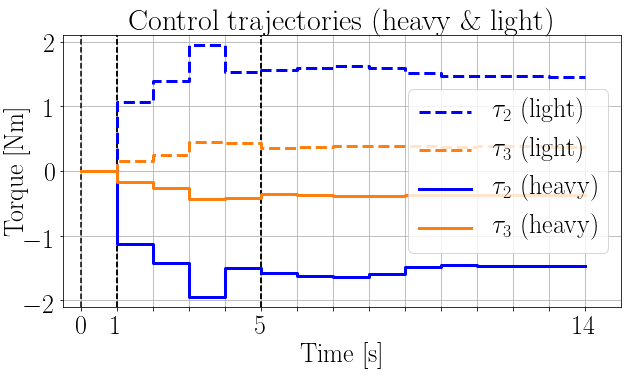}
\includegraphics[width=0.36\linewidth,trim=0 0 0 0, clip]{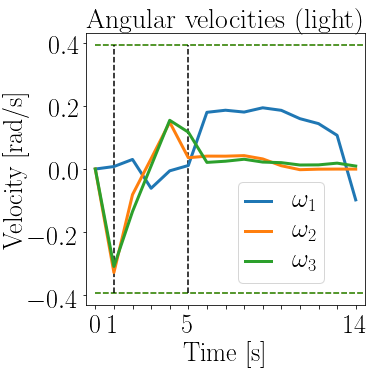}
 \vspace{-4mm}
\caption{Simulated manipulator experiments: control trajectories for both light and heavy manipulators (left) and angular velocities (right, light arm).
}
\label{fig:manip:trajs}
\end{figure}

We simulate \eqref{eq:ct_dynamics_manip} with an Euler discretization scheme with timestep $50\,\textrm{ms}$. 
To slow down the unstable dynamics of this system, we use a damping controller with approximate gravity compensation $\ac\,{=}\,\hat{N}({\bq},\dot{\bq})\,{-}\,K_d\dot{\bq}\,{+}\,\ac_{\textrm{seels}}$, where $\hat{N}$ is computed using nominal mass and inertia values,  $K_d\,{=}\,[0,4,1]$, and $\ac_{\textrm{seels}}$ is the open-loop control trajectory computed with our framework, where each input lasts $1\,\textrm{s}$ and $\ac_{\textrm{seels}}\,{\in}\,\mathcal{U}$.
We use a nominal model $\h$ of \eqref{eq:ct_dynamics_manip} with \scalebox{0.95}{$C({q},\dot{q})\,{=}\,\mathbf{0}$} and  \scalebox{0.95}{$(m_i,I_{xi}\,{=}\,I_{yi},I_{zi})\,{=}\,(2,0.05,0.01)$}. 
We then train an \alpaca model with $L=2$ layers of width $128$ each and an output dimension $d=64$ for each $\feat_i$. 
We randomize parameters according to uniform distributions with
$m_i\,{\in}\,[1.75,2.25]\,\textrm{kg}$, 
$I_{xi}\,{=}\,I_{yi}\in[0.04,0.06]\,\textrm{kg}{\cdot}\textrm{m}^2$,  
$I_{zi}\,{\in}\,[0.008,0.012]\,\textrm{kg}{\cdot}\textrm{m}^2$, with \scalebox{0.9}{$i\,{=}\,1,2,3$}. We assume 
$\sigma_i\,{=}\,10^{-3}$, and 
$|\rev{\epsilon_{i,t}}|\leq \smash{(\sigma_i^2\chi_1^2(0.95))^{1/2}}$, \scalebox{0.9}{$i\,{=}\,1,{\mydots},6$}. 
We use $\beta_i$ and orthogonality regularization,  
$M\,{=}\,2500$ samples for reachability analysis 
and directly use our theoretically computed bounds within our safe learning algorithm, i.e., we use \eqref{eq:beta_bound} to sample model parameters. 
Further details about the trajectory optimization are provided in Appendix \ref{sec:obs_avoidance}.

Figure \ref{fig:manip:examples_lowhigh_mass} (left) shows the results for low mass and inertia \scalebox{0.9}{$(m_i,I_{xi} = I_{yi},I_{zi})\,{=}\,(1.85,45{\cdot}10^{-3},8.5{\cdot}10^{-3})$}. Note that lower mass and inertia would cause the robot to leave $\X_0$ in $1\textrm{s}$, making the problem infeasible. In this experiment, the end-effector initially rises (due to the lower-than-expected mass/inertia), does a second exploration maneuver, then reaches the goal, always satisfying constraints. 
Figure \ref{fig:manip:examples_lowhigh_mass} (right) shows results for a heavier arm. %
Again, all constraints are satisfied and the goal is reached safely.  
For both scenarios, we compare to a method which only considers uncertainty from the additive disturbances $\rev{\ep_t}$, thereby deciding that directly reaching $\Xgoal$ is safe. This naive approach fails to reach the goal and violates constraints, which demonstrates the need for an accurate model
and sequential online learning to reliably solve this problem.

We plot control trajectories and angular velocities in 
Figure \ref{fig:manip:trajs}. This shows that steering the system with lower mass values requires less control effort, 
and 
 angular velocities $\smash{(\dot{q}_2,\dot{q}_3)}$ are larger during the second exploration phase (from $t\,{=}\,1$\textrm{s} until $t\,{=}\,5$\textrm{s}) in order to efficiently identify the unknown parameters of the manipulator. 
 Also, note the approximate symmetry between the two control trajectories due to the different mass and inertial properties.

\subsection{Safely transporting an uncertain payload}\label{sec:sim:ff}

Next, we verify our approach on a nonlinear six-dimensional planar free-flyer robot navigating in a cluttered environment. 
We consider the problem of cargo transport, in which the robot is attached to an uncertain payload that results in changes to the inertial properties of the system. This system mimics a cargo-unloading scenario that is one plausible near-term application of autonomous robots onboard the International Space Station \cite{AstrobeeSoftware2018,ekalaccuracycobot}. 

The state of the system is given by $\x = (\bpos,\theta,\bv,\omega)\in\R^{6}$, with $\bpos,\bv\in\R^2$ the planar position and velocity and $\theta,\omega\in\R$ the heading and angular velocity, respectively. 
For safety, we constrain $|v_i|\,{\leq}\, 0.2$ \textrm{m/s}, and $|\omega|\,{\leq}\, 0.25$ \textrm{rad/s}. 
The robot is controlled using two pairs of gas thrusters and a reaction wheel. The system's control inputs are denoted by $\ac\,{=}\,(F_x,F_y,M)\in\R^3$, where $\U=[-\bar{u}_i,\bar{u}_i]^{\times 3}$ represent the limited control authority, with $\bar{u}_{1,2}\,{=}\,0.15$ \textrm{N} and $\bar{u}_3\,{=}\,0.01$ \textrm{Nm}. 
The payload causes a change in mass/inertia properties and causes the center of mass to be offset at $\bpos_0\in\R^2$.  
The continuous-time nonlinear dynamics of the system (which we write as $\dot{\x}=\mathbf{f}_t(\cdot)$) 
are given by $\dot{\bpos} \,{=}\, \bv$, $\thetadot \,{=}\, \omega$, and  %
\begin{equation}
\label{eq:ct_dynamics_ff_planar}
m\dot{\bv}
\,{=}\,
\bF
{-}
\dot{\omega}\begin{bmatrix}
-p_{\mathrm{o}y}\\ p_{\mathrm{o}x}
\end{bmatrix} %
{+}
\omega^2 \bpos_{\mathrm{o}},
\ \,
J\dot{\omega} 
\,{=} \,
M 
{-} 
p_{\mathrm{o}x}F_y 
{+}
p_{\mathrm{o}y}F_x
.
\end{equation}
To capture payload uncertainty, we randomize the mass $m$, inertia $J$, and center of mass $\bpos_0$ of the coupled system as 
\begin{gather*}
m \sim \mathrm{Unif}(25,60) \,\textrm{kg}, 
\ \ 
J \sim \mathrm{Unif}(0.30, 0.70) \,\textrm{kg${\cdot}$m${}^2$}, 
\\
p_{\mathrm{o}i} \sim \mathrm{Unif}(-7.5,7.5)\,\textrm{cm}, \ i\,{\in}\,\{x,y\}
.
\end{gather*}
Using a zero-order hold on the controls and a forward Euler discretization scheme with $\Delta t\,{=}\,3 \, \textrm{s}$, we discretize \eqref{eq:ct_dynamics_ff_planar} as
\begin{equation}\label{eq:ff:dt_dynamics}
\x_{t+1} =  \x_t + \Delta t {\cdot} \mathbf{f}_t(\x_t,\ac_t, m, \mathbf{J},\bpos_{\mathrm{o}})
+ \ep_t
.
\end{equation}
We assume that the $\rev{\epsilon_{i,t}}$ are $\sigma_{i}$-subgaussian, each bounded as 
\scalebox{0.95}{$|\rev{\epsilon_{i,t}}|\,{\leq}\, \smash{(\sigma_i^2\chi_1^2(0.95))^{1/2}}$}, where 
\scalebox{0.95}{$\sigma_{1,2}^2=10^{-6}$},  
\scalebox{0.95}{$\sigma_{3,6}^2=10^{-5}$}, and 
\scalebox{0.95}{$\sigma_{4,5}^2=10^{-7}$}. 
We use this discrete-time system in simulation experiments and to collect training data for offline meta-learning. 
We use a nominal model $\h$ of the system using \eqref{eq:ff:dt_dynamics} with $(\bar{m},\bar{J},\bar{\bpos}_0)=(35,0.4,\mathbf{0})$, which corresponds to a double-integrator model. 
To represent the unknown model mismatch $\g(\cdot,\cdot,\param)$, 
we train an \alpaca model as described in \cite{HarrisonSharmaEtAl2019} for $6000$ iterations for all experiments, 
with $L=2$ layers of width $64$ each and an output dimension $d=32$ for each $\feat_i$.

For trajectory optimization, we use standard linear-quadratic final and stagewise costs on states and controls to minimize control cost and deviation to $\X_0$ or $\Xgoal$ depending on the phase. 
Specifically, we maximize the information cost  \eqref{eq:info_cost} while minimizing control effort and   penalizing high velocities and the final distance to $\x_{\textrm{g}}$, the center of either $\X_0$, or $\Xgoal$: 
\vspace{-2mm}
\begin{align}
\vspace{-3mm}
\min_{\bmu,\ac}
\sum_{k=0}^{N-1}    
\Big(
-\alpha_{\textrm{info}}\costinfo(\bmu_k,\ac_k)
&+
\bmu_k^\top\bQ\bmu_k + \ac_k^\top\bR\ac_k
\Big)
\nonumber
\\[-2mm]
&\hspace{-2cm}+ 
(\bmu_N\,{-}\,\x_{\textrm{g}})^\top\bQ_N(\bmu_N\,{-}\,\x_{\textrm{g}})
.
\vspace{-1mm}
\end{align}
We set $\alpha_{\textrm{info}}\,{=}\,0.025$ for exploration, whereas 
$\alpha_{\textrm{info}}\,{=}\,0$ when reaching $\Xgoal$. 
We set 
$\bQ\,{=}\,\scalebox{0.95}{$\textrm{diag}([0,0,0,1,1,10])$}$, 
$\smash{\bR\,{=}\,\scalebox{0.95}{$\textrm{diag}([10,10,10])$}}$, and $\smash{\bQ_N\,{=}\,\scalebox{0.95}{$10^3\textrm{diag}([1,1,0.1,10,10,10])$}}$ for both \ExploreOCP and \ReachOCP.

To validate our framework, 
we run \algname on 250 randomized problems with different dynamics parameters $\param$, 
four different obstacle configurations and initial/final conditions, as shown in Figure \ref{fig:ff_sim_exps}. 
With $M=2500$ and both orthogonality and $\beta$-regularization, we obtain a success rate (all constraints in \eqref{eq:joint_chance_constraint} are satisfied, the optimizer finds a feasible solution at each step of \algname, and the system reaches $\Xgoal$) at $93.2\%$ for $\delta\,{=}\,0.1$ (with $\delta\,{=}\,0.1$, the success rate should be at least $90\%$ according to Theorem \ref{thm:probabilistic_feasibility_safety}), at $90.5\%$ for $\delta\,{=}\,0.2$, and at $88.8\%$ for $\delta\,{=}\,0.5$. This experiment shows that the trajectory-wise chance constraint \eqref{eq:joint_chance_constraint} is conservatively satisfied in practice and verifies the probabilistic safety and recursive feasibility results of Theorem  \ref{thm:probabilistic_feasibility_safety}.

\textbf{Sensitivity analysis}: 
we perform an ablation study to evaluate the sensitivity of our approach to different parameters: 
to $\delta$, 
to the effect of the $\beta$-regularizer, 
to the number of samples  $M$ for reachability analysis with \randup, 
and 
to the noise intensities $\sigma_i$.   
For each parameter set, we perform 250 randomized experiments on the environments shown in Figure  \ref{fig:results_simple}. 
As with the manipulator experiments, we compare our approach to a baseline.   
Results are shown in Figure \ref{fig:sim_ff:success_with_without_beta}, in Tables \ref{fig:results:stats_smallSigEps} and \ref{fig:results:stats_highSigEps} and are discussed below:

\begin{table}[!t]
\footnotesize
\centering
\begin{tabular}{cccccccc}
\toprule 
\Tstrut
\textrm{Small $\sigma_i$} 
&
\textrm{\# Explore}
& $\x\,{\notin}\,\Xobs$
& $\x\,{\in}\,\X_{\textrm{min/max}}$
& $\x_N\,{\in}\,\Xgoal$
& $\x\,{\in}\,\X_{\textrm{all}}$
\Bstrut 
\\
\midrule
\Tstrut
\algName, \scalebox{0.9}{$\delta{=}0.1$} & 
{\scalebox{0.8}{$2.3 \,{\pm}\, 0.01$}} & 
{\scalebox{0.8}{$97.6\,{\pm}\, 1.9 \%$}} & 
{\scalebox{0.8}{$97.6\,{\pm}\, 1.9 \%$}} & 
{\scalebox{0.8}{$96.8\,{\pm}\, 2.2 \%$}} & 
{\scalebox{0.8}{$93.2\,{\pm}\, 3.1 \%$}} 
\Bstrut 
\\ 
\Tstrut
\algName, \scalebox{0.9}{$\delta{=}0.2$} & 
{\scalebox{0.8}{$2.43 \,{\pm}\, 0.19$}} &  
{\scalebox{0.8}{$95.6\,{\pm}\, 2.5 \%$}} & 
{\scalebox{0.8}{$98.8\,{\pm}\, 1.3 \%$}} & 
{\scalebox{0.8}{$98.8\,{\pm}\, 1.3 \%$}} & 
{\scalebox{0.8}{$93.6\,{\pm}\, 3.0 \%$}} 
\Bstrut 
\\ 
\Tstrut
\algName, \scalebox{0.9}{$\delta{=}0.5$} & 
{\scalebox{0.8}{$2.22 \,{\pm}\, 0.18$}} &  
{\scalebox{0.8}{$94.8\,{\pm}\, 2.7 \%$}} & 
{\scalebox{0.8}{$98.8\,{\pm}\, 1.3 \%$}} & 
{\scalebox{0.8}{$97.2\,{\pm}\, 2.0 \%$}} & 
{\scalebox{0.8}{$93.2\,{\pm}\, 3.1 \%$}} 
\Bstrut 
\\ 
\Tstrut
\hspace{-1mm}
\scalebox{1}{\rev{Mean-Equiv.}} 
\hspace{-1mm} & 
{\scalebox{0.8}{$ 0 $}}   &  
{\scalebox{0.8}{$39.6 \,{\pm}\, 6.0\%$}} & 
{\scalebox{0.8}{$99.6 \,{\pm}\, 0.8\%$}} & 
{\scalebox{0.8}{$22.8 \,{\pm}\, 5.2\%$}}   & 
{\scalebox{0.8}{$19.6 \,{\pm}\, 4.9\%$}}  
\Bstrut 
\\ 
\Tstrut
\hspace{-1mm}
\scalebox{1}{\rev{RFF, \scalebox{0.9}{$\delta{=}0.1$}}} 
\hspace{-1mm}  & %
{\scalebox{0.8}{\rev{$1.0 \,{\pm}\, 0.0\%$}}}   &  
{\scalebox{0.8}{\rev{$84.0 \,{\pm}\, 0.3\%$}}} & 
{\scalebox{0.8}{\rev{$97.6 \,{\pm}\, 0.1\%$}}} & 
{\scalebox{0.8}{\rev{$68.8 \,{\pm}\, 0.4\%$}}}   & 
{\scalebox{0.8}{\rev{$59.6 \,{\pm}\, 0.4\%$}}}     
\\ 
\bottomrule 
\end{tabular}
\vspace{0mm}
\caption{Results for 250 randomized experiments for different values of $\delta$, with low noise levels $\rev{\sigma_i}$,
$\beta$-reg., and $\smash{M=1000}$. $\x\,{\in}\,\X_{{all}}$ denotes success: all constraints are satisfied and $\x_N\,{\in}\,\X_{{goal}}$.  
}
\label{fig:results:stats_smallSigEps}
\footnotesize
\centering
\begin{tabular}{cccccccc}
\toprule 
\Tstrut
\textrm{High $\sigma_i$} 
& \textrm{\# Explore}
& $\x\,{\notin}\,\Xobs$
& $\x\,{\in}\,\X_{\textrm{min/max}}$
& $\x_N\,{\in}\,\Xgoal$
& $\x\,{\in}\,\X_{\textrm{all}}$
\Bstrut 
\\
\midrule
\Tstrut
\algName, \scalebox{0.9}{$\delta{=}0.1$} & 
{\scalebox{0.8}{$2.4 \,{\pm}\, 0.14$}} & 
{\scalebox{0.8}{$92.4\,{\pm}\, 3.3 \%$}} & 
{\scalebox{0.8}{$99.2\,{\pm}\, 1.1 \%$}} & 
{\scalebox{0.8}{$95.6\,{\pm}\, 2.5 \%$}} & 
{\scalebox{0.8}{$90.0\,{\pm}\, 3.7 \%$}} 
\Bstrut 
\\ 
\Tstrut
\algName, \scalebox{0.9}{$\delta{=}0.2$} & 
{\scalebox{0.8}{$2.32 \,{\pm}\, 0.13$}} &  
{\scalebox{0.8}{$91.6\,{\pm}\, 3.4 \%$}} & 
{\scalebox{0.8}{$100 \,{\pm}\, 0 \%$}} & 
{\scalebox{0.8}{$95.6\,{\pm}\, 2.5 \%$}} &
{\scalebox{0.8}{$89.6\,{\pm}\, 3.8 \%$}} 
\Bstrut 
\\ 
\Tstrut
\algName, \scalebox{0.9}{$\delta{=}0.5$} & 
{\scalebox{0.8}{$1.98 \,{\pm}\, 0.11$}} &  
{\scalebox{0.8}{$87.6\,{\pm}\, 4.1 \%$}} & 
{\scalebox{0.8}{$99.2\,{\pm}\, 1.1 \%$}} & 
{\scalebox{0.8}{$90.8\,{\pm}\, 3.6 \%$}} & 
{\scalebox{0.8}{$82.8\,{\pm}\, 4.7 \%$}} 
\Bstrut 
\\ 
\Tstrut
\hspace{-1mm}
\scalebox{1}{\rev{Mean-Equiv.}} 
\hspace{-1mm}  & %
{\scalebox{0.8}{$ 0 $}}   &  
{\scalebox{0.8}{$58.8 \,{\pm}\, 6.1\%$}} & 
{\scalebox{0.8}{$99.6 \,{\pm}\, 0.8\%$}} & 
{\scalebox{0.8}{$39.2 \,{\pm}\, 6.0\%$}}   & 
{\scalebox{0.8}{$37.2 \,{\pm}\, 6.0\%$}}    
\Bstrut 
\\ 
\Tstrut
\hspace{-1mm}
\scalebox{1}{\rev{RFF, \scalebox{0.9}{$\delta{=}0.1$}}} 
\hspace{-1mm}  & %
{\scalebox{0.8}{\rev{$1.0 \,{\pm}\, 0.0\%$}}}   &  
{\scalebox{0.8}{\rev{$84.0 \,{\pm}\, 0.3\%$}}} & 
{\scalebox{0.8}{\rev{$95.6 \,{\pm}\, 0.2\%$}}} & 
{\scalebox{0.8}{\rev{$65.2 \,{\pm}\, 0.4\%$}}}   & 
{\scalebox{0.8}{\rev{$54.4 \,{\pm}\, 0.4\%$}}}   
\Bstrut 
\\ 
\bottomrule 
\end{tabular}
\vspace{0mm}
\caption{Results for 250 randomized experiments for different values of $\delta$, with high noise levels $\rev{\sigma_i}$, $\beta$-reg., and $M=2500$.}%
\label{fig:results:stats_highSigEps}
\vspace{-6mm}
\end{table}

\begin{figure*}[!t]
\begin{minipage}{1\textwidth}
\centering
\vspace{-4mm}
\includegraphics[width=0.28\linewidth,trim=10 0 5 0, clip]{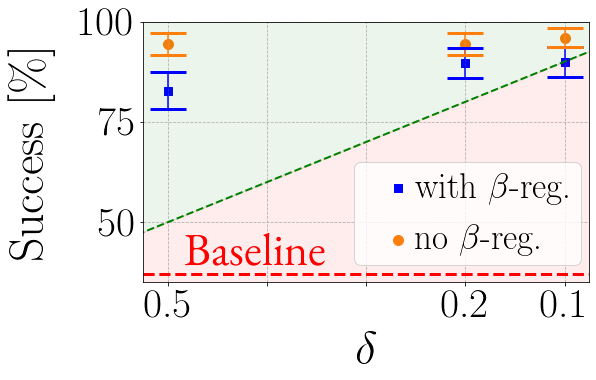}
\hspace{1mm}
\includegraphics[width=0.28\linewidth,trim=0 0 5 0, clip]{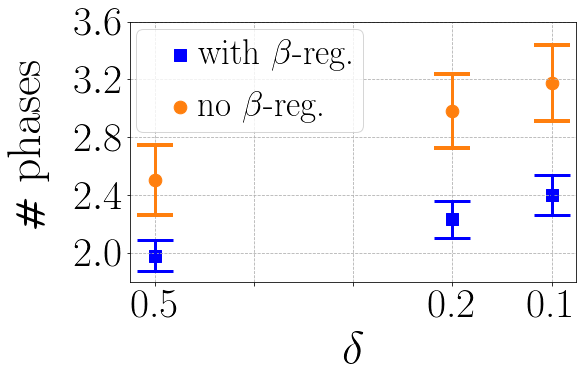}
\hspace{1mm}
\includegraphics[width=0.39\linewidth,trim=0 0 0 0, clip]{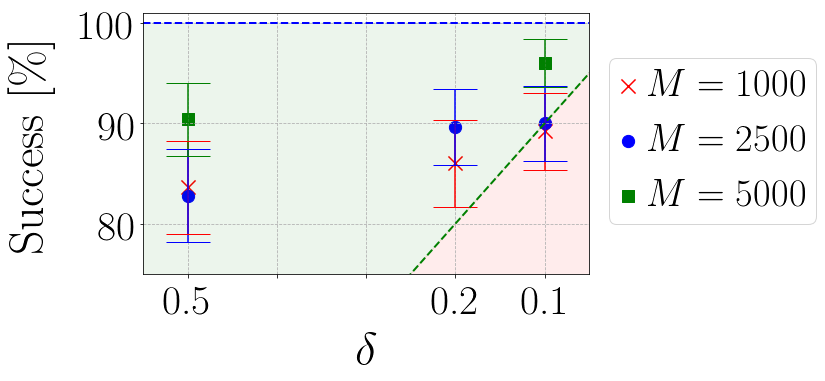}
\vspace{-3mm}
\caption{Simulation results for ablation study, using 250 free-flyer experiments in randomized environments (see Fig. \ref{fig:results_simple}) with high noise levels $\sigma_i$, with and without $\beta$-regularization (left and middle figures, using $M = 2500$) and for different number values of $M$ (right). 
On plots showing success percentages (all constraints are satisfied and $\x_N{\in}\Xgoal$), the green region denotes results where the success percentage is at or above the desired probability of success  $(1{-}\delta)$ (see Theorem \ref{thm:probabilistic_feasibility_safety}) and the red region indicates where the true probability of success is lower than  desired. Error bars correspond to $95\%$ confidence intervals.
}
\label{fig:parameters_sweep}
\label{fig:sim_ff:success_with_without_beta}
\end{minipage}
\begin{minipage}{1\textwidth}
\centering
\includegraphics[width=0.198\linewidth,trim=52 33 35 0, clip]{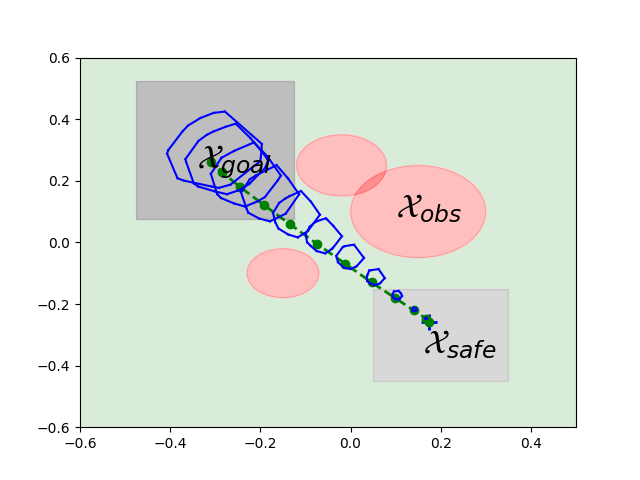}
\includegraphics[width=0.198\linewidth,trim=52 33 35 0, clip]{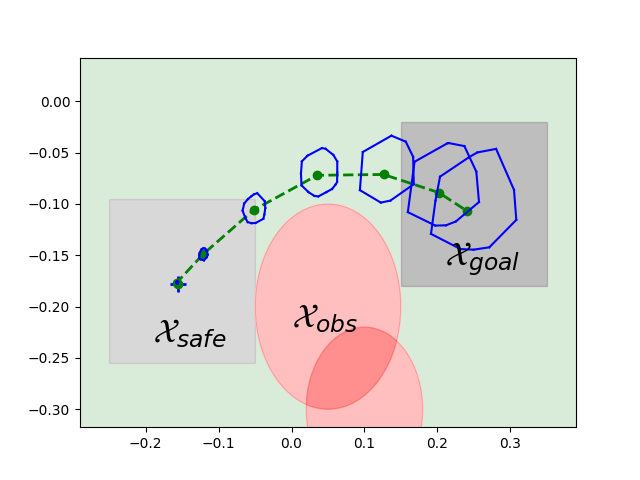}
\includegraphics[width=0.198\linewidth,trim=52 33 35 0, clip]{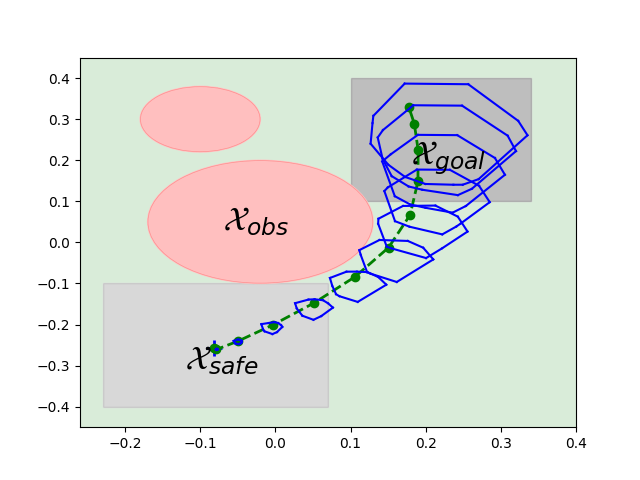}
\includegraphics[width=0.198\linewidth,trim=52 33 35 0, clip]{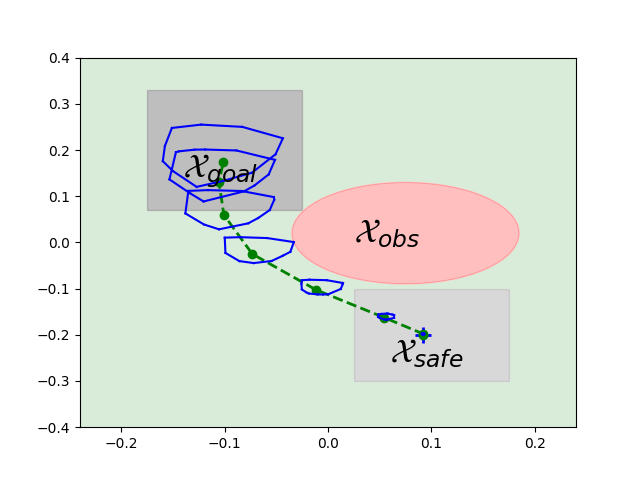}
\hspace{2mm}
    \includegraphics[width=0.15\linewidth,trim=40 40 30 30, clip]{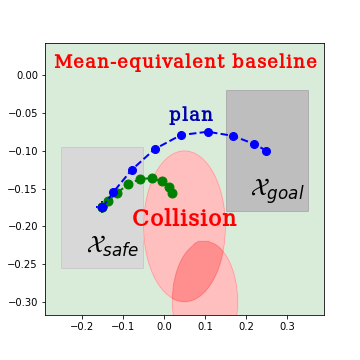}
\caption{\textbf{Left} four figures: scenarios considered for the ablation study with representative \textcolor{blue}{reaching trajectories} obtained with \seels and \textcolor{dartmouthgreen}{true executed trajectories}. 
\textbf{Right}, mean-equivalent baseline that uses the mean parameters $\kbar_i$ and only considers uncertainty from the disturbances $\ep_k$ for planning. Due to high dynamics  uncertainty, attempting to reach $\Xgoal$ is initially unsafe: the baseline violates velocity and final constraints and collides with an obstacle. }
\label{fig:results_simple}
\end{minipage}
\end{figure*}

\begin{itemize}[leftmargin=4mm]
  \setlength\itemsep{0.5mm}
  	\item \textit{\rev{Mean-equivalent} baseline}: we compare to a method which only considers uncertainty from the additive disturbances $\rev{\ep_t}$ and uses the mean parameters $\kbar_i$ for planning\rev{, with meta-learned \alpaca features $\feat_i$}. This baseline thus assumes reaching $\Xgoal$ directly is safe. This naive approach violates the trajectory-wise chance constraint \eqref{eq:joint_chance_constraint}, again demonstrating the need for uncertainty quantification and sequential active learning to reliably solve this problem.
  	\item \rev{\textit{Random Fourier features (RFF) baseline}: we replace \alpaca with a Bayesian model with $d=128$ random Fourier features \cite{Rahimi2007} as described in \cite{Wilson2020}. 
  	  Offline, prior to running Algorithm \ref{alg:explo_reach_algalg}, we select length-scale hyper-parameters and calibrate the priors $\N(\kbar_{i,0}, \sigma_i^2 \Linv_{i,0})$ over the weights to obtain an accurate uncertainty representation satisfying Assumption \ref{assum:calib}. 
  	    We observe that this approach does not achieve the desired success rates, which we attribute to substantial violations of Assumption \ref{assum:capacity_model}. Making accurate multi-step open-loop  predictions appears challenging without expressive meta-learned neural network features.}
    \item \textit{$\delta$}: 
	as a consequence of Theorem \ref{thm:probabilistic_feasibility_safety}, the algorithm's conservatism can be tuned by the
choice of $\delta$, which is supported by our results. 
In particular, by opting for a lower probability of safety $1{-}\delta$, the goal is reached faster on average. %
Importantly, \seels conservatively satisfies the chance constraint \eqref{eq:joint_chance_constraint} for these different parameters. 
    \item \textit{$M$}: increasing $M$ leads to increased success rate and safety probability, which is supported by the results shown in Figure \ref{fig:parameters_sweep} (right). 
    Theoretically, under \textbf{A\ref{assum:invariant_X0}-\ref{assum:calib}},  
success with high probability (as stated in Theorem  \ref{thm:probabilistic_feasibility_safety}) is guaranteed as long as the number of samples $M$ is large enough (a consequence of \cite[Theorem 2]{randSets}), which is supported by our experiments. 
In practice, one should choose the largest value of $M$ to satisfy a given computational budget and multiple techniques could be used to improve the quality of the approximation, e.g., parallelization on GPUs and importance-sampling approaches \cite{williams2017information}, which will be considered in future work. We refer the reader to \cite{randSets} and \cite{LewJansonEtAl2021} for further details. 
    \item \textit{$\beta$-regularization} 
    reduces conservatism while still guaranteeing probabilistic safety in practice, i.e. the model still satisfies \textbf{A\ref{assum:capacity_model}} and \textbf{A\ref{assum:calib}}. 
    Indeed, Figure \ref{fig:sim_ff:success_with_without_beta} shows that the goal is reached faster on average and the system still conservatively satisfies \eqref{eq:joint_chance_constraint}. 
    This also implies that the model class with last-layer adaptation of \alpaca is expressive enough to enable meta-training of models fulfilling \textbf{A\ref{assum:capacity_model}} and \textbf{A\ref{assum:calib}} with varying levels of conservatism, which could motivate future research on better meta-training of such models with different architectures and regularizers.
    \item \textit{$\sigma_i$}: we also consider two noise levels:
\begin{enumerate}
    \item $\sigma_i^2=10^{-7}$ for  \mbox{\footnotesize $i{=}1,2,4,5$}, and 
$\sigma_i^2=10^{-6}$ for  \mbox{\footnotesize$i{=}3,6$}.
\item $\sigma_i^2=10^{-6}$ for  \mbox{\footnotesize $i{=}1,2$},  
$\sigma_i^2=10^{-5}$ for  \mbox{\footnotesize$i{=}3,6$}, and 
$\sigma_i^2=10^{-7}$ for  \mbox{\footnotesize$i{=}4,5$}.
\end{enumerate}

Results for these different noise levels for different $\delta$ are reported in Tables \ref{fig:results:stats_smallSigEps} and  \ref{fig:results:stats_highSigEps}. 
From Table \ref{fig:results:stats_smallSigEps}, we see that the performance and overall probability of safety for small $\sigma_i$ is not sensitive to the chosen value of $\delta$. We conjecture that failures are mostly due to under-approximations from the approximate computation of the reachable sets with \randup. For higher noise levels, it is evident that the conservatism of the algorithm can be tuned by choosing a different value for $\delta$, since failures come from statistical errors from updating the model with noisy data (see Theorem \ref{thm:conf-sets}). We also observe faster times to reach $\Xgoal$ when opting for lower probability of safety. In all scenarios, \algName solves the problem with probability at least $(1\,{-}\,\delta)$, verifying the results of Theorem \ref{thm:probabilistic_feasibility_safety}.
\end{itemize}

\section{Hardware experiments}\label{sec:hardware}

Next, we verify our framework via hardware experiments. 
We consider the planar spacecraft free-flyer platform controlled by eight thrusters as shown in Figure \ref{fig:thrusters}. 
\begin{figure}[!b]
\centering
\vspace{-3mm}
\includegraphics[width=0.99\linewidth]{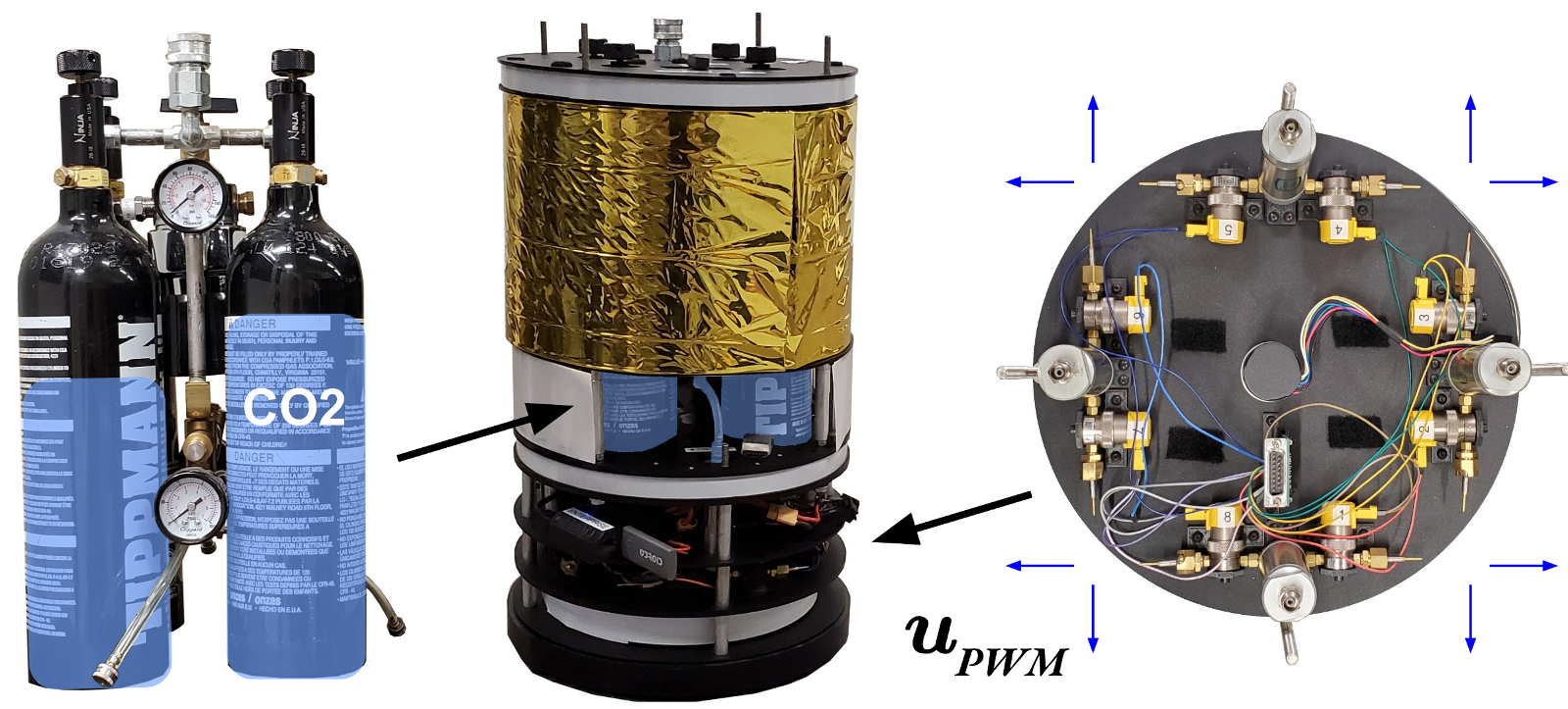}
\vspace{-6mm}
\caption{
The free-flyer robot uses air bearings and cold-gas thrusters to navigate with minimal friction and external forces on a smooth, flat granite table, emulating microgravity dynamics within a plane. 
\textbf{Right }(top view): The eight cold-gas thrusters are mixed to \rev{produce} desired forces and moments on the robot, modulated using 10 Hz PWM signals sent to the thruster valves. 
	The lack of reaction wheel on this system makes it slightly more challenging to control than the simulated free-flyer from Section \ref{sec:sim:ff}.
}
\label{fig:thrusters}
\end{figure}
As the system weighs ${\sim} 16\textrm{kg}$ and each thruster can only apply a maximum force of $0.4\textrm{N}$, an accurate model and dynamically-feasible trajectory are necessary to safely reach a goal region.  
Other challenges include imperfect actuators,  time-varying mass of the system as the gas tanks deplete, and unmodeled friction and tilt of the table surface.  
Our implementation follows Section \ref{sec:sim:ff} (with $\delta\,{=}\,0.1$,  orthogonality and $\beta$-regularization) with the following modifications:
\begin{itemize}[leftmargin=3.5mm]
  \setlength\itemsep{0.5mm}
    \item \textit{Dynamics}: we meta-train using \eqref{eq:ct_dynamics_ff_planar} discretized at $10\textrm{Hz}$ with an additional random force $\mathbf{F}_{\textrm{tilt}}$ to account for the tilt of the table. For offline meta-training, the parameters are randomized as 
\begin{gather*}
m \sim \mathrm{Unif}(8,40) \,\textrm{kg}, 
\ \ 
J \sim \mathrm{Unif}(0.08,0.30) \,\textrm{kg${\cdot}$m${}^2$}, 
\\
p_{\mathrm{o}i} \sim \mathrm{Unif}(\pm 5)\,\textrm{cm},
\ \ 
F_{\mathrm{tilt},i} \sim \mathrm{Unif}(\pm 0.3)\,\textrm{N}, 
\ \ 
	i\,{\in}\,\{x,y\}
.
\end{gather*}
    \item \textit{Regulation}: we regulate the system with a PD \rev{controller  at} the end of each phase as a new plan is computed.\footnote{We solve four \ExploreOCP with different horizons $N$ (\seels, line $7$) which takes a total of $5$ seconds in average with our Python implementation, measured on a computer with a 3.70GHz Intel Core i7-8700K CPU. As the computed control inputs are directly used to steer the system, this delay may be problematic if the robot is in movement as a solution is computed.}  We do not use these intermediate trajectories to learn dynamics. 
    \item \textit{Feedback}: we use an LQR controller to stabilize the trajectory around reachable set centers $\bmu_k$ and reduce uncertainty. \rev{This feedback controller is computed using the prior mean parameters $\kbar_{i,0}$ of \alpaca, linearized around $\x(0)$. }  We account for this controller during planning as in \cite{LewBonalli2020} and apply the control $\ac_k\,{=}\,\ac_{\textrm{seels},k}\,{+}\,\textrm{LQR}(\x_k\,{-}\,\bmu_k)$ at $\smash{\frac{1}{3}}\textrm{Hz}$.
\end{itemize}

\begin{figure*}[!t]
\begin{minipage}{1.0\textwidth}
\centering
\includegraphics[width=0.30\linewidth,trim=40 0 0 0, clip]{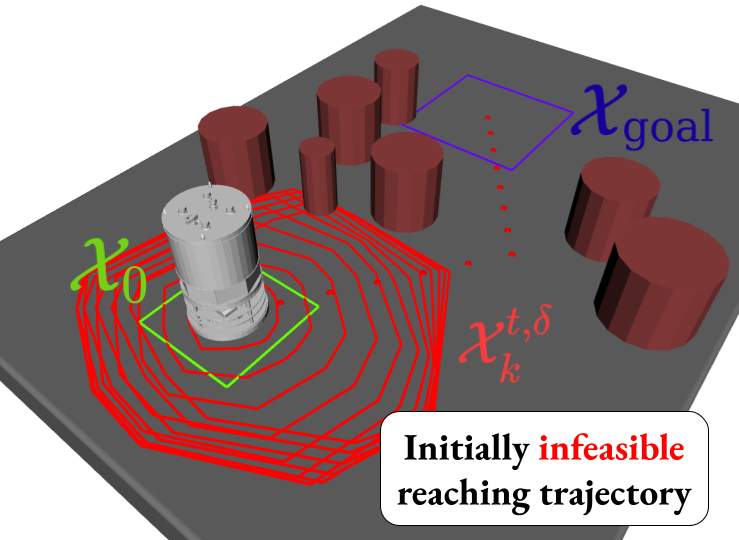}
\hspace{0.5mm}
\includegraphics[width=0.25\linewidth,trim=10 0 30 0, clip]{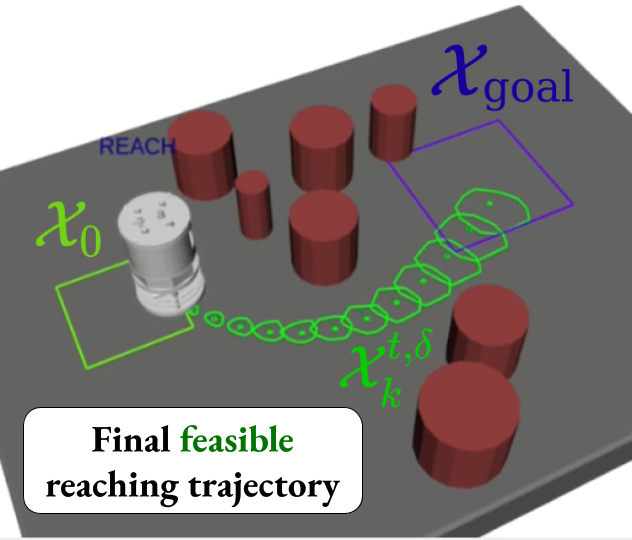}
\hspace{0.5mm}
\vspace{2mm}
\includegraphics[width=0.40\linewidth,trim=10 30 80 0, clip]{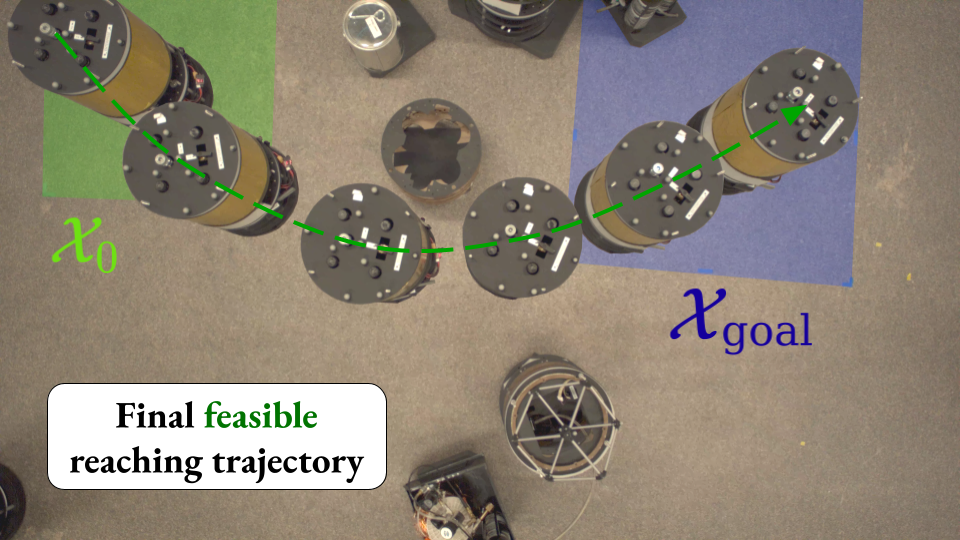}
\end{minipage}
\vspace{2mm}
\begin{minipage}{1.\textwidth}
\centering
\includegraphics[width=0.99\linewidth,trim=25 20 20 28, clip]{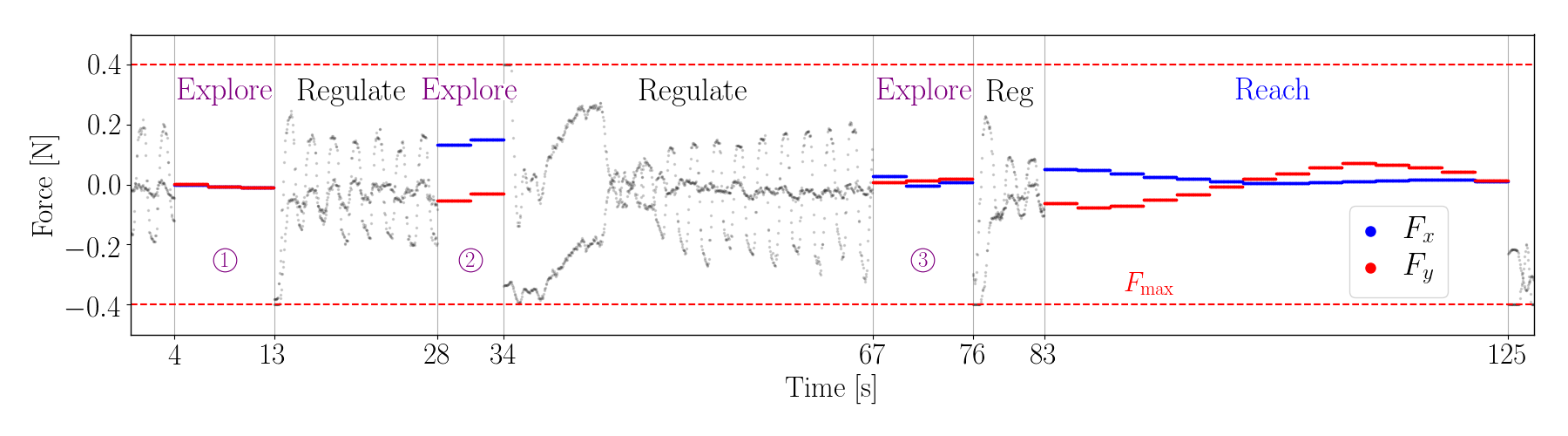}
\end{minipage}

\caption{Visualization and forces applied during hardware experiment. 
\textbf{Top (left) \& bottom (1)}: uncertainty is too high to reach and an exploration trajectory with negligible control inputs is computed to infer external forces. 
\textbf{Bottom (2) and (3)}: \rev{for the second and third exploration phases}, uncertainty is still too high and larger forces are computed to infer inertial properties. 
\textbf{Top (middle, right) \& bottom}: the goal is safely reached. 
}
\label{fig:exps:hardware:experiment}
\end{figure*}

We use a motion capture system and interface our Python implementation with ROS.  
Using \algName, we compute a desired wrench $\ac\,{=}\,[F_x,F_y,M]$ in the world frame, which we then convert to the local frame of the robot, yielding $\ac_{\textrm{body}}$. %
We then translate these forces and torque into corresponding PWM signals for the actuators:
$$
\ac_\textrm{PWM} =
\textrm{saturate}(\bm{A}\ac_{\textrm{body}})\in\R^8.
$$
The constant matrix $\bm{A}\,{\in}\,\R^{8{\times}3}$ accounts for the maximum applicable force of each thruster of $0.4$N. With the control bounds for planning set conservatively to $|F_{x,y}|\leq 0.4$N and $|M|\leq 0.03$Nm, we can ensure that actuator saturation will neither occur during exploration nor exploitation.

This approach of using a lower-level control mapping to turn world-frame forces and torques to gas-thruster commands allows us to use the same dynamics model used in the simulation experiments (Section \ref{sec:results:sim}) for the hardware experiments (retraining it to also account for the slight tilt of the table).
An alternative would be to learn a dynamics model in terms of gas-thruster inputs directly. While this modeling choice would allow factoring in dynamics uncertainty at the actuator level (e.g., actuator failure), this would also yield a more complex class of possible models and thus require a larger model to ensure the assumption \textbf{A\ref{assum:capacity_model}} on model capacity is still satisfied.

With these modifications, we apply \algName on multiple obstacle fields and analyze the representative experiment in Figure \ref{fig:exps:hardware:experiment}. 
Initially, uncertainty is too large to safely reach $\Xgoal$ and \ReachOCP is infeasible. Instead, the agent performs a short exploration trajectory with negligible control inputs to first infer the tilt of the table (\ExploreOCP is necessarily feasible for a planning horizon short enough, see Theorem \ref{thm:probabilistic_feasibility_safety}).  
Indeed, applying larger control inputs would yield larger reachable sets (due to inertial uncertainty) and violate constraints. 
Then, \seels still deems reaching to be unsafe and thus performs two additional exploration trajectories to infer the inertial properties of the system. Once \ReachOCP is feasible, the robot finally reaches the goal. All constraints are satisfied throughout the experiment.

We present further results in the video available at \scalebox{0.95}{\href{https://youtu.be/5ux1Jg4FgEs}{https://youtu.be/5ux1Jg4FgEs}} 
and summarize them below:
\begin{itemize}[leftmargin=3.5mm]
  \setlength\itemsep{0.5mm}
    \item \textit{Baseline}: we compare with a mean-equivalent approach which collides with obstacles despite the LQR controller. This shows that considering uncertainty during planning and learning the dynamics is necessary to reliably solve this task.
    \item \textit{Reliability}: our approach works repeatably in multiple obstacle fields, including a payload-grasping experiment. 
    We also present two experiments where the agent infers its dynamics, reaches a first goal, and is then capable of reaching another destination directly without needing further exploration. All constraints are satisfied throughout the experiments (Theorem \ref{thm:probabilistic_feasibility_safety} can easily be extended to also guarantee the success of such multi-setpoint problems).
    \item \textit{Failure modes} are also discussed.  
    We observed that failures to reach $\Xgoal$ are due to one of the following:
    (1) the model adapting incorrectly during the initial exploration phases due to noisy measurements and sim-to-real mismatch, resulting in large variations in the parameters $\k_i$, which is easy to detect and could be addressed by re-initializing parameters based on the variances $\sigma_i\smash{\bLambda_i^{-1}}$, \rev{using event-triggered learning \cite{Solowjow2020}, or} by increasing $\sigma_i$ to learn less rapidly, 
    (2) infeasible trajectory optimization, or  %
    (3) gas tanks that are too depleted to apply desired forces. 
    In these situations, we observed that no feasible trajectory to the goal is found and the algorithm defaults to exploring safely within $\X_0$. 
\end{itemize}

\section{Conclusion} 
\label{sec:conclusion}

To safely perform tasks under high initial uncertainty, 
we presented a framework to sequentially explore and learn the properties of a dynamical system while guaranteeing safety at all times jointly as a single trajectory-wise chance constraint. 
We demonstrated %
our approach through simulations 
and hardware experiments, where we showed that a free-flying robot emulating a spacecraft can reliably infer its dynamics and safely reach a goal. 

\textit{Future work}. 
Problems with time-varying dynamics (e.g., the system is attached to a new payload during operation and gas tanks are depleting) could be addressed using continual meta-learning approaches \cite{HarrisonSharmaEtAl2020}, which would require a new analysis to guarantee safety.
Future work could also %
derive algorithms to compute invariant sets satisfying \textbf{A\ref{assum:invariant_X0}} by leveraging the prior on the meta-learning model. 
Extensions of \cite{randSets} would strengthen the safety guarantees to the case of a finite number of samples for uncertainty propagation and 
work on free-final-time trajectory optimization would replace the search over horizons $N$ in our algorithm. 
Investigating regret bounds for such constrained problems would also help provide guarantees on the time required to perform a task given known geometric properties of the problem.   
Finally, our framework could be combined with existing learning-based controllers to improve performance.

\section*{Acknowledgments}
\rev{The authors thank Edward Schmerling and the anonymous reviewers for their feedback which significantly helped improve the presentation of this work}. This work was supported in part by NASA under the University Leadership Initiative (Grant \#80NSSC20M0163) and the Early Stage Innovations program, and by DARPA under the Assured Autonomy program. This article solely reflects the opinions and conclusions of its authors.

\bibliographystyle{IEEEtran}
\bibliography{ASL_papers,main}

\appendix

\subsection{Proof of Theorem \ref{thm:conf-sets}}

The proof of Theorem \ref{thm:conf-sets} follows from the proof of \cite[Theorem 2]{abbasi2011improved}, by making substitutions accordingly for our meta-learning model. To do so, we use the following lemma, which \rev{corresponds to } \cite[Theorem 1]{abbasi2011improved}. \rev{We restate this result below by changing notations to match our problem setting, where the index $i=1,\mydots,\xdim$ corresponds to each state component.} %

\begin{mylemma}\rev{\textnormal{\textbf{\cite[Theorem 1]{abbasi2011improved} }} (Self-Normalized Bound for Vector-Valued Martingales).}
\label{lemma:self-normalized}
Let $\{ \mathcal{F}_t \}_{t=0}^\infty$ be a filtration \rev{and $i\in\mathbb{N}$}. 
\rev{Let}  $\{\rev{\epsilon_{i,t}}\}_{t=1}^\infty$ \rev{be} a  real-valued stochastic process such that $\rev{\epsilon_{i,t}}$ is $\mathcal{F}_t$-measurable and conditionally $\sigma_i$-subgaussian\rev{, such that $\E[\exp(\lambda\epsilon_{i,t})\,|\,\mathcal{F}_{t-1}]\leq \exp(\lambda^2\sigma_i^2/2)$ for all $\lambda\in\R$}. 
Let $\{\feat_t\}_{t=1}^\infty$ be a $\R^d$-valued stochastic process such that $\feat_t$ is $\mathcal{F}_{t-1}$-measurable. 

Let $\bLambda_{i,0}$ be a $d\times d$ positive definite matrix, and define
$\bLambda_{i,t}$ as in \eqref{eq:bayesian_updates}. 
Further, for any $t \ge 0$, define $\bS_t = \sum_{s=1}^t \rev{\epsilon_{i,s}} \feat_s$. 
Then, for any $\delta_i > 0$, with probability at least $(1-\delta_i)$
\begin{align}
    \norm{\bS_t}_{\bLambda^{-1}_{i,t}}^2 &\le 
    2 
    \sigma_i^2 
    \log \left( 
        \frac{1}{\delta}
        \frac{\det (\bLambda_{i,t})^{1/2}}{\det (\bLambda_{i,0})^{1/2}}
         \right) 
         \ \ 
         \forall t\geq 0.
         \label{eq:self_normalized_vec_mart}
\end{align}
\end{mylemma}
\begin{proof}
\rev{
See \cite[Theorem 1]{abbasi2011improved}.
}
\end{proof}
\rev{In this work, t}he filtration $\{\mathcal{F}_t\}_{t=0}^{\infty}$ \rev{corresponds to the  natural filtration associated to the stochastic process $\x_t$ satisfying \eqref{eq:full_problem_dynamics}, which} is defined by considering the $\sigma$-algebra $\mathcal{F}_t=\sigma(\feat_1,\mydots,\feat_{t+1},\rev{\epsilon_{i,0}},\mydots,\rev{\epsilon_{i,t}})$, where $\feat_t=\feat(\x_t,\ac_t)$.  
\rev{Lemma \ref{lemma:self-normalized} corresponds to \cite[Theorem 1]{abbasi2011improved} with slightly modified notations, }%
substituting
$(X,\eta,\theta,\bar{V}_t,V)$ with $(\feat,\rev{(\epsilon_{i,1}, \mydots,\epsilon_{i,t})},\k_i,\bLambda_{i,t},\bLambda_{i,0})$ \rev{for any arbitrary $i=1,\mydots,n$}. 

We stress that \eqref{eq:self_normalized_vec_mart} holds jointly for all times $t\,{\geq}\, 0$ such that 
$\Prob(\eqref{eq:self_normalized_vec_mart})\,{\geq}\, (1\,{-}\,\delta_i)$. This result is key to ensure that the trajectory-wise chance constraint \eqref{eq:joint_chance_constraint} is satisfied and to guarantee the safety and recursive feasibility of our framework. 
Next, we restate and prove Theorem \ref{thm:conf-sets}. 

\setcounter{theorem}{0}
\begin{theorem}[Uniformly Calibrated Confidence Sets]
Consider the true system  \eqref{eq:full_problem_dynamics},
\begin{equation*}
    \x_{t+1} = \h(\x_t, \ac_t) + \g(\x_t,\ac_t,\param) + \ep_t,
\end{equation*}
where each $\rev{\epsilon_{i,t}}$ is $\sigma_i$-subgaussian and bounded ($\rev{\epsilon_{i,t}}\in\mathcal{E}_i$).   
Consider the meta-learning model  $\hat{\g}= (\hat{g}_1,{\scalebox{0.9}{$\mydots$}},\hat{g}_n)$ 
defined for each 
\scalebox{1}{$i\,{=}\,1,\mydots,n$} as $\hat{g}_i(\x, \ac,\k_i) = \k_i^\top\feat_i(\x,\ac)$, where $\feat_i: \R^n\,{\times}\,\R^m\,{\rightarrow}\,\R^{\phidim}$ and   
$\k_i\,{\in}\,\R^\phidim$. %
Starting from $(\kbar_{i,0},\bLambda_{i,0})$, with $\kbar_{i,0}\in\R^{\phidim}$, and $\bLambda_{i,0}$ is a $d\times d$ positive definite matrix, define the sequence $\{(\kbar_{i,s},\bLambda_{i,s})\}_{s=0}^t$,  
where $(\kbar_{i,t}, \bLambda_{i,t})$ is computed with online data from \eqref{eq:full_problem_dynamics} using \eqref{eq:bayesian_updates}:
\begin{equation*}
\begin{array}{l}
    \bLambda_{i,t} = \Phi_{i,t-1}^\top \Phi_{i,t-1} \,{+}\, \bLambda_{i,0}, 
    \\[1mm]
    \kbar_{i,t} = \bLambda_{i,t}^{-1} ( \Phi^\top_{i,t-1} \G_{i,t} \,{+}\, \bLambda_{i,0} \kbar_{i,0} ), 
\end{array}
    \ \  
    i\,{=}\,1,\mydots,\xdim,
\end{equation*}
where $\G_{i,t}^\top {=}\, [x_{i,1} {-} h_i(\x_{0},\ac_{0}), \mydots, x_{i,t} {-} h_i(\x_{t{-}1},\ac_{t{-}1})] \,{\in}\, \R^{t}$, and
$\Phi_{i,t{-}1}^\top {=} [\feat_i(\x_0, \ac_0), \mydots, \feat_i(\x_{t{-}1},\ac_{t{-}1})]{\in} \R^{\phidim \times t}$. 
Further, define $\delta_i=\delta/(2\xdim)$, and let
\begin{gather*}
    \beta_{i,t}^\delta 
    \,{=}\,  \sigma_i \Bigg(
    \sqrt{ 2 \log \left(\frac{1}{\delta_i}\frac{\det(\bLambda_{i,t})^{\nicefrac{1}{2}} }{\det(\bLambda_{i,0})^{\nicefrac{1}{2}}}
    \right) }
    {+}
    \sqrt{\frac{\bar{\lambda}(\bLambda_{i,0})}{\underline\lambda(\bLambda_{i,t})}  \chi^2_{\phidim}(1{-}\delta_i)} \Bigg),
    \\[0mm]
    \text{and}\qquad
    \confset_{i,t}^{\delta}(\kbar_{i,t},\bLambda_{i,t}) \,{=}\, \{ \k_i \, {\mid} \,
        \norm{\k_i {-} \kbar_{i,t}}_{\bLambda_{i,t}} {\leq}\,  \beta_{i,t}^\delta
     \}.
\end{gather*}
Then, under Assumptions \ref{assum:capacity_model} and \ref{assum:calib}, 
\begin{gather*}
\Prob \left(
    \kstari \in \confset_{i,t}^{\delta}(\kbar_{i,t},\bLambda_{i,t})
    \ \ 
    \forall t\,{\geq}\, 0
    \right) \geq (1-2\delta_i). 
\end{gather*}
\end{theorem}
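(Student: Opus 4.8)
The plan is to follow the argument of \cite[Theorem 2]{abbasi2011improved}, adapting the error decomposition and the prior-error estimate to our general prior $(\kbar_{i,0},\bLambda_{i,0})$ and to Assumption~\ref{assum:calib}. Fix a dimension $i$ and work conditionally on $\param$, so that $\kstari=\kstari(\param)$ is a fixed vector. First I would use Assumption~\ref{assum:capacity_model} to write $\G_{i,t}=\Phi_{i,t-1}\kstari+\bm{\eta}_{i,t}$, where the vector $\bm{\eta}_{i,t}$ collects the disturbances $\epsilon_s^i$ entering the observed increments in $\G_{i,t}$. Substituting this into the update \eqref{eq:bayesian_updates} and using $\Phi_{i,t-1}^\top\Phi_{i,t-1}=\bLambda_{i,t}-\bLambda_{i,0}$, the cross terms cancel and one obtains the decomposition
\[
\kbar_{i,t}-\kstari \;=\; \bLambda_{i,t}^{-1}\Phi_{i,t-1}^\top\bm{\eta}_{i,t}\;-\;\bLambda_{i,t}^{-1}\bLambda_{i,0}(\kstari-\kbar_{i,0}).
\]
Applying the $\bLambda_{i,t}$-norm and the triangle inequality splits $\|\kbar_{i,t}-\kstari\|_{\bLambda_{i,t}}$ into a noise term $\|\Phi_{i,t-1}^\top\bm{\eta}_{i,t}\|_{\bLambda_{i,t}^{-1}}$ and a prior-error term $\big((\kstari-\kbar_{i,0})^\top\bLambda_{i,0}\bLambda_{i,t}^{-1}\bLambda_{i,0}(\kstari-\kbar_{i,0})\big)^{1/2}$, using that $\|\bLambda_{i,t}^{-1}\bv\|_{\bLambda_{i,t}}=\|\bv\|_{\bLambda_{i,t}^{-1}}$ for any $\bv$.

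For the noise term I would invoke Lemma~\ref{lemma:self-normalized} with $\bS_t=\Phi_{i,t-1}^\top\bm{\eta}_{i,t}$ (after aligning the index conventions between \eqref{eq:bayesian_updates} and the lemma), which yields, uniformly in $t\ge0$ and with probability at least $1-\delta_i$, the bound $\|\Phi_{i,t-1}^\top\bm{\eta}_{i,t}\|_{\bLambda_{i,t}^{-1}}\le \sigma_i\sqrt{2\log\!\big(\det(\bLambda_{i,t})^{1/2}/(\delta_i\det(\bLambda_{i,0})^{1/2})\big)}$, matching the first summand of $\beta_{i,t}^\delta$. For the prior-error term, writing $\bm{w}=\bLambda_{i,0}^{1/2}(\kstari-\kbar_{i,0})$ I would bound
\[
(\kstari-\kbar_{i,0})^\top\bLambda_{i,0}\bLambda_{i,t}^{-1}\bLambda_{i,0}(\kstari-\kbar_{i,0})=\bm{w}^\top\bLambda_{i,0}^{1/2}\bLambda_{i,t}^{-1}\bLambda_{i,0}^{1/2}\bm{w}\le\bar\lambda(\bLambda_{i,0})\,\bar\lambda(\bLambda_{i,t}^{-1})\,\|\bm{w}\|^2=\frac{\bar\lambda(\bLambda_{i,0})}{\underline\lambda(\bLambda_{i,t})}\|\kstari-\kbar_{i,0}\|_{\bLambda_{i,0}}^2,
\]
and then apply Assumption~\ref{assum:calib} to replace $\|\kstari-\kbar_{i,0}\|_{\bLambda_{i,0}}^2$ by $\sigma_i^2\chi^2_{\phidim}(1-\delta_i)$, which is exactly the square of the second summand of $\beta_{i,t}^\delta$.

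Finally, combining the two bounds by the triangle inequality shows that on the intersection of the ``good'' martingale event (probability $\ge1-\delta_i$, from Lemma~\ref{lemma:self-normalized}) and the ``good'' calibration event (probability $\ge1-\delta_i$, from Assumption~\ref{assum:calib}) we have $\|\kbar_{i,t}-\kstari\|_{\bLambda_{i,t}}\le\beta_{i,t}^\delta$ for all $t\ge0$, i.e.\ $\kstari\in\confset_{i,t}^\delta$ for all $t\ge0$; Boole's inequality on a common probability space over $(\param,\ep)$ then gives this jointly with probability at least $1-2\delta_i$, which is the claim. I expect the main obstacle to be bookkeeping rather than depth: carefully matching the index shift and the filtration of Lemma~\ref{lemma:self-normalized} to the recursion \eqref{eq:bayesian_updates}, and separating the two sources of randomness (the episode parameter $\param$ governing $\kstari$ versus the process noise $\ep$ driving the martingale) so that conditioning on $\param$ is legitimate before taking the union bound. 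The prior-error eigenvalue estimate above is the only genuinely new computation relative to \cite{abbasi2011improved}, and it is elementary.
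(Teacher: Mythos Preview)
Your proposal is correct and follows essentially the same approach as the paper's proof: the same error decomposition from Assumption~\ref{assum:capacity_model}, the self-normalized martingale bound (Lemma~\ref{lemma:self-normalized}) for the noise term, the same eigenvalue estimate combined with Assumption~\ref{assum:calib} for the prior-error term, and Boole's inequality to combine the two events. The only cosmetic difference is that the paper reaches the $\bLambda_{i,t}$-norm bound via the Cauchy--Schwarz trick with an arbitrary direction $\ba$ (then specializing $\ba=\bLambda_{i,t}(\kbar_{i,t}-\kstari)$), whereas you apply the $\bLambda_{i,t}$-norm and triangle inequality directly; both routes are equivalent.
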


\begin{proof}
This proof is a straightforward extension of \cite[Theorem 2]{abbasi2011improved}, where we use Assumption \ref{assum:calib} to provide a probabilistic error bound for the model mismatch over the prior for $\kstari$, and Lemma \ref{lemma:self-normalized} to bound the estimation error due to $\ep_t$ \rev{with a union bound} to obtain $\beta_i$.

Define $\ep^i = (\rev{\epsilon_{i,1}}, \mydots, \rev{\epsilon_{i,t}})^\top$. 
For conciseness, we drop the indices $i$ and $t$ and denote
$(\kstar,\kbar,\bLambda,\kbar_0,\bLambda_0,\Phi,\ep) 
= 
(\kstari,\kbar_{i,t},\bLambda_{i,t},\kbar_{i,0},\bLambda_{i,0},\Phi_{i,t-1},\ep^i)$.  
Under Assumption \ref{assum:capacity_model}, we can write 
$\G_{i,t}=\Phi \kstar + \ep$. Then we rewrite the mean estimate $\kbar$ of $\kstar$ at time $t$ as
\begin{align*}
     \kbar &= ( \bLambda_0 + \Phi^\top \Phi )^{-1} (\bLambda_0 \kbar_0 +  \Phi^\top ( \Phi \kstar + \ep ) )\\
     &= ( \bLambda_0 + \Phi^\top \Phi )^{-1} \Phi^\top \ep + ( \bLambda_0 + \Phi^\top \Phi )^{-1} ( \bLambda_0 + \Phi^\top \Phi ) \kstar 
     \\
     &
     	\quad- ( \bLambda_0 + \Phi^\top \Phi )^{-1}  \bLambda_0 (\kstar - \kbar_0) \\
     &= \Linv \Phi^\top \ep + \kstar - \Linv \bLambda_0 ( \kstar - \kbar_0 ),
\end{align*}
from which we obtain, for any $\ba\in\R^{\phidim}$, that 
$\ba^\top ( \kbar - \kstar ) 
= 
\ba^\top (\Linv \Phi^\top \ep)  -\ba^\top (\Linv \bLambda_0 (\kstar - \kbar_0)).
    $  
With this result, by the Cauchy-Schwarz inequality,
\begin{align}
| \ba^\top ( \kbar - \kstar ) | 
&\leq 
\norm{\ba}_{\Linv_t} \left( \norm{\Phi^\top \epsilon}_{\Linv} + \norm{\bLambda_0 (\kstar - \kbar_0)}_{\Linv} \right) 
\nonumber
\\
&\hspace{-1.6cm}\leq \norm{\ba}_{\Linv} \bigg( \norm{\Phi^\top \ep}_{\Linv} {+} \sqrt{\frac{\maxeig{\bLambda_0}}{\mineig{\bLambda}}}\norm{\kstar {-} \kbar_0}_{\bLambda_0} \bigg),
\label{eq:cauchy}
\end{align}
where the second inequality is obtained as
\begin{align*}
&
\scalebox{0.9}{$
\norm{\bLambda_0 (\kstar - \kbar_0)}^2_{\Linv} 
\le 
\frac{\maxeig{\Linv}}{\mineig{\Linv_0}} \norm{\bLambda_0 (\kstar - \kbar_0)}^2_{\Linv_0}
$}
\\
&\hspace{10mm}
\scalebox{0.9}{$
=
\frac{\maxeig{\bLambda_0}}{\mineig{\bLambda}} \norm{\bLambda_0 (\kstar - \kbar_0)}^2_{\Linv_0} 
=
\frac{\maxeig{\bLambda_0}}{\mineig{\bLambda}} \norm{\kstar - \kbar_0}^2_{\bLambda_0}
$}
.
\end{align*}

By Lemma \ref{lemma:self-normalized}, for any $\delta_i$, 
with probability at least $(1-\delta_i)$, 
\begin{align}
\norm{\Phi^\top \ep}_{\Linv}^2 &\leq
2\sigma_i^2
\log \left( 
    \frac{1}{\delta_i}
    \frac{\det (\bLambda)^{1/2}}{\det (\bLambda_0)^{1/2}}
     \right)
     \quad \forall t\geq 0
     \rev{.}
     \label{eq:foot:a}
\end{align} 
By Assumption \ref{assum:calib}, for $\delta_i =\delta/(2\xdim)$, with probability at least $(1-\delta_i)$, 
\vspace{-3mm}
\begin{align}
\|\kstar - \kbar_{0}\|_{\bLambda_{0}}^2 \leq \sigma_i^2 \chi^2_{\phidim}(1-\delta_i)
     \label{eq:foot:b}
    .
\end{align}

From \eqref{eq:cauchy}, by \rev{a union bound},\footnote{
$\Prob(\eqref{eq:foot:a} \cap \eqref{eq:foot:b})
=1-\Prob(\eqref{eq:foot:a}^C \cup \eqref{eq:foot:b}^C)$, where $A^C$ denotes the negation of $A$. 
Then, by Boole's inequality, 
$1-\Prob(\eqref{eq:foot:a}^C \cup \eqref{eq:foot:b}^C)
\geq
1-\Prob(\eqref{eq:foot:a}^C) - \Prob(\eqref{eq:foot:b}^C)
=
-1 +\Prob(\eqref{eq:foot:a}) + \Prob(\eqref{eq:foot:b})
$. 
Finally, using the lower bounds on the probabilities that \eqref{eq:foot:a} and \eqref{eq:foot:b} occur, we obtain 
$\Prob(\eqref{eq:foot:a} \cap \eqref{eq:foot:b})
\geq -1 +(1-\delta_i) + (1-\delta_i) = 1-2\delta_i$.
} we have that with probability at least $(1-2\delta_i)$, for all $t\geq 0$ and any $\ba\in\R^{\phidim}$, 
\begin{align*}
| \ba^\top ( \kbar - \kstar ) | 
&\le 
\norm{\ba}_{\Linv} 
\sigma_i  
\Bigg( 
   \sqrt{ 2 \log \left(  \frac{1}{\delta_i}  \frac{\det (\bLambda)^{1/2}}{\det (\bLambda_0)^{1/2}} \right) }
    \\
    &\qquad\qquad\qquad+
    \sqrt{\frac{\maxeig{\bLambda_0}}{\mineig{\bLambda}}\chi^2_{\phidim}(1-\delta_i)}
\Bigg).
\end{align*}
Define $\beta_{i,t}^\delta$ as in \eqref{eq:beta_bound}. Then letting $\ba= \bLambda (\kbar - \kstar)$ in the expression above, we obtain
\begin{align*}
\norm{\kbar - \kstar}^2_{\bLambda} 
&\le 
\norm{\bLambda (\kbar - \kstar)}_{\Linv} 
\beta_{i,t}^\delta.
\end{align*}
Since $\norm{\bLambda (\kbar - \kstar)}_{\Linv} = \norm{\kbar_t - \kstar}_{\bLambda}$, we divide both sides by $\norm{\kbar - \kstar}_{\bLambda}$ and obtain
\begin{align*}
    \norm{\kbar - \kstar}_{\bLambda} 
    &\le 
    \beta_{i,t}^\delta
\quad
\forall t\geq 0,
\end{align*}
which holds with probability at least $(1-2\delta_i)$. Using the definition of the confidence sets $\confset_{i,t}^{\delta}$ in  \eqref{eq:conf_set_kappa} and the result above, this concludes our proof.
\end{proof}

\rev{
\subsection{Relaxing Assumption \ref{assum:capacity_model} and extending Theorem \ref{thm:conf-sets}}\label{sec:appendix:proof_missmatch}}

\rev{Our probabilistic guarantees rely on the assumption that the meta-learning model is capable of fitting the true dynamics. 
Specifically, in Assumption \ref{assum:capacity_model}, we assume that for 
all $\param$ and all $i = 1,\mydots,n$,  there exists $\kstari(\param) \in \R^{\phidim}$  such that 
$$ 
|\kstari(\param)^\top \feat_i(\x, \ac)  -
g_i(\x,\ac,\param)|\leq \Delta_i \ \  
\forall\x \in \X, 
\forall\ac \in \U,
$$
where $\Delta_i=0$ for all $i=1,\dots,n$. 
This assumption of the true model belonging to the model class is standard in the literature (see Sections \ref{sec:probabilistic_adapt_guarantees} and \ref{sec:regularizers}) and reasonable in practice. 
Indeed, our simulated and hardware experiments verified the safety guarantees of the \seels under this assumption, which shows that assuming $\Delta_i=0$ is not limiting when using expressive meta-learning models. 
Nevertheless, our results can be generalized to the case where $\Delta_i>0$. 
Indeed, the proof of Theorem \ref{thm:conf-sets} follows by adapting the proof of \cite[Theorem 2]{abbasi2011improved} to use the prior of our Bayesian model in Assumption \ref{assum:calib} and our particular model structure. 
As such, below, we extend \cite[Theorem 2]{abbasi2011improved} to also account for bounded model mismatch. Due to space constraints, we leave the simple extensions of Theorem \ref{thm:conf-sets} and \ref{thm:probabilistic_feasibility_safety} (and the evaluation of the bounds $\Delta_i$, e.g.,  depending on the boundedness of the features $\feat_i(\cdot)$ and the chosen meta-learning architecture) for future work.
}

\rev{
\begin{mycorollary}[Confidence sets for linear models corrupted by subgaussian and bounded noise]\label{cor:conf_sets_robust}
On a probability space $(\Omega,\mathcal{G},\Prob)$, 
let $(\mathcal{F}_t)_{t\in\mathbb{N}}$ be a filtration. 
Let 
$(\epsilon_t)_{t\in\mathbb{N}}$ and $(\gamma_t)_{t\in\mathbb{N}}$ be real-valued stochastic processes adapted to $\mathcal{F}_t$ such that
\begin{itemize}[leftmargin=3.5mm]
  \setlength\itemsep{0.5mm}
\item 
	$\E[\epsilon_t|\mathcal{F}_{t-1}]\leq \exp\left(\lambda^2R^2/2\right)$ for all $\lambda\in\R$ for all $t\in\mathbb{N}$.
\item 
	$|\gamma_t|\leq \Delta$ almost surely for some constant $\Delta<\infty$. 
\end{itemize}
Let $(x_t)_{t\in\mathbb{N}}$ be a $\R^d$-valued stochastic process adapted to $\mathcal{F}_{t-1}$.  
For fixed parameters $\theta\in\R^d$, consider the linear model
$$
y_t = 
\theta^\top x_t 
+ \gamma_t + \epsilon_t
$$
and assume that $\|\theta\|_2\leq S$.  
Given $t$ observations tuples $(x_s,y_s)_{s=1}^t$, define
$X_t=[x_1,\mydots,x_t]^\top\in\R^{t\times d}$, 
$Y_t=[y_1,\mydots,y_t]^\top\in\R^t$, 
 and 
\begin{align*}
    \bar\theta_t = (X_t^\top X_t+\lambda I)^{-1}X_t^\top Y_t,
    \quad
    \bar{V}_t = X_t^\top X_t + \lambda I.
\end{align*}
	Given $\delta\in(0,1)$, 
define the ellipsoidal confidence set
\begin{align*}%
\bar{\mathcal{C}}_t^\delta
=
\left\{\theta \in \R^d :
\|\theta-\bar\theta_t\|_{\bar{V}_t} 
    \leq
    R\beta_t
    +
    \lambda^{\frac{1}{2}}S
    +
    D_t
\right\},
\end{align*} 
where 
	$\beta_t^2 
	=	
	2 \log \left(\frac{1}{\delta}
	\left(\frac{\det(\bar{V}_t)}{\det(\lambda I)}\right)^{\frac{1}{2}}
    \right)
    $ and 
$D_t = \sigma_{\textrm{max}}(L_t^\top)  
 t^{1/2} \Delta$ where $L_t\in\R^{d\times d}$ is such that $\mathcal{X}_t^\top \bar{V}_t \mathcal{X}_t=L_t L_t^\top$ with $\mathcal{X}_t=(X_t^\top X_t+\lambda I)^{-1}X_t^\top$. 
Then,
\begin{align*}%
\Prob \left(
    \theta\in\bar{\mathcal{C}}_t^\delta
    \ 
    \forall t\,{\geq}\, 0
    \right) \geq 1-\delta.
\end{align*}
\end{mycorollary}
\begin{proof} 
Define 
$\bar{y}_t = \theta^\top x_t + \epsilon_t$ so that  $y_t = \bar{y}_t + \gamma_t$. 
Denoting $\bar{Y}_t=[\bar{y}_1,\mydots,\bar{y}_t]$ 
 and 
 $\Gamma_t=[\gamma_1,\mydots,\gamma_t]$, we rewrite $\bar\theta_t$ as 
\begin{align*}
    \bar\theta_t &= 
    (X_t^\top X_t+\lambda I)^{-1}X_t^\top Y_t
    \\
    &=
    (X_t^\top X_t+\lambda I)^{-1}X_t^\top\bar{Y}_t
    +
    (X_t^\top X_t+\lambda I)^{-1}X_t^\top\Gamma_t
    \\
    &=
    \mathcal{X}_t\bar{Y}_t
    +
    \mathcal{X}_t\Gamma_t
    \,\triangleq \, \bar{\theta}_{\epsilon} + \Delta\bar{\theta}_\gamma,
\end{align*}
where 
$\mathcal{X}_t=(X_t^\top X_t+\lambda I)^{-1}X_t^\top$. 
We then obtain that 
\begin{align}\label{eq:gaussian_plus_rob_errors}
\hspace{-2mm}
\|\theta\,{-}\,\bar\theta_t\|_{\bar{V}_t} 
\,{=}\,
\|\theta{-}(\bar{\theta}_{\epsilon} {+} \Delta\bar{\theta}_\gamma)\|_{\bar{V}_t}
\,{\leq}\,
\|\theta\,{-}\,\bar{\theta}_{\epsilon}\|_{\bar{V}_t}
{+}
\|\Delta\bar{\theta}_\gamma\|_{\bar{V}_t}.
\end{align}
The second term 
$\|\Delta\bar{\theta}_\gamma\|_{\bar{V}_t}$ is bounded with probability one since $|\gamma_t|\leq\Delta$. 
Indeed, denoting $L_t\in\R^{d\times d}$ for the Cholesky decomposition $\mathcal{X}_t^\top \bar{V}_t \mathcal{X}_t=L_t L_t^\top$, we have that
\begin{align*}
\|\Delta\bar{\theta}_\gamma\|_{\bar{V}_t}^2
&=
\| \mathcal{X}_t \Gamma_t \|_{\bar{V}_t}^2
=
\Gamma_t^\top \mathcal{X}_t^\top \bar{V}_t \mathcal{X}_t \Gamma_t
=
\Gamma_t^\top L_tL_t^\top \Gamma_t
\\
&=
(L_t^\top\Gamma_t)^\top 
 (L_t^\top
 \Gamma_t)
 =
 \| L_t^\top  \Gamma_t \|_2^2
	\\
	&\leq 
 (\| L_t^\top \|_2 \cdot  
 \| \Gamma_t \|_2)^2
	\\
	&\leq 
 (\sigma_{\textrm{max}}(L_t^\top)  
 \sqrt{t}\| \Gamma_t \|_\infty)^2
	\\
	&\leq 
 (\sigma_{\textrm{max}}(L_t^\top)  
 t^{1/2} \Delta)^2
 = 
 D_t^2.
\end{align*}
Combining this bound that holds almost surely with the result from
Theorem \cite[Theorem 2]{abbasi2011improved} to bound the first term $\|\theta-\bar{\theta}_{\epsilon}\|_{\bar{V}_t}$ in \eqref{eq:gaussian_plus_rob_errors} at all times with probability at least $1-\delta$, this concludes our proof.
\end{proof}
}

\subsection{Robust Obstacle Avoidance Constraints Implementation}\label{sec:obs_avoidance}
We describe our implementation of obstacle avoidance constraints in the special case where all obstacles are spheres in $\R^3$ and the system is a manipulator. Other types of constraints (e.g., linear constraints), obstacles (e.g., polyhedral and non-convex obstacles), as well as point-mass systems, are particular cases of the following derivations, as we discuss next.

Consider a function $\bp:\R^n{\rightarrow}\R^3$ mapping the state of the system (its configuration) to a point that should not intersect with obstacles (e.g., for a manipulator, $\bp$ is the end-effector position as a function of the joint angles. For a point-mass system, $\bp$ maps to the positional variables only.).

Before treating the robust case under uncertainty, consider this constraint for a single state $\x\in\R^n$. 
Given a spherical obstacle 
$\smash{\Obs=\{\bar\bp\in\R^3 \ | \  \|\bar\bp-\bp_{\textrm{o}}\|_2< r_{\textrm{o}}, \,\bp_{\textrm{o}}\in\R^3,r_{\textrm{o}}> 0\}}$, we define the obstacle avoidance constraint as $\bp(\x)\notin\Obs$, which is equivalent to
$\|\bp(\x)-\bp_{\textrm{o}}\|_2-r_{\textrm{o}}\geq 0$. 
Defining 
$d(\x)=\|\bp(\x)-\bp_{\textrm{o}}\|_2-r_{\textrm{o}}$, we observe that $d(\x)\geq 0$ is a non-convex constraint. To tackle this type of constraint, sequential convex programming (SCP) consists of solving a sequence of convex programs where any non-convex constraint is linearized. Specifically, the linearization of this constraint at $\x^j\,{\in}\,\R^n$ is
$$
d(\x^j) + \smash{\left(\frac{\bp(\x^j)-\bp_{\textrm{o}}}{\|\bp(\x^j)-\bp_{\textrm{o}}\|_2}\right)^\top}
\left(\frac{\partial\bp}{\partial\x}(\x^j)\right)(\x-\x^j)\geq 0,
$$
where the term multiplying $(\x-\x^j)$ corresponds to $\smash{\frac{\partial d}{\partial\x}(\x^j)}$, 
and 
$\smash{\frac{\partial\bp}{\partial\x}(\x^j)\,{\in}\,\R^{3{\times}6}}$ denotes the Jacobian of $\bp$ evaluated at $\smash{\x^j}$. 
This constraint is linear in $\x$ and can be handled by off-the-shelf convex optimization algorithms. 
Note that the case where $\Obs$ is an arbitrary non-convex set can be handled similarly by replacing $d(\x)$ with a signed distance function, see \cite{LewBonalli2020}.

Next, we treat the robust constraint $\bp(\x)\notin\Obs\ \forall \x\in\X$, with $\X\subset\R^6$ a reachable set. 
A conservative reformulation consists of defining an outer-bounding rectangular set $\D=\smash{\{\x\in\R^6\ | \ |x_i-\mu_i|\leq \delta_i, \bmu\in\R^6,\delta_i>0\}}$ such that $\X\subseteq\D$. 
In this work, as we use a sampling-based algorithm to approximate each reachable set, $\X$ is finite, i.e., $\smash{\X\,{=}\,\{\x^j\}_{j{=}1}^M}$. Thus, we simply choose the nominal predicted state $\bmu_k$ (see \ReachOCP) for the center $\bmu$ of each rectangular set $\D$, and we choose $\delta_i=\max_j|x_i^j-\mu_i|$ for its half-width. 

Then, we define an  outer-bounding ellipsoidal set $\cE = $ $\smash{\{\x\in\R^n\ | \ (\x-\bmu)^\top\bQ^{-1}(\x-\bmu)\leq 1\}}$, where the center $\bmu$ is the same as the center of the rectangular set $\D$ and $\bQ=n\cdot\diag(\delta_i^2, i\,{=}\,1,\mydots,n)$ is a diagonal positive definite matrix.\footnote{Observe that $\bp$ typically  only depends on positional and angular variables, so the entries of the last rows of $\smash{\frac{\partial d}{\partial\x}(\x^j)}$ are zero. Thus, one can reduce conservatism by formulating a constraint with $\bQ=n_{\textrm{pos}}\cdot\smash{\diag(\delta_i^2, i\,{=}\,1,\mydots,n_{\textrm{pos}})}$ instead, where $n_{\textrm{pos}}\,{<}\,n$ (e.g., for a three-joint manipulator, $n_{\textrm{pos}}\,{=}\,3\,{<}\,6\,{=}\,n$).} This construction guarantees that $\X\subseteq\D\subseteq\cE$, so that $\cE$ outer-bounds $\X$. Finally, we use this set to reformulate the robust contraint $d(\x)\geq 0 \ \forall \x\in\X$ as 
$\smash{d(\bmu) - \sqrt{\bJ^\top\bQ\bJ}\geq 0}$, 
where 
$\bJ=\frac{\partial d}{\partial\x}(\bmu)$, 
and $\bmu$ is the nominal state of the trajectory, see \ReachOCP. 
For a justification of this reformulation, refer to \cite{koller2018,randSets,LewBonalli2020}. 
Note that this constraint is non-convex in $\bmu$, and also depends on the control trajectory $\ac$ due to the dependence on $\bQ$ (which depends on the sampled trajectory when performing reachability analysis). 
We iteratively linearize this last constraint within each SCP iteration to obtain a quadratic program which is efficiently solved using OSQP \cite{StellatoBanjacEtAl2017}. %

\begin{IEEEbiography}[{\includegraphics[width=1in,height=1.25in,clip,keepaspectratio]{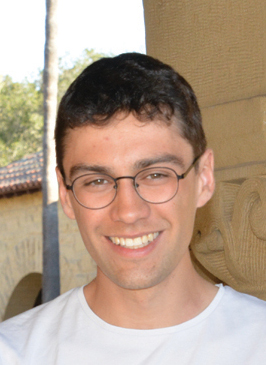}}]{Thomas Lew} is a Ph.D. candidate in Aeronautics and Astronautics at Stanford University. He received his BSc. degree in Microengineering from \'Ecole Polytechnique F\'ed\'erale de Lausanne in 2017 and his MSc. degree in Robotics from ETH Zurich in 2019. His research focuses on the intersection between optimization, control theory, and machine learning techniques for robotics and aerospace applications.
\end{IEEEbiography}

\begin{IEEEbiography}[{\includegraphics[width=1in,height=1.25in,clip,keepaspectratio]{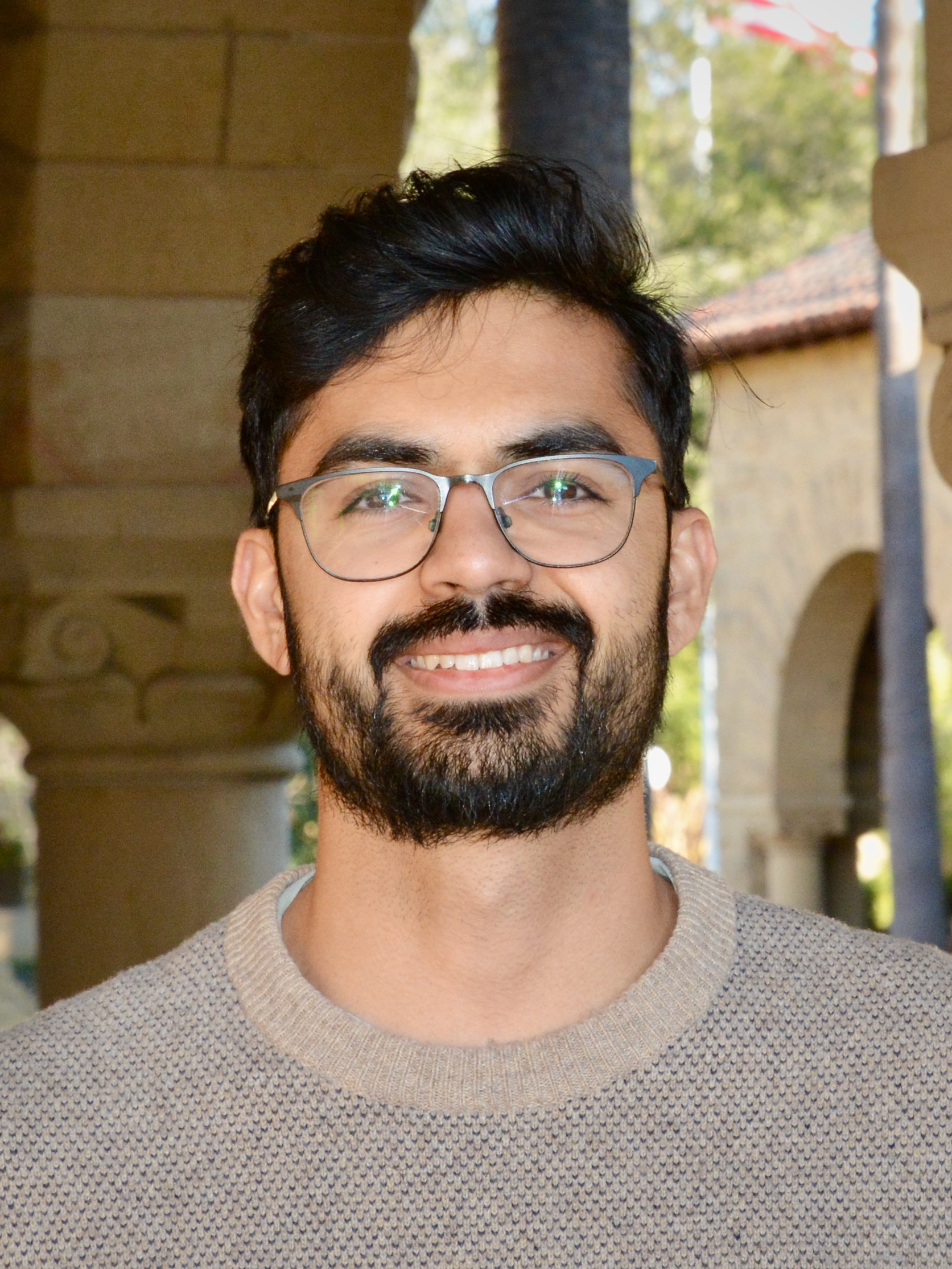}}]{Apoorva Sharma}
is a PhD candidate in the Autonomous Systems Lab at Stanford University. He received an M.S. degree in Aeronautics and Astronautics from Stanford University in 2018, and a B.S. degree in Engineering from Harvey Mudd College in 2016. His research interests center around the application of machine learning models in control and sequential decision making applications, with a focus on Bayesian methods and quantifying uncertainty to enable safe application of machine learning systems in robot autonomy and decision making. 
\end{IEEEbiography}

\begin{IEEEbiography}[{\includegraphics[width=1in,height=1.25in,clip,keepaspectratio]{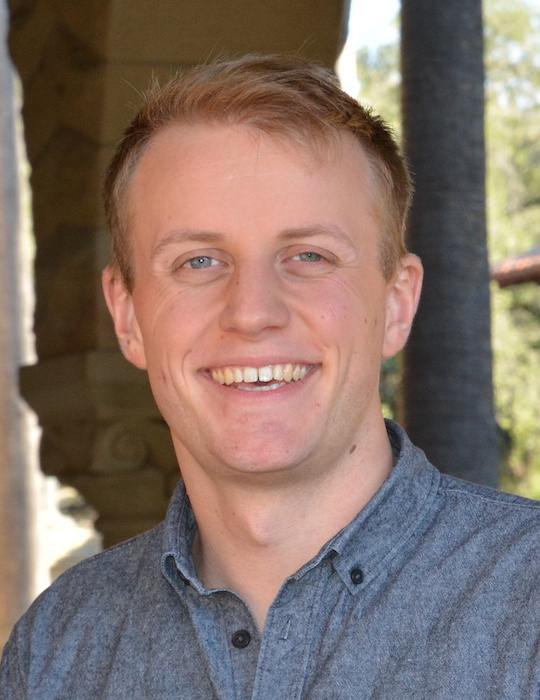}}]{James Harrison}
is a Ph.D.~candidate in the Autonomous Systems Lab at Stanford University. He received an M.S.~degree from Stanford University in 2018 and a B.Eng.~degree from McGill University in 2015, both in mechanical engineering. His research interests include few-shot, adaptive, and open-world learning, Bayesian deep learning, and applications in safe robot autonomy, decision-making, and control. 
\end{IEEEbiography}

\begin{IEEEbiography}[{\includegraphics[width=1in,height=1.25in,clip,keepaspectratio]{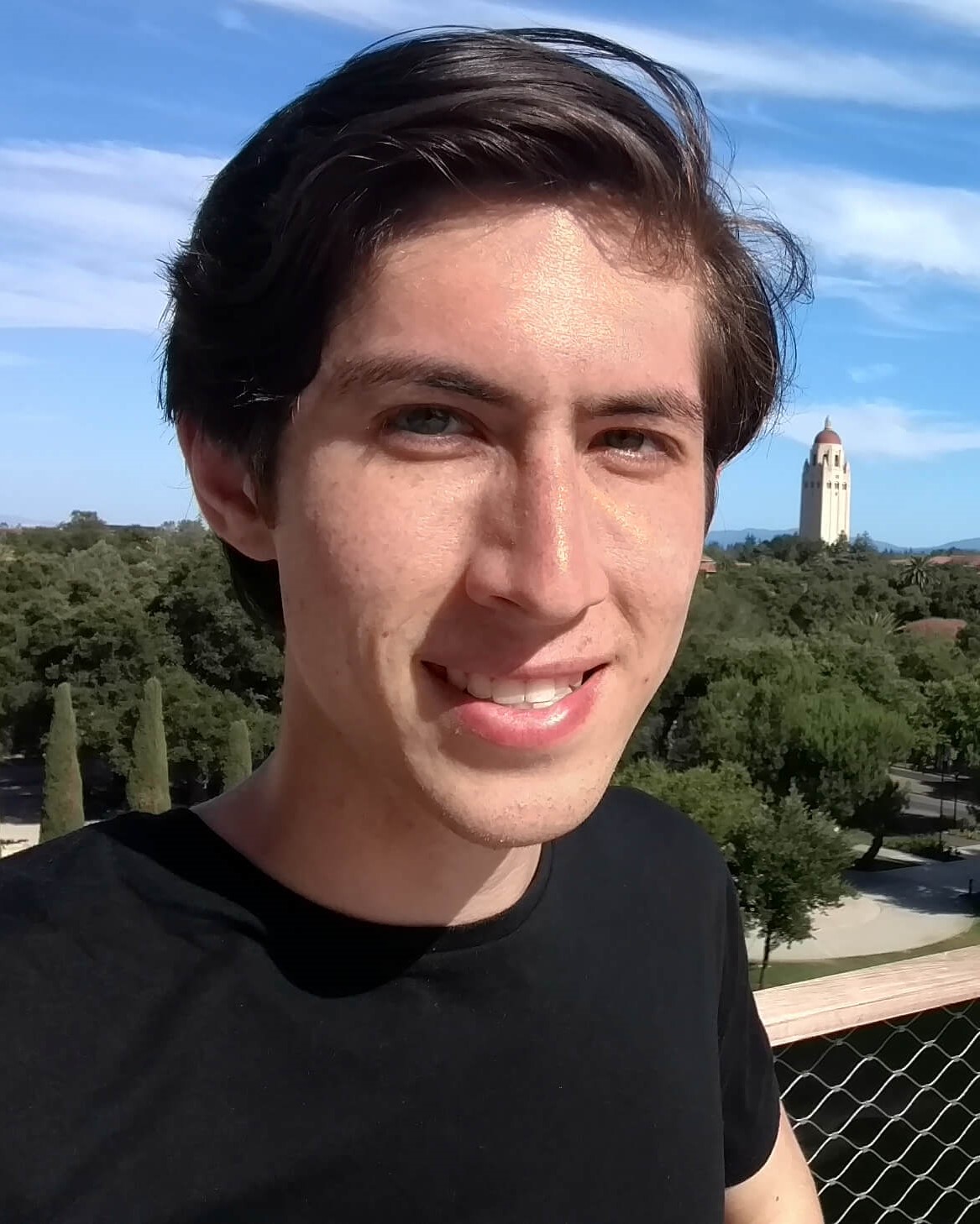}}]{Andrew Bylard} is a Ph.D. candidate in Aeronautics and Astronautics at Stanford University. He received a B.Eng. with a dual concentration in Mechanical Engineering and Electrical Engineering from Walla Walla University in 2014 and a M.Sc. in Aeronautics and Astronautics from Stanford University in 2016. Andrew's research interests include real-time trajectory planning and optimization and unconventional space robotics, including using gecko-inspired adhesives for microgravity manipulation and repurposing rollable extendable booms for long-reach mobile manipulators in reduced gravity. He has been a lead developer at the Stanford Space Robotics Facility, where he designed robot test beds used to perform spacecraft contact dynamics experiments and develop autonomous space robotics capabilities under simulated frictionless, microgravity conditions.
\end{IEEEbiography}

\FloatBarrier
\vfill
\begin{IEEEbiography}[{\includegraphics[width=1in,height=1.25in,clip,keepaspectratio]{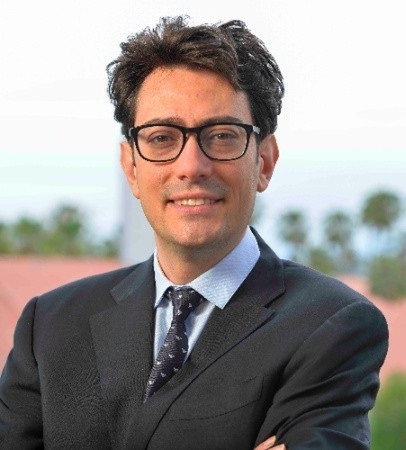}}]{Marco Pavone} 
is   an   Associate Professor of Aeronautics and Astronautics at Stanford University,  where he is the Director of the Autonomous Systems Laboratory. Before  joining  Stanford, he  was  a  Research  Technologist  within the  Robotics  Section  at  the  NASA  Jet Propulsion  Laboratory.   He  received  a Ph.D. degree in Aeronautics and Astronautics from the Massachusetts Institute of  Technology  in  2010.   His  main  research  interests  are  in the  development  of  methodologies  for  the  analysis,  design, and  control  of  autonomous  systems,  with  an  emphasis  on self-driving cars, autonomous aerospace vehicles, and future mobility systems.  He is a recipient of a number of awards, including  a  Presidential  Early  Career  Award  for  Scientists and Engineers, an ONR YIP Award, an NSF CAREER Award, and a NASA Early Career Faculty Award.  He was identified by the American Society for Engineering Education (ASEE) as one of America’s 20 most highly promising investigators under the age of 40.  He is currently serving as an Associate Editor for the IEEE Control Systems Magazine.
\end{IEEEbiography}

\FloatBarrier
\vfill
\end{document}